\def\PP{\mathbb{P}}
\def\NN{\mathbb{N}}
\def\prog{\mathrm{prog}}
\def\cA{{\mathcal{A}}}
\def\cC{{\mathcal{C}}}
\def\cF{{\mathcal{F}}}
\def\cG{{\mathcal{G}}}
\def\cH{{\mathcal{H}}}
\def\cO{{\mathcal{O}}}
\def\cU{{\mathcal{U}}}
\def\cW{{\mathcal{W}}}
\def\cX{{\mathcal{X}}}
\def\cY{{\mathcal{Y}}}
\def\cZ{{\mathcal{Z}}}
\newcommand{\EE}{\mathbb{E}}
\newcommand{\RR}{\mathbb{R}}
\newtheorem{theorem}{Theorem}
\newtheorem{assumption}{Assumption}
\newtheorem{remark}{Remark}
\newtheorem{lemma}{Lemma}
\newtheorem{example}{Example}
\newtheorem{proposition}{Proposition} 
\newtheorem{definition}{Definition}
\newcommand{\ie}{{\emph i.e.}\xspace}
\newcommand{\eg}{{\emph e.g.}\xspace}
\newcommand{\sspan}{\mathrm{span}}
\newcommand{\one}{\mathds{1}}
\title{Unbiased Compression Saves Communication in Distributed Optimization: When and How Much?}
\author{
Yutong He\footnote{Equal Contribution.}\,\,\footnote{Peking  University. \texttt{yutonghe@pku.edu.cn}.}
\and Xinmeng Huang\footnotemark[1]\,\,\footnote{University of Pennsylvania. \texttt{xinmengh@sas.upenn.edu}.}
    \and Kun Yuan\footnote{Peking  University. \texttt{kunyuan@pku.edu.cn}.}\,\,\footnote{Corresponding Author. Kun Yuan is also affiliated with National Engineering Labratory for Big Data Analytics and Applications, and AI for Science Institute, Beijing, China.}
}
\date{}
\begin{document}
\maketitle
\begin{abstract}
Communication compression is a common technique in distributed optimization that can alleviate communication overhead  by transmitting compressed gradients and model parameters. However, compression can introduce information distortion, which slows down convergence and incurs more communication rounds to achieve desired solutions. Given the trade-off between lower per-round communication costs and additional rounds of communication, it is unclear whether communication compression reduces the total communication cost. 

This paper explores the conditions under which unbiased compression, a widely used form of compression, can reduce the total communication cost, as well as the extent to which it can do so. To this end, we present the first theoretical formulation for characterizing the total communication cost in distributed optimization with unbiased compressors. We demonstrate that unbiased compression alone does not necessarily save the total communication cost, but this outcome can be achieved if the compressors used by all workers are further assumed independent. We establish lower bounds on the communication rounds required by algorithms using independent unbiased compressors to minimize smooth convex functions and show that these lower bounds are tight by refining the analysis for ADIANA. Our results reveal that using independent unbiased compression can reduce the total communication cost by a factor of up to $\Theta(\sqrt{\min\{n, \kappa\}})$ when all local smoothness constants are constrained by a common upper bound, where $n$ is the number of workers and $\kappa$ is the condition number of the functions being minimized. These theoretical findings are supported by experimental results.
\end{abstract}

\section{Introduction}
% \vspace{-1.5mm}
Distributed optimization is a widely used technique in large-scale machine learning, where data is distributed across multiple workers and training is carried out through worker communication. However, dealing with a vast number of data samples and model parameters across workers poses a significant challenge in terms of communication overhead, which ultimately limits the scalability of distributed machine learning systems. To tackle this issue, communication compression strategies \cite{Alistarh2017QSGDCS, Bernstein2018signSGDCO, Seide20141bitSG, Stich2018SparsifiedSW, Richtrik2021EF21AN} have emerged, aiming to reduce overhead by enabling efficient yet imprecise message transmission. Instead of transmitting full-size gradients or models, these strategies exchange compressed gradients or model vectors of much smaller sizes in communication.

% Distributed  optimization is a prevalent technique in large-scale machine learning, where data  is distributed across multiple workers, and training is carried out through worker communication.
% While gaining superior performance, handling a vast number of data samples and model parameters leads to huge communication overhead, which limits the scalability of distributed machine learning. To address this challenge, communication compression strategies \cite{Alistarh2017QSGDCS, Bernstein2018signSGDCO, Seide20141bitSG,Stich2018SparsifiedSW,Richtrik2021EF21AN} have emerged to alleviate communication overhead through efficient yet imprecise message transmission. Instead of sending  full gradients or models in communication, these strategies communicate compressed gradient or model vectors of much smaller sizes.

There are two common approaches to compression: quantization and sparsification. Quantization \cite{Alistarh2017QSGDCS, Horvath2019NaturalCF, micikeviciusmixed, Seide20141bitSG} maps input vectors from a large, potentially infinite, set to a smaller set of discrete values. In contrast, sparsification \cite{Wangni2018GradientSF, Stich2018SparsifiedSW, Safaryan2020UncertaintyPF} drops a certain amount of entries to obtain a sparse vector for communication. In literature \cite{Alistarh2017QSGDCS, khirirat2018distributed, Huang2022LowerBA}, these compression techniques are often modeled as a random operator $C$, which satisfies the properties of unbiasedness $\mathbb{E}[C(x)] = x$ and $\omega$-bounded variance $\mathbb{E}\|C(x) - x\|^2 \leq \omega \|x\|^2$. Here, $x$ represents the input vector to be compressed, and $\omega$ is a fixed parameter that characterizes the degree of information distortion. Besides, part of the compressors can also be modeled as biased yet contractive operators  \cite{he2023lower,qian2021error,Richtrik2021EF21AN}.

While communication compression efficiently reduces the volume of vectors  sent by workers, it suffers  substantial information distortion. As a result, algorithms utilizing communication compression require  additional rounds of communication to converge satisfactorily compared to algorithms without compression. This adverse effect of communication compression has been extensively observed both empirically \cite{Wangni2018GradientSF, Karimireddy2019ErrorFF,Bernstein2018signSGDCO} and theoretically \cite{Huang2022LowerBA, Richtrik2021EF21AN}. Since the extra rounds of communication needed to compensate for the information loss may outweigh the  saving in the per-round communication cost from compression, this  naturally motivates the following fundamental question: 

% \vspace{0.5mm}
\begin{center}
% \vspace{-2mm}
    {\textit{Q1. Can unbiased compression alone reduce the total communication cost?}} 
    % \vspace{-2mm}
\end{center}

% \vspace{0.5mm}
By ``unbiased compression alone'', we refer to the compression that solely satisfies the assumptions of unbiasedness and $\omega$-bounded variance without any additional advanced properties.
To address this open question, we formulate the total communication cost as the product of the per-round communication cost and the number of rounds needed to reach an $\epsilon$-accurate solution to distributed optimization problems. Using this formulation, we demonstrate the decrease in the per-round communication cost from unbiased compression is completely offset by additional  rounds of communication. Therefore, we answer Q1 by showing unbiased compression alone {\em cannot} ensure a lower total communication cost, even with an optimal algorithmic design, see Sec.~\ref{sec:tcc} for more details. This negative conclusion drives us to explore the next fundamental open question: 

% \vspace{0.5mm}
\begin{center}
% \vspace{-2mm}
    {\textit{Q2. Under what additional conditions and how much can unbiased compression\\ provably save the total communication cost?}} 
    % \vspace{-2mm}
\end{center}

% \vspace{0.5mm}
Fortunately, some pioneering works \cite{Mishchenko2019DistributedLW, li2020acceleration, li2021canita} have shed light on this question. They impose {\em independence} on unbiased compressors, \ie,  the compressed vectors $\{C_i(x_i)\}_{i=1}^n$ sent by workers are mutually independent regardless of the inputs $\{x_i\}_{i=1}^n$. This independence assumption enables an "error cancellation" effect, producing a more accurate compressed vector  $n^{-1}\sum_{i=1}^nC_i(x_i)$ and hence incurring fewer additional rounds of communication compared to dependent compressors. Consequently, the decrease in the per-round communication cost outweighs the extra communication rounds, reducing the total communication cost. 

However, it remains unclear how much the total communication cost can be reduced {\em at most} by independent unbiased compression and whether we can develop algorithms to achieve this optimal reduction. Addressing this question poses significant challenges as it necessitates a study of the optimal convergence rate for algorithms using independent unbiased compression. 

\begin{figure}
    \centering
    \includegraphics[width=0.6\textwidth]{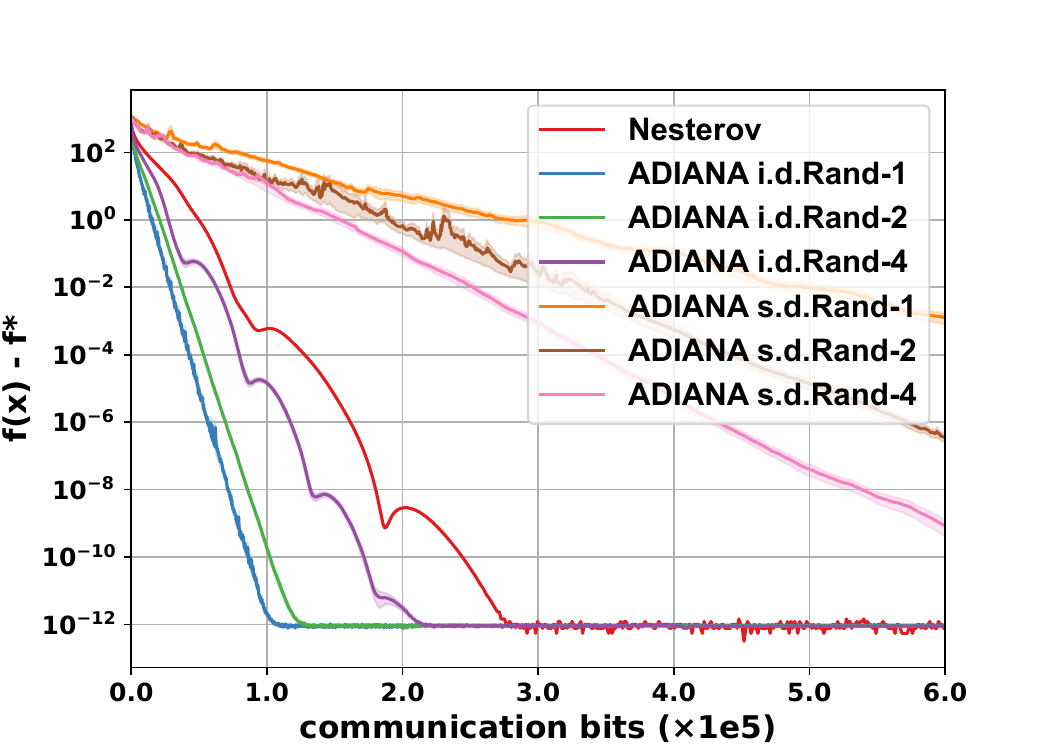}
    \caption{\footnotesize Performance of ADIANA using random-$s$ sparsification compressors with shared (s.d.) or independent (i.d.) randomness against distributed Nesterov's accelerated algorithm  with no compression in communication. Experimental descriptions are in Appendix \ref{app:expri}}%Sec.~\ref{sec-experiments}. \xh{take care of this}}
	\label{fig:IUC-saves-comm}
\end{figure}

% \begin{wrapfigure}{R}{0.42\textwidth}
% 	\begin{center}
% 		\vspace{-9mm}
% 		\includegraphics[width=0.42\textwidth]{images/LB03_g__Truetype.pdf}\vspace{-3mm}
% 	\end{center}
% 	% \vspace{-1mm}
%     \caption{\footnotesize Performance of ADIANA using random-$s$ sparsification compressors with shared (s.d.) or independent (i.d.) randomness against distributed Nesterov's accelerated algorithm  with no compression in communication. Experimental descriptions are in Appendix \ref{app:expri}}
% 	\label{fig:IUC-saves-comm}
% \end{wrapfigure}

This paper provides the \textit{first} affirmative answer to this question for convex problems by: (i) establishing lower bounds on convergence rates of distributed algorithms employing independent unbiased compression, and (ii) demonstrating the tightness of these lower bounds by revisiting ADIANA \cite{li2020acceleration} and presenting novel and refined convergence rates nearly attaining the lower bounds. Our results reveal that compared to non-compression algorithms, independent unbiased compression can save the total communication cost by up to $\Theta(\sqrt{\min\{n, \kappa\}})$-fold, where $n$ is the number of workers and $\kappa\in[1,+\infty]$ is the function condition number. Figure~\ref{fig:IUC-saves-comm} provides a simple empirical justification. It shows independent compression (ADIANA i.d.) reduces communication costs compared to no compression (Nesterov's Accelerated algorithm), while dependent compression (ADIANA s.d.) does not, which validates our theory.

\subsection{Contributions}
Specifically, our contributions are as follows:
\begin{itemize}
\item We present a theoretical formalization of the total communication cost in distributed optimization with unbiased compression. With this formulation, we demonstrate that unbiased compression alone is insufficient to save the total communication cost, even with an optimal algorithmic design. This is because any reduction in the per-round communication cost is fully offset by the additional rounds of communication required due to the presence of compression errors.
     
\item We prove lower bounds on the convergence complexity of distributed algorithms using independent unbiased compression to minimize smooth convex functions. Compared to lower bounds when using unbiased compression without independence \cite{he2023lower}, our lower bounds demonstrate significant improvements when $n$ and $\kappa$ are large, see the first two lines in Table \ref{tab:unbiased}. This improvement highlights the importance of independence in unbiased compression. 

\item We revisit ADIANA \cite{li2020acceleration} by deriving an improved rate for strongly-convex functions and proving a novel convergence result for generally-convex functions. Our rates nearly match the lower bounds, suggesting their tightness and optimality.  Our optimal  complexities reveal that, compared to non-compression algorithms, independent unbiased compression can decrease total communication costs by up to $\cO(\sqrt{\min\{n, \kappa\}})$-fold when all local smoothness constants are constrained by a common upper bound.

    \item We support our theoretical findings with experiments on both synthetic data and real datasets. 
\end{itemize}

We present the lower bounds, upper bounds, and complexities of state-of-the-art distributed algorithms using independent unbiased compressors in Table \ref{tab:unbiased}. With our new and refined analysis, ADIANA nearly matches the lower bounds for both strongly-convex and generally-convex functions. 

\begin{table}[t]
	% \footnotesize
	\centering
	\caption{\small 
Lower and upper bounds on the number of communication rounds for distributed algorithms using unbiased compression to achieve an $\epsilon$-accurate solution. Notations $\Delta,n,L,\mu$ ($\kappa\triangleq L/\mu\geq 1$) are defined in Section \ref{sec:prob}. $\omega$ is a parameter for unbiased compressors (Assumption \ref{ass:unbiased}). $\tilde{\cO}$ and $\tilde{\Omega}$ hides logarithmic factors independent of $\epsilon$. GC and SC denote generally-convex and strongly-convex functions respectively.
	}

	\begin{threeparttable}
		\begin{tabular}{ccc}
			\toprule
			\textbf{Method} &  \textbf{GC} & \textbf{SC}\\
			\midrule
			\rowcolor{blue!20}\textbf{Lower Bound (Thm.~\ref{thm:lower-bounds})} & $\tilde{\Omega}\left(\omega \ln(\frac{1}{\epsilon})+ \left(1+\frac{\omega}{\sqrt{n}}\right)\frac{\sqrt{L\Delta}}{\sqrt{\epsilon}}\right)$ & $\tilde{\Omega}\left( \left(\omega+ \left(1+\frac{\omega}{\sqrt{n}}\right)\sqrt{\kappa}\right)\ln\left(\frac{1}{\epsilon}\right)\right)$
   \vspace{1mm}\\
			\hspace{-2mm}{Lower Bound} \cite{he2023lower}$^\natural$ & $\Omega\left((1+\omega)\frac{\sqrt{L\Delta}}{\sqrt{\epsilon}}\right)$ & $\tilde{\Omega}\left(  (1+\omega)\sqrt{\kappa}\ln\left(\frac{1}{\epsilon}\right)\right)$\vspace{1mm}\\ 
			\midrule
			CGD \cite{khirirat2018distributed}$^\Diamond$ &  $\mathcal{O}\left((1+\omega)\frac{L\Delta}{\epsilon}\right)$ & $\tilde{\mathcal{O}}\left((1+\omega)\kappa\ln\left(\frac{1}{\epsilon}\right)\right)$\vspace{1mm}\\
			ACGD  \cite{li2020acceleration}$^\Diamond$ &  $\mathcal{O}\left((1+\omega)\frac{\sqrt{L\Delta}}{\sqrt{\epsilon}}\right)$ & $\tilde{\mathcal{O}}\left((1+\omega)\sqrt{\kappa}\ln\left(\frac{1}{\epsilon}\right)\right)$\vspace{1mm}\\
			DIANA \cite{Mishchenko2019DistributedLW} &  $\cO\left(\left(1+\frac{\omega^2+\omega}{n+\omega}\right)\frac{L\Delta}{\epsilon}\right)$  & $\tilde{\mathcal{O}}\left( \left(\omega+\left(1+\frac{\omega}{n}\right){\kappa}\right)\ln\left(\frac{1}{\epsilon}\right)\right)$\vspace{1mm}\\
			% Q-SGD \cite{Alistarh2017QSGDCS} &  $\mathcal{O}\left(\omega+\frac{L\Delta(\omega G^2+\sigma^2)}{n\epsilon^2}+\frac{L\Delta}{\epsilon}\right)$ & --- \\
			EF21 \cite{Richtrik2021EF21AN}$^\natural$ & $\tilde{\mathcal{O}}\left((1+\omega){\frac{L\Delta}{\epsilon}}\right)$ & $\tilde{\mathcal{O}}\left((1+\omega)\kappa\ln\left(\frac{1}{\epsilon}\right)\right)$\vspace{1mm}\\
			ADIANA \cite{li2020acceleration}  & --- & \hspace{-3mm}$\tilde{\mathcal{O}}\left(\left(\omega+\left(1+\frac{\omega^{3/4}}{n^{1/4}}+\frac{\omega}{\sqrt{n}}\right)\sqrt{\kappa}\right)\ln\left(\frac{1}{\epsilon}\right)\right)$\vspace{1mm}\\
			CANITA \cite{li2021canita}$^\ddagger$ & \hspace{-3mm}$\mathcal{O}\left(\omega\frac{\sqrt[3]{L\Delta}}{\sqrt[3]{\epsilon}}+\left(1+\frac{\omega^{3/4}}{n^{1/4}}+\frac{\omega}{\sqrt{n}}\right)\frac{\sqrt{L\Delta}}{\sqrt{\epsilon}}\right)$ &--- \vspace{1mm}\\
			\hspace{-2mm}NEOLITHIC \cite{he2023lower}$^\natural$ & $\tilde{\mathcal{O}}\left((1+\omega)\frac{\sqrt{L\Delta}}{\sqrt{\epsilon}}\right)$ & $\tilde{\mathcal{O}}\left((1+\omega)\sqrt{\kappa}\ln\left(\frac{1}{\epsilon}\right)\right)$\vspace{1mm}\\
			
			\rowcolor{blue!20}\textbf{ADIANA (Thm.~\ref{thm:upper-bounds})} & ${\mathcal{O}}\left( \omega\frac{\sqrt[3]{L\Delta}}{\sqrt[3]{\epsilon}}+\left(1+\frac{\omega}{\sqrt{n}}\right)\frac{\sqrt{L\Delta}}{\sqrt{\epsilon}}\right)$ & $\tilde{\mathcal{O}}\left(\left(\omega+\left(1+\frac{\omega}{\sqrt{n}}\right)\sqrt{\kappa}\right)\ln\left(\frac{1}{\epsilon}\right)\right)$\vspace{1mm}\\
			\bottomrule    
		\end{tabular}
		\begin{tablenotes}\small
			\item[$\Diamond$]Results obtained in the single-worker setting and cannot be extended to the distributed setting.
			% \item[$\dagger$]The listed rate unifies the two separate rates in \cite{li2020acceleration}. See the detailed discussion in Appendix \ref{app:discussion}.
			\item[$\ddagger$]The rate is obtained by correcting mistakes in the derivations of \cite{li2021canita}. See details in Appendix \ref{app:discussion}.
			\item[$\natural$] Results hold without assuming independence across compressors.
		\end{tablenotes}
		
	\end{threeparttable}
	
	\label{tab:unbiased}
\end{table}

\subsection{Related work}
\noindent\textbf{Communication compression.} 
Two main approaches to compression are extensively explored in literature: quantization and sparsification. Quantization coarsely encodes input vectors into fewer discrete values, \eg, from 32-bit to 8-bit integers \cite{mayekar2020ratq,gandikota2021vqsgd}. Schemes like Sign-SGD \cite{Seide20141bitSG,Bernstein2018signSGDCO} use 1 bit per entry, introducing unbiased random information distortion. Other variants such as Q-SGD \cite{Alistarh2017QSGDCS}, TurnGrad \cite{Wen2017TernGradTG}, and natural compression \cite{Horvath2019NaturalCF} quantize each entry with more effective bits. In contrast, sparsification either randomly zeros out entries to yield sparse vectors \cite{Wangni2018GradientSF}, or transmits  only the largest model/gradient entries \cite{Stich2018SparsifiedSW}. 

\noindent\textbf{Error compensation.} 
Recent works \cite{Seide20141bitSG,Wu2018ErrorCQ,Stich2018SparsifiedSW,Alistarh2018TheCO,Richtrik2021EF21AN} propose error compensation or feedback to relieve the effects of compression errors. These techniques propagate information loss backward during compression, thus preserving more useful information. Reference \cite{Seide20141bitSG} uses error compensation for 1-bit quantization, while the work \cite{Wu2018ErrorCQ} proposes error-compensated quantization for quadratic problems. Error compensation also reduces sparsification-induced errors \cite{Stich2018SparsifiedSW} and is studied for convergence in non-convex scenarios \cite{Alistarh2018TheCO}. Recently, the work \cite{Richtrik2021EF21AN} introduces EF21, an error feedback scheme that compresses only local gradient increments with improved theoretical guarantees.

\noindent\textbf{Lower bounds.}
Lower bounds in optimization set a limit for the performance of a single or a class of algorithms. Prior works have established numerous lower bounds for optimization algorithms \cite{Agarwal2015ALB,Diakonikolas2019LowerBF,Arjevani2015CommunicationCO,nesterov2003introductory,Balkanski2018ParallelizationDN,Huang2022LowerBA,Yuan2022RevistOC,Foster2019TheCO,nesterov1983method}.  In the field of distributed optimization with communication compression, reference \cite{Philippenko2020BidirectionalCI} provides an algorithm-specific lower bound for strongly-convex functions, while the work \cite{NEURIPS2022_28795419} establishes the bit-wise complexities for PL-type problems, which reflect the influence of the number of agents $n$ and dimension $d$, but not the condition number $\kappa$ and the compression $\omega$. In particular,
\cite{Huang2022LowerBA} characterizes the optimal convergence rate for all first-order and linear-spanning algorithms, in the stochastic non-convex case, which is later extended by \cite{he2023lower} to convex cases.

\noindent\textbf{Accelerated algorithms with communication compression.} 
There is a scarcity of academic research on compression algorithms incorporating acceleration, as evident in a limited number of studies \cite{li2020acceleration, li2021canita, qian2021error}. References \cite{li2020acceleration,li2021canita} develop accelerated algorithms with compression in the strongly-convex and generally-convex cases, respectively.
% ADIANA \cite{li2020acceleration} demonstrates an enhanced convergence rate  in the strongly-convex case.  CANITA \cite{li2021canita} expedites distributed algorithms in the strongly-convex case. 
For distributed finite-sum problems, accelerated algorithms with compression can further leverage variance-reduction techniques to expedite  convergence \cite{qian2021error}.

\noindent\textbf{Other communication-efficient strategies.}
Other than communication compression studied in this paper, there are a few different techniques to mitigate the communication overhead in distributed systems, including decentralized communication and lazy communication. Notable examples of decentralized algorithms encompass decentralized SGD \cite{chen2012diffusion,Lian2017CanDA,koloskova2020unified,Yuan2022RevistOC}, D2/Exact-Diffusion \cite{tang2018d,yuan2020influence,yuan2023removing}, gradient tracking \cite{pu2021distributed, xin2020improved,koloskova2021improved,alghunaim2021unified}, and their momentum variants \cite{lin2021quasi,Yuan2021DecentLaMDM}. 
Lazy communication allows each worker to either perform multiple local updates as opposed to a single communication round \cite{mcmahan2017communication,stich2019local,mishchenko2022proxskip,karimireddy2020scaffold,cheng2023momentum,huang2023stochastic}, or by adaptively skipping communication \cite{chen2018lag,liu2019communication}.

\section{Problem setup} \label{sec:prob}
% \vspace{-1.5mm}
This section introduces the problem formulation and assumptions used throughout the paper.  We consider the following distributed stochastic optimization problem
\begin{equation}\label{eqn:prob}
	\min _{x \in \mathbb{R}^{d}}\quad f(x)=\frac{1}{n} \sum_{i=1}^{n} f_{i}(x),
\end{equation}
where the global objective function $f(x)$ is decomposed into $n$ local objective functions $\{f_i(x)\}_{i=1}^n$, and each local $f_i(x)$ is maintained by node $i$. Next, we introduce the setup and assumptions.

% \vspace{-1mm}
\subsection{Function class}
% \vspace{-1mm}
%\vspace{1.5mm}
%\noindent \textbf{2.1.1 Function class. }
We let $\cF_{L,\mu}^{\Delta}$ ($0\leq \mu\leq L$) denote the class of convex and smooth functions satisfying Assumption  \ref{asp:convex}.
We define $\kappa\triangleq L/\mu  \in[1,+\infty]$ as the condition number of the functions to be optimized.
When $\mu>0$, $\cF_{L,\mu}^{\Delta}$ represents strongly-convex functions. Conversely, when $\mu=0$, $\cF_{L,\mu}^{\Delta}$ represents generally-convex functions with $\kappa=\infty$. 

\begin{assumption}[\sc Convex and smooth function] \label{asp:convex}
	We assume each $f_{i}(x)$ is $L$-smooth and   $\mu$-strongly convex, \ie, there exists constants $L\geq \mu \ge 0$ such that 
	\begin{align*}\label{ass-eq-mu-convex}
		\frac{\mu}{2}\|y-x\|^2\leq f_i(y)- f_i(x)-\langle \nabla f_i(x), y-x\rangle \leq \frac{L}{2}\|y-x\|^2
	\end{align*}
 for any $x, y \in \RR^d$ and $1\leq i\leq n$. We further assume 
$\|x^{0}-x^\star\|^2\leq \Delta$ where $x^\star$ is one of the global minimizers of $f(x)=\frac{1}{n}\sum_{i=1}^n f_i(x)$. 
% When $\mu=0$, $f_i(x)$ reduces to the generally-convex function.
\end{assumption}

% \vspace{-1mm}
\subsection{Compressor class}\label{sec:compressor}
% \vspace{-1mm}
Each worker $i \in \{1, \cdots, n\}$ is equipped with a potentially random compressor $C_i: \mathbb{R}^d \rightarrow \mathbb{R}^d$. 
We let $\cU_{\omega}$ denote
the set of all $\omega$-unbiased compressors satisfying 
Assumption \ref{ass:unbiased}, and  $\cU_{\omega}^{\rm ind}$ denote
the set of all {\em independent} $\omega$-unbiased compressors satisfying both Assumption \ref{ass:unbiased} and Assumption \ref{ass:indpendent}.

\begin{assumption}[\sc Unbiased compressor]\label{ass:unbiased}
	We assume all  compressors $\{C_i\}_{i=1}^n$ satisfy
	\begin{equation}\label{eqn:omega}
		\EE[C_i(x)]=x,\quad \EE[\|C_i(x)-x\|^2]\leq \omega\|x\|^2,\quad \forall\,x\in\RR^d
	\end{equation}
	for constant $\omega \ge 0$ and any input $x\in\RR^d$, where the expectation is taken over the randomness of the compression operator $C_i$.\footnote{
Compression is typically employed for input $x$ that is bounded \cite{Alistarh2017QSGDCS,Wu2018ErrorCQ} or encoded with finite bits (\eg, float64 numbers) \cite{Horvath2019NaturalCF}. In these practical scenarios,  $\omega$-unbiased compression can be employed with finite bits. For instance, random-$s$ sparsification for $r$-bit $d$-dimensional vectors costs (nearly) $rs=rd/(1+\omega)$ bits per communication where $\omega = d/s-1$.}
\end{assumption}

\begin{assumption}[\sc Independent compressor]\label{ass:indpendent}
	We assume all compressors $\{C_i\}_{i=1}^n$ are mutually independent, \ie, outputs $\{C_i(x_i)\}_{i=1}^n$ are mutually independent random variables for any $\{x_i\}_{i=1}^n$. 
\end{assumption}

% \vspace{-2mm}
\subsection{Algorithm class}
% \vspace{-1mm}
Similar to \cite{he2023lower}, we consider centralized and synchronous algorithms in which first, every worker is allowed to communicate only directly with a central server but not between one another; second, all iterations/communications are synchronized, meaning that all workers start each of their iterations simultaneously.
We further require algorithms to satisfy the so-called ``linear-spanning'' property, which appears in \cite{carmon2020lower,Carmon2021LowerBF,Huang2022LowerBA,he2023lower} (see formal definition in Appendix \ref{app:lower}). Intuitively, this property requires each local model $x_i^{k}$ to lie in the linear manifold spanned by the local gradients and the received messages at worker $i$. The linear-spanning property is satisfied by all algorithms in Table \ref{tab:unbiased} as well as most first-order methods \cite{Nesterov1983AMF,Kingma2015AdamAM,Huang2021Improved,Zeiler2012ADADELTAAA}. 

Formally, this paper considers a class of algorithms specified by Definition \ref{def:unidirect-alg}. 
\begin{definition}[\sc Algorithm class]\label{def:unidirect-alg}
	Given compressors $\{C_i\}_{i=1}^n$, we let  $\cA_{\{C_i\}_{i=1}^n}$ denote the set of all centralized, synchronous, linear-spanning algorithms admitting  compression in which compressor $C_i$, $\forall\,1\leq i\leq n$, is applied for the messages sent by worker $i$  to the server. 
\end{definition}
For any algorithm $A\in \cA_{\{C_i\}_{i=1}^n}$, we define $\hat{x}^{k}$ and $x_i^{k}$ as the output of the server and worker $i$ respectively, after $k$ communication rounds. 
% The ``linear-spanning'' property in Definition \ref{def:unidirect-alg} requires that each local output $x_i^{k}$ lies within the linear manifold spanned by the local gradients and received messages at worker $i$. Detailed explanations can be found in Appendix \ref{app:lower}. The linear-spanning property is satisfied by all algorithms in Table \ref{tab:unbiased} as well as most first-order methods \cite{Nesterov1983AMF,Kingma2015AdamAM,Huang2021Improved,Zeiler2012ADADELTAAA}.

% \vspace{-1mm}
\subsection{Convergence complexity}
% \vspace{-1mm}
With all the interested classes introduced above, we are ready to define our complexity metric for convergence analysis. Given a set of local functions $\{f_i\}_{i=1}^n\in\cF_{L,\mu}^{\Delta}$, 
a set of compressors $\{C_i\}_{i=1}^n\in \cC$ ($\cC=\cU_{\omega}^{\rm ind}$ or $\cU_{\omega}$), and an algorithm $A\in\cA_{\{C_i\}_{i=1}^n}$, we let $\hat{x}^{t}_{A}$ denote the output of algorithm $A$ after $t$ 
communication rounds. The convergence complexity of $A$ solving $f(x)=\frac{1}{n}\sum_{i=1}^n f_i(x)$ under $\{(f_i,C_i)\}_{i=1}^n$ is defined as 
\begin{equation}\label{eqn:measure}
	T_\epsilon (A,\{(f_i,C_i)\}_{i=1}^n) =\min\left\{t\in \NN: \EE[f(\hat{x}^t_A)]-\min_x f(x)\leq \epsilon\right\}.
\end{equation}
This measure corresponds to the number of communication rounds required by algorithm $A$ to achieve an $\epsilon$-accurate optimum of $f(x)$ in expectation.
% \xh{maybe use $T_\epsilon (\cA,\cF,\cC)=\inf_{A\in\cA}\sup_{f_i,C_i}T_\epsilon (A,\{(f_i,C_i)\}_{i=1}^n) $.}

\begin{remark}
The measure in \eqref{eqn:measure} is commonly referred to as the communication complexity in literature \cite{Scaman2017OptimalAF,Huang2022LowerBA,pmlr-v139-kovalev21a,li2020acceleration}.
However, we refer to it as the convergence complexity here to avoid potential confusion with the notion of ``communication complexity'' and ``total communication cost''. 
This complexity metric has been traditionally used to compare communication rounds used by distributed algorithms \cite{li2020acceleration,li2021canita}. However, it cannot capture the total communication costs of multiple algorithms with different per-round communication costs, \eg, algorithms with or without communication compression. Therefore, it is unable to address the motivating questions Q1 and Q2. 
    % Measure \eqref{eqn:measure} is often named as the communication complexity in literature \cite{Scaman2017OptimalAF,Huang2022LowerBA,pmlr-v139-kovalev21a}. 
    % We refer to it as the convergence complexity  here to avoid the potential confusion between the ``communication complexity'' and the ``total communication cost''.
    %  This complexity has been conventionally used as the comparison metric for distributed algorithms \cite{li2020acceleration,li2021canita}. However, it cannot reflect total communication costs 
    %  of multiple algorithms with different per-communication costs, \eg, algorithms with/without communication compression. Thus it is not able to address the motivating questions Q1 and Q2.
\end{remark}

\begin{remark}
The definition of $T_\epsilon$ can be independent of the per-round communication cost, which is specified only through the degree of compression $\omega$ (\ie, choice of compressor class). However, to be  precise, we may further assume  these compressors are non-adaptive with  the same fixed per-round communication cost. Namely, the compressors output compressed vectors that can be represented by a fixed and common number of bits. Notably, such hypothesis of non-adaptive cost is widely adopted for practical comparison of communication costs and is valid when input $x$ is bounded or can be encoded with finite bits \cite{Alistarh2017QSGDCS,Wu2018ErrorCQ,Horvath2019NaturalCF,huang2023stochastic}.
\end{remark}

\section{Total communication cost}\label{sec:tcc}
% \vspace{-1.5mm}

\subsection{Fomulation of total communication cost}
% \vspace{-1mm}
This section introduces the concept of Total Communication Cost (TCC). TCC can be calculated at both the level of an individual worker and of the overall distributed machine learning system comprising all $n$ workers. In a centralized and synchronized algorithm where each worker communicates compressed vectors of the same dimension, the TCC of the entire system is directly proportional to the TCC of a single worker. Therefore, it is sufficient to use the TCC of a single worker as the metric for comparing different algorithms. In this paper, we let TCC denote the total communication cost incurred by each worker in achieving a desired solution when no ambiguity is present.

Let each worker to be equipped with a non-adaptive compressor with the same fixed per-round communication cost, \ie, the compressor outputs compressed vectors of the same length (size), the TCC of an algorithm $A$ to solve problem \eqref{eqn:prob} using a set of $\omega$-unbiased compressors $\{C_i\}_{i=1}^n$ in achieving an $\epsilon$-accurate optimum can be characterized as 
\begin{align}\label{eq-TCC-1}
 \text{TCC}_\epsilon(A,\{(f_i,C_i)\}_{i=1}^n):=\text{per-round cost}(\{C_i\}_{i=1}^n)\times T_\epsilon (A,\{(f_i,C_i)\}_{i=1}^n).
\end{align}

% \vspace{-2mm}
\subsection{A tight lower bound for per-round cost}
% \vspace{-1mm}
The per-round communication cost incurred by $\{C_i\}_{i=1}^n$ in \eqref{eq-TCC-1} will vary with different $\omega$ values. Typically, compressors that induce less information distortion, \ie, associated with a smaller $\omega$, incur higher per-round costs. 
To illustrate this, we consider random-$s$ sparsification compressors, whose per-round cost corresponds to the transmission of $s$ entries, which depends on parameter $\omega$ through $s=d/(1+\omega)$ (see Example \ref{eg:rands}
% \ref{eg:rands} 
in Appendix \ref{app:rands}).
Specifically, if each entry of the input $x$ is numerically represented with $r$ bits, then the random-$s$ sparsification incurs a per-round cost of $rd/(1+\omega)$ bits up to a logarithm factor.  

The following proposition, motivated by the inspiring work \cite{Safaryan2020UncertaintyPF}, establishes a lower bound of TCC when using any compressor satisfying Assumption \ref{ass:unbiased}. 

% In literature \cite{Gorbunov2021MARINAFN,li2020acceleration,li2021canita}, random sparsification compressors are widely used to illustrate the utility of communication compression techniques in saving the
% total communication cost. However, beyond this specific example,
% it is unclear how to generally measure communication costs through $\omega$. In this regard, \cite{Safaryan2020UncertaintyPF} proves an enlighting fact that  $\frac{\omega}{1+\omega}4^{b/d}\geq 1$ for any compressor $C$ satisfying \eqref{eqn:omega} where $d$ is the dimension of input vectors and $b$ is the number
% of bits needed to encode compressed vectors. Based on this property, we show the following result.

% \begin{proposition}\label{prop:bit-lower}
% Suppose numbers are numerically represented with $r$ bits, \ie, errors smaller than $2^{-r}$ are ignored. Then for any any compressor $C$ satisfying \eqref{eqn:omega}, the number of bits required per-communication  for $C$ is $b\geq d\min\{\log_4(1+\omega^{-1}), r\}$.
% \end{proposition}
\begin{proposition}\label{prop:bit-lower}
Let $x\in \RR^d$ be the input to a compressor $C$ and $b$ be the number of bits needed to compress $x$. Suppose each entry of input $x$ is numerically represented with $r$ bits, \ie, errors smaller than $2^{-r}$ are ignored. Then for any compressor $C$ satisfying Assumption \ref{ass:unbiased}, the per-round communciation cost of $C(x)$ is lower bounded by ${b=\Omega_r(d/(1+\omega))}$ where $r$ is viewed as an absolute number in ${\Omega_r(\cdot)}$ (See the proof in Appendix \ref{app:tcc}).
\end{proposition}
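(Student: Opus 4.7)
The plan is to adapt the rate-distortion / uncertainty-principle style argument of \cite{Safaryan2020UncertaintyPF}: unbiasedness together with the $\omega$-bounded variance of $C$ forces the output to convey enough information about $x$ that one can lower bound the bit budget $b$ by $\Omega_r(d/(1+\omega))$. Throughout I would take $X$ uniform on $\{-1,+1\}^d\subset\RR^d$, so each coordinate fits in $r\ge 1$ bits, $H(X)=d$, and $\|X\|^2=d$ deterministically. If $M\in\{0,1\}^b$ denotes the transmitted codeword and $\hat X$ is any decoder computed from $M$ and the shared randomness, the data-processing inequality gives $I(X;\hat X)\le H(M)\le b$; it therefore suffices to exhibit a decoder for which $I(X;\hat X)\gtrsim d/(1+\omega)$.

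As a warm-up for bounded $\omega$, I would use the coordinatewise sign decoder $\hat X_i=\mathrm{sign}(C(X)_i)$. Because $X_i\in\{\pm 1\}$, the event $\hat X_i\ne X_i$ forces $|C(X)_i-X_i|\ge 1$, so Markov's inequality applied coordinatewise together with Assumption \ref{ass:unbiased} gives $\EE[\mathrm{Ham}(X,\hat X)]\le \EE\|C(X)-X\|^2\le \omega d$. Concavity of the binary entropy $h_2$ and the chain rule then yield $H(X\mid\hat X)\le d\,h_2(\omega)$, so $b\ge d(1-h_2(\omega))=\Omega(d)$ whenever $\omega$ is bounded away from $1/2$, matching $\Omega(d/(1+\omega))$ in that regime.

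For general $\omega$ I lift the argument to blocks. I would partition $[d]$ into $d/k$ disjoint blocks of size $k=\Theta(1+\omega)$ and consider the block-mean $\bar C_B=k^{-1}\sum_{i\in B}C(X)_i$, whose conditional expectation is $\bar X_B=k^{-1}\sum_{i\in B}X_i$ and whose (blockwise) variance is $O(\omega/k)=O(1)$ modulo cross-coordinate correlations. For uniform $X\in\{\pm 1\}^d$, $|\bar X_B|\gtrsim 1/\sqrt k$ with constant probability by a standard small-ball bound, so the block-sign decoder $\hat Z_B=\mathrm{sign}(\bar C_B)$ equals $\mathrm{sign}(\bar X_B)$ with constant probability. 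Running the Fano argument of the previous paragraph on the $d/k$ block signs then yields $b=\Omega(d/k)=\Omega(d/(1+\omega))$. The $r$-dependence in $\Omega_r(\cdot)$ comes from the grid on which $\hat X$ must be represented; since $r$ is a fixed absolute constant, it is absorbed into the implicit constant.

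The main technical obstacle will be the blockwise variance estimate: Assumption \ref{ass:unbiased} bounds only $\mathrm{tr}\,\mathrm{Cov}(C(X))$, not the off-diagonal structure, so for a single deterministic partition one cannot directly assert $\EE(\bar C_B-\bar X_B)^2\le \omega/k$. I expect to resolve this by a randomization: either average the lower bound over a uniformly random partition into blocks of size $k$, exploiting that $\EE_{\mathrm{partition}}[\EE(\bar C_B-\bar X_B)^2]=O(\omega/k)$ by symmetry, or precompose the coordinates with a random sign-flip / Hadamard rotation so that cross-block correlations vanish in expectation. A standard probabilistic argument then converts the resulting ``on average over partitions'' bound into the worst-case bit-complexity bound stated in the proposition.
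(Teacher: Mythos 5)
Your plan is to re-derive the Safaryan--Richt\'arik uncertainty bound from scratch via a Fano argument, whereas the paper's proof is a three-line algebraic consequence of quoting that bound directly: it cites \cite[Theorem~2]{Safaryan2020UncertaintyPF}, which gives $\max\{\tfrac{\omega}{1+\omega},4^{-r}\}\cdot 4^{b/d}\ge 1$, splits into the two cases $\omega/(1+\omega)\lessgtr 4^{-r}$, and in the second case applies $\ln(1+t)\ge t/(1+t)$ with $t=\omega^{-1}$ to conclude $b=\Omega_r(d/(1+\omega))$. Re-proving the cited theorem is a legitimate ambition, but the specific route you sketch has two genuine gaps, not just loose ends.

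First, the issue you flag yourself --- the blockwise variance --- is not fixable the way you suggest. Assumption~\ref{ass:unbiased} only controls $\operatorname{tr}\operatorname{Cov}(C(x))$, and there are compressors that saturate the trace bound while concentrating all of the error in one direction, \eg{} $C(x)-x=\nu\mathbf{1}$ with $\nu$ a scalar of variance $\omega\|x\|^2/d$; for such $C$, $\EE[(\bar C_B-\bar X_B)^2]=\omega$ for \emph{every} block, and averaging over a uniformly random partition into blocks of size $k$ still gives $\EE_{\mathrm{partition}}[(\bar C_B-\bar X_B)^2]\approx\omega$ rather than $\omega/k$, because $\mathbf{1}^\top\Sigma\mathbf{1}$ can be as large as $d\cdot\operatorname{tr}\Sigma$. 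A random sign/rotation trick changes the input distribution, but because $\Sigma$ may depend on the input $x$ the cross-terms need not vanish in expectation either, so this fix also requires a new argument. Second, and more fundamentally, even if you grant yourself $\EE[(\bar C_B-\bar X_B)^2]\lesssim\omega/k$ with $k=\Theta(1+\omega)$, the block-sign decoder loses a factor of $\omega$: the signal $\bar X_B$ is of order $1/\sqrt{k}$ while the noise standard deviation is of order $\sqrt{\omega/k}$, so the per-block sign-recovery probability is $\tfrac12+\Theta(1/\sqrt{\omega})$, Fano then gives per-block mutual information $\Theta(1/\omega)$, and summing over the $d/k$ blocks yields only $b=\Omega\big(d/((1+\omega)\omega)\big)=\Omega(d/\omega^2)$, a factor $\omega$ weaker than the target $\Omega_r(d/(1+\omega))$. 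Shrinking $k$ does not help --- at $k=1$ the Markov/Cantelli bound on the sign-flip probability is vacuous for $\omega\ge1$ --- so you would need a finer decoder (\eg{} one that retains the real-valued $C(X)_i$ and lower-bounds $I(X_i;C(X)_i)$ by $\Omega(1/(1+\sigma_i^2))$ directly, followed by Jensen over the budget $\sum_i\sigma_i^2\le\omega d$) rather than a binary block-sign reduction. The warm-up for $\omega$ bounded away from $1/2$ is correct but does not extend.
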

Proposition \ref{prop:bit-lower} presents a lower bound on the per-round cost of an arbitrary compressor satisfying Assumption \ref{ass:unbiased}. This lower bound is tight since the random-$s$ compressor discussed above can achieve this lower bound up to a logarithm factor.
Since $d$ only relates to the problem instance itself and $r$ is often a constant absolute number in practice, \eg, $r=32$ or $64$, both of which are independent of the choices of compressors and algorithm designs, they can be omitted from the lower bound order. As a result, the TCC in \eqref{eq-TCC-1} can be lower bounded by  
\begin{equation}\label{eqn:L-TCC}
   \text{TCC}_\epsilon= \Omega((1+\omega)^{-1})\times T_\epsilon (A,\{(f_i,C_i)\}_{i=1}^n). 
\end{equation}
Notably, when no compression is employed (\ie, $\omega=0$), $\text{TCC}_\epsilon= \Omega(1)\times T_\epsilon (A,\{(f_i,C_i)\}_{i=1}^n)$ is consistent with the convergence complexity.

% \vspace{-2mm}
\section{Unbiased compression alone cannot save communication}
% \vspace{-1mm}
With formulation \eqref{eqn:L-TCC}, given the number of communication rounds $T_\epsilon$,  the total communication cost can be readily characterized. A recent pioneer work \cite{he2023lower} characterizes a tight lower bound for $T_\epsilon (A,\{(f_i,C_i)\}_{i=1}^n)$ when each $C_i$ satisfies Assumption \ref{ass:unbiased}. 

\begin{lemma}[\cite{he2023lower}, Theorem 1, Informal]\label{thm:lower-MP}
    Relying on unbiased compressibility alone, \ie, $\{C_i\}_{i=1}^n\in \cU_{\omega}$, without leveraging additional property of compressors such as mutual independence, 
    the fewest rounds of communication needed by algorithms with compressed communication to 
    achieve an $\epsilon$-accurate solution to distributed strongly-convex and generally-convex optimization problems are lower bounded by $T_\epsilon=\Omega((1+\omega)\sqrt{\kappa}\ln\left({\mu \Delta}/{\epsilon}\right))$ and $T_\epsilon=\Omega((1+\omega)\sqrt{{L\Delta}/{\epsilon}})$, respectively.
\end{lemma}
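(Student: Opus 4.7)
My plan is to combine the classical Nesterov-style lower bound for smooth (strongly) convex minimization with a ``stalling'' unbiased compressor, and then exploit the linear-spanning property from Definition \ref{def:unidirect-alg} to convert compressor stalling into a bound on the growth rate of the progress subspace. The $\sqrt{L/\mu}\ln(\mu\Delta/\epsilon)$ and $\sqrt{L\Delta/\epsilon}$ factors will come from the Nesterov chain quadratic, and the extra multiplicative $(1+\omega)$ factor will come from the compressor killing gradient information for a $\omega/(1+\omega)$-fraction of the rounds (in expectation). Since Assumption \ref{ass:indpendent} is \emph{not} imposed in this lemma, I am allowed to use the same random variable to drive every worker's compressor, which is crucial for the construction.

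\textbf{Hard instance.} For the function, I would take all workers to hold a common scaled Nesterov quadratic $f_i(x)=f(x)$ on $\mathbb{R}^d$ (or on $\mathbb{R}^\infty$ for the generally-convex case) with condition number $\kappa$, whose gradient at any point supported on the first $k$ coordinates is supported on the first $k+1$ coordinates. Standard arguments then give that any linear-spanning algorithm whose iterates lie in the span of the gradients queried so far satisfies $f(x^t)-f^\star\geq c\cdot\rho^t\mu\Delta$ in the strongly-convex case, and $f(x^t)-f^\star\geq cL\Delta/t^2$ in the generally-convex case, \emph{as long as} the iterate spans only the first $t'$ coordinates after $t'$ ``informative'' gradient evaluations.

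\textbf{Stalling compressor.} For the compressor I would pick the shared-randomness scaled-Bernoulli operator: let $\xi\in\{0,1\}$ be a single Bernoulli$(1/(1+\omega))$ variable drawn afresh each round and \emph{shared} by all workers, and set $C_i(x):=(1+\omega)\,\xi\,x$. A direct computation shows $\mathbb{E}[C_i(x)]=x$ and $\mathbb{E}\|C_i(x)-x\|^2=\omega\|x\|^2$, so $\{C_i\}_{i=1}^n\in\cU_\omega$ (but not in $\cU_\omega^{\rm ind}$, which is exactly why independence must be ruled out). On rounds where $\xi=0$, every worker sends the zero vector to the server, so the server's post-round span is identical to the pre-round span. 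By the linear-spanning property, the server's iterate still lies in the same subspace, and when it broadcasts back, each worker's subsequent local gradient query lies in the previously attained coordinate span. Consequently, a ``null'' round cannot enlarge the global coordinate span.

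\textbf{Combining the two ingredients.} Let $K_t$ denote the coordinate span reached after $t$ communication rounds. The argument above shows $K_t$ grows by at most $1$ on each round in which $\xi=1$, and not at all on rounds in which $\xi=0$. Since $\xi=1$ with probability $1/(1+\omega)$ independently across rounds, a Chernoff-type bound gives $\mathbb{E}[K_t]\leq t/(1+\omega)$ and indeed $K_t\leq 2t/(1+\omega)$ with high probability. Plugging this effective coordinate count into Nesterov's suboptimality lower bounds yields
\begin{equation*}
\mathbb{E}[f(\hat x^t)]-f^\star\;\geq\;c_1\,\mu\Delta\,\bigl(1-\sqrt{\mu/L}\bigr)^{2t/(1+\omega)}\quad\text{or}\quad c_2\,L\Delta\,(1+\omega)^2/t^2,
\end{equation*}
and inverting these inequalities delivers the claimed lower bounds $T_\epsilon=\Omega((1+\omega)\sqrt{L/\mu}\ln(\mu\Delta/\epsilon))$ and $T_\epsilon=\Omega((1+\omega)\sqrt{L\Delta/\epsilon})$. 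The main obstacle I expect is making the ``span cannot grow on null rounds'' step fully rigorous: one has to chase the linear-spanning property through both the server's and the workers' local updates (including the stored momentum/state from all previous rounds), and show that even when all workers simultaneously send zero, no new coordinate can enter any iterate's support. Phrasing the invariant in terms of a joint subspace that contains every worker's memory and every server message, and then arguing by induction on the rounds, should handle this cleanly.
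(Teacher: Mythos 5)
Your compressor is fine: with the shared Bernoulli $\xi$, $\EE[C_i(x)]=x$ and $\EE\|C_i(x)-x\|^2=\omega\|x\|^2$, so $\{C_i\}\in\cU_\omega\setminus\cU_\omega^{\rm ind}$ as required. But the hard function is wrong, and this is a genuine gap. (The paper does not prove Lemma~\ref{thm:lower-MP} itself — it cites \cite{he2023lower} — but the closely related Theorem~\ref{thm:lower-bounds} is proved in Appendix~\ref{app:lower} and its construction differs from yours in exactly the place that matters.)

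Taking $f_i=f$ homogeneous cannot give a multiplicative $(1+\omega)$ slowdown. Under Definition~\ref{def:lin-span}, a worker may chain gradient queries across rounds using only its own local memory; since $f$ is a zero-chain, a simple induction gives $\prog(\cY_i^t)\le t-1$ and $\prog(v_i^t)\le t$ \emph{regardless of whether any message has gotten through}. So after $k$ stalled rounds followed by one live round $k{+}1$ with $\xi_{k+1}=1$, the server receives $\hat{v}_i^{k+1}=(1+\omega)v_i^{k+1}$ with $\prog\le k+1$ — identical to the uncompressed trajectory. Your invariant "null rounds do not enlarge the global coordinate span" is true for the \emph{server's} span, but the workers' local spans advance by one coordinate every round anyway, and a single live round lets the server jump to the workers' span. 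The stalling compressor therefore produces only an $O(\omega)$ additive delay, not a $(1+\omega)$-fold multiplicative slowdown, and inverting Nesterov's bound yields no extra $(1+\omega)$ factor.

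What Appendix~\ref{app:lower} (Example 1 in both the strongly- and generally-convex cases) does is split the zero-chain so that worker $i$ can advance it only at coordinates $\equiv i\pmod n$; see property \eqref{eqn:jvisnvsdfads}. It is this split — not the compressor — that pins $\prog(v_i^t)\le B^{t-1}+1$, where $B^{t-1}$ is the progress already reached at the server: a worker cannot advance two positions in a row because the second step belongs to a different worker, who must first receive the intermediate coordinate via a successful transmission. Once the outgoing $\prog$ is pinned at $B^{t-1}+1$, either your shared stalling compressor or the entrywise random sparsifier used in Appendix~\ref{app:lower} kills the one critical new coordinate with probability $\omega/(1+\omega)$ each round, and Lemma~\ref{lem:small-prob} upgrades this to $\prog(\hat{x}^t)\lesssim t/(1+\omega)$ with high probability, which is the $(1+\omega)$-fold slowdown you want. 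Keep the shared randomness (you are right that it rules out $\cU_\omega^{\rm ind}$), but the hard function must be split across $n\ge 2$ workers; a homogeneous instance lets any single worker solve the chain locally and simply relay the answer.
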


Substituting Lemma \ref{thm:lower-MP} into our TCC lower bound in \eqref{eqn:L-TCC}, we obtain $\text{TCC}_\epsilon=\tilde{\Omega}(\sqrt{\kappa}\ln(1/\epsilon))$ or $\Omega(\sqrt{{L\Delta}/\epsilon})$ in the strongly-convex or generally-convex case, respectively, by relying solely on unbiased compression. These results do not depend on the compression parameter $\omega$, indicating that {\em the lower per-round cost is fully compensated by the additional rounds of communication incurred by compressor errors}.  Notably, these lower bounds are of the same order as optimal algorithms without compression such as Nesterov’s accelerated gradient descent \cite{nesterov2003introductory,nesterov1983method}, leading to the conclusion: 

\begin{theorem}\label{thm-unbiased-not-enough}
When solving convex optimization problems following Assumption \ref{asp:convex}, 
any algorithm $A \in \mathcal{A}_{\{C_i\}_{i=1}^n}$ that relies solely on unbiased compression satisfying Assumption \ref{ass:unbiased} cannot reduce the total communication cost compared to not using compression. The best achievable total communication cost with unbiased compression alone is of the same order as without compression.
\end{theorem}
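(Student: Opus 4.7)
The plan is to combine the per-round cost lower bound from Proposition~\ref{prop:bit-lower} with the round-complexity lower bound from Lemma~\ref{thm:lower-MP}, and observe that the two $(1+\omega)$ factors exactly cancel. Concretely, I would start by fixing any algorithm $A \in \cA_{\{C_i\}_{i=1}^n}$ with $\{C_i\}_{i=1}^n \in \cU_{\omega}$, and instantiate the TCC formula \eqref{eq-TCC-1} together with the lower bound \eqref{eqn:L-TCC}. This yields $\text{TCC}_\epsilon(A,\{(f_i,C_i)\}_{i=1}^n) = \Omega((1+\omega)^{-1}) \cdot T_\epsilon(A,\{(f_i,C_i)\}_{i=1}^n)$, where the hidden constants depend only on the bit-precision $r$ and the dimension $d$, neither of which is affected by the choice of algorithm or compressor.

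Next, I would substitute the round-complexity lower bound from Lemma~\ref{thm:lower-MP}, which holds uniformly over $\cU_{\omega}$. In the strongly-convex case this gives $T_\epsilon = \Omega\bigl((1+\omega)\sqrt{L/\mu}\ln(\mu\Delta/\epsilon)\bigr)$, and in the generally-convex case $T_\epsilon = \Omega\bigl((1+\omega)\sqrt{L\Delta/\epsilon}\bigr)$. Multiplying these with the $\Omega((1+\omega)^{-1})$ per-round cost bound produces
\[
\text{TCC}_\epsilon \;=\; \tilde{\Omega}\!\left(\sqrt{L/\mu}\,\ln(1/\epsilon)\right) \quad \text{or} \quad \Omega\!\left(\sqrt{L\Delta/\epsilon}\right),
\]
with the compression parameter $\omega$ vanishing from the bound entirely. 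This algebraic cancellation is the crux of the argument: any savings from a larger $\omega$ (smaller per-round cost) are exactly offset by the additional rounds needed to overcome the distortion.

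To complete the statement that ``the best achievable TCC with unbiased compression is of the same order as without compression,'' I would then exhibit a non-compression baseline matching this order. Nesterov's accelerated gradient descent, run without any compression (so that the per-round cost is $\cO(d)$ bits, i.e., $\cO(1)$ after absorbing $d$ and $r$ into constants), attains $T_\epsilon = \cO(\sqrt{L/\mu}\ln(1/\epsilon))$ and $\cO(\sqrt{L\Delta/\epsilon})$ in the strongly-convex and generally-convex regimes respectively; multiplying confirms its TCC is of the same order as the lower bound just derived. Since no algorithm in $\cA_{\{C_i\}_{i=1}^n}$ with $\{C_i\}_{i=1}^n \in \cU_{\omega}$ can beat this order, the theorem follows.

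The main obstacle here is not really technical---the heavy lifting is already done by Proposition~\ref{prop:bit-lower} and Lemma~\ref{thm:lower-MP}---but rather a matter of carefully handling the quantification. In particular, one has to be cautious that the per-round cost lower bound and the round lower bound both hold simultaneously for the same algorithm-compressor pair, so that their product legitimately lower-bounds the TCC of that pair; the lower bound on $T_\epsilon$ must be a worst-case bound over the function class $\cF_{L,\mu}^{\Delta}$ rather than an average one, which is indeed the form delivered by Lemma~\ref{thm:lower-MP}. Once this is set up cleanly, the cancellation of $(1+\omega)$ and the comparison with uncompressed Nesterov's method are immediate.
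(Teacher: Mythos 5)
Your proposal is correct and follows essentially the same route as the paper: substitute the round-complexity lower bound of Lemma~\ref{thm:lower-MP} into the per-round lower bound \eqref{eqn:L-TCC}, observe that the $(1+\omega)$ factors cancel, and compare against the TCC of uncompressed Nesterov's accelerated method. Your caveat about ensuring both lower bounds hold simultaneously for the same $(A,\{C_i\})$ pair is a worthwhile sanity check that the paper handles implicitly, but it does not constitute a different argument.
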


Theorem \ref{thm-unbiased-not-enough} presents a {\em negative} finding that unbiased compression alone is insufficient to reduce the total communication cost, even with an optimal algorithmic design.\footnote{The theoretical results can vary from practical observations due to the particularities of real datasets with which compressed algorithms can enjoy faster convergence, compared to the minimax optimal rates (\eg, ours and \cite{he2023lower}) justified without resorting any additional condition.
} Meanwhile, it also implies that to develop algorithms that provably reduce the total communication cost, one must leverage compressor properties beyond $\omega$-unbiasedness as defined in \eqref{eqn:omega}. Fortunately, mutual independence is one such property which we discuss in depth in later sections.

% \vspace{-1.5mm}
\section{Independent unbiased compression saves communication}
% \vspace{-1.5mm}

\subsection{An intuition on why independence can help}\label{sec:intuition}
 A series of works \cite{Mishchenko2019DistributedLW,li2020acceleration,li2021canita} have shown theoretical improvements in the total communication cost by imposing independence across compressors, \ie, $\{C_i\}_{i=1}^n\in \cU_{\omega}^{\rm ind}$. The intuition behind the role of independence among worker compressors can be illustrated by a simple example where workers intend to transmit the same vector $x$ to the server. 
 % (see \ref{fig:fig_intro}). 
 Each worker $i$ sends a compressed message $C_i(x)$ that adheres to Assumption \ref{ass:unbiased}.  See Figure \ref{fig:fig_intro} for the illustration. Consequently, the aggregated vector $n^{-1}\sum_{i=1}^n C_i(x)$ is an unbiased estimate of $x$ with variance 
\begin{align}\label{z92hnanb00}
\EE\left[\left\|\frac{1}{n}\sum_{i=1}^n C_i(x)-x\right\|^2\right]= \frac{1}{n^2}\left(\sum_{i=1}^n\EE[\|C_i(x)-x\|^2]+\sum_{i\neq j}\EE[\langle C_i(x)-x,C_j(x)-x\rangle]\right).
\end{align}
If the compressed vectors $\{C_i(x)\}_{i=1}^n$ are further assumed to be independent, \ie, $\{C_i\}_{i=1}^n\in \cU_{\omega}^{\rm ind}$, then the cancellation of cross error terms leads to the following equation:
\begin{equation}\label{eqn:vnisdfs}
\EE\left[\left\|\frac{1}{n}\sum_{i=1}^n C_i(x_i)-x\right\|^2\right]= \frac{1}{n^2}\sum_{i=1}^n\EE[\|C_i(x)-x\|^2]\leq \frac{\omega}{n}\|x\|^2.
\end{equation}
We observe that the mutual independence among unbiased compressors leads to a decreased variance, which corresponds to the information distortion, of the aggregated message. Remarkably, this reduction is achieved by a factor of $n$ compared to the transmission of a single compressor. Therefore, the independence among the compressors plays a pivotal role in enhancing the accuracy of the aggregated vector, consequently reducing the number of required communication rounds.

On the contrary, in cases where independence is not assumed and no other properties of compressors can be leveraged, the use of Cauchy's inequality only allows us to bound variance \eqref{z92hnanb00} as follows:
\begin{equation}\label{eqn:vnixnvxcv}
    \EE\left[\left\|\frac{1}{n}\sum_{i=1}^n C_i(x)-x\right\|^2\right]\leq  \frac{1}{n}\sum_{i=1}^n\EE[\|C_i(x)-x\|^2]\leq \omega \|x\|^2.
\end{equation}
It is important to note that the upper bound $\omega \|x\|^2$ can only be attained when the compressors $\{C_i\}_{i=1}^n$ are identical, indicating that this bound cannot be generally improved further. By comparing \eqref{eqn:vnisdfs} and \eqref{eqn:vnixnvxcv}, we can observe that the variance of the aggregated vector achieved through unbiased compression with independence can be $n$ times smaller than the variance achieved without independence.

\begin{figure}[t]
    \centering
    \includegraphics[width=0.6\textwidth]{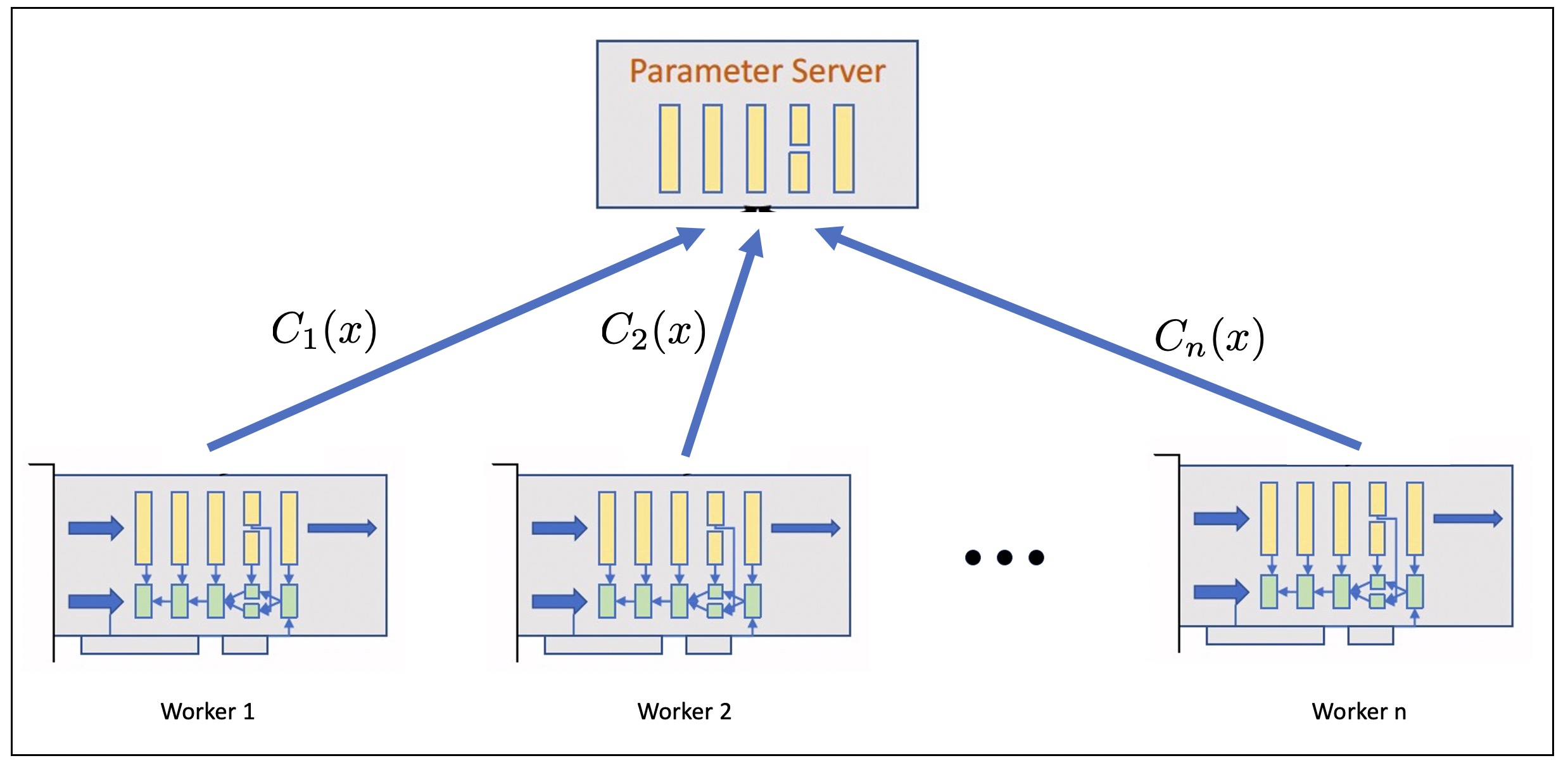}
    \caption{\small An illustration of the compressed aggregation.
    % \small 
    % Each worker sends a compressed message $C_i(x)=(1+\zeta_i)x$ with $\zeta_i\sim \cN(0,\omega)$.
    % Then  $\{C_i\}_{i=1}^n \in \cU_\omega$ and $\EE[\|\frac{1}{n}\sum_{i=1}^nC_i(x)-x\|^2]= (n\omega +\sum_{i\neq j}\mathrm{Cov}(\zeta_i,\zeta_j))\|x\|^2/n^2\leq \omega \|x\|^2$. This bound reduces to $\omega \|x\|^2/n$ if $\{\zeta_i\}_{i=1}^n$ are mutually independent.
    %(Right.) The performance of  Nesterov's accelerated GD (no compression) and ADIANA conducting rand-$s$ sparsification with independent (i.d.)/shared (s.d.) randomness on a constructed quadratic instance with 20 trials. The horizontal axis indicates total communication costs. See plotting details in Appendix \ref{app:expri}.
    }
    \label{fig:fig_intro}
\end{figure}

\subsection{Convergence lower bounds with independent unbiased compressors}
While mutual independence can boost the unbiased worker compressors, it remains unclear how much the total communication cost can be reduced  {\em at most} by independent unbiased compression and how to develop algorithms to achieve this optimal reduction. The following subsections aim to address these open questions.

Following the formulation in \eqref{eqn:L-TCC}, to establish the best achievable total communication cost using {\em independent unbiased compression}, we shall study tight lower bounds on the number of communication rounds $T_\epsilon$ to achieve an $\epsilon$-accurate solution, which is characterized by the following theorem.
\begin{theorem}\label{thm:lower-bounds}
For any $L\ge\mu\ge0$, $n\ge2$, the following results hold. See the proof in Appendix \ref{app:lower}.
\begin{itemize}
    % \vspace{-1mm}
    \item \textbf{Strongly-convex: }For any $\Delta>0$, there exists a constant ${c_\kappa}$ only depends on $\kappa\triangleq L/\mu$,  a set of local loss functions $\{f_i\}_{i=1}^n\in\cF_{L,\mu>0}^{\Delta}$, independent unbiased compressors $\{C_i\}_{i=1}^n\in\cU_\omega^{\rm ind}$, such that the output $\hat{x}$ of any $A\in\cA_{\{C_i\}_{i=1}^n}$ starting from $x^0$ requires
    \begin{equation}\label{eqn:vnsdinvsdx}
        T_\epsilon(A,\{(f_i,C_i)\}_{i=1}^n)=\Omega\left(\left(\omega+\left(1+\frac{\omega}{\sqrt{n}}\right)\sqrt{\kappa}\right)\ln\Big(\frac{\mu\Delta}{\epsilon}\Big)\right)
    \end{equation}
    rounds of communication to reach $\EE[f(\hat{x})]-\min_x f(x)\le\epsilon$ for any $0<\epsilon\leq  {c_\kappa}\mu   \Delta $.
    % \vspace{-1mm}
        \item \textbf{Generally-convex: }For any $\Delta>0$, there exists a constant $c=\Theta(1)$, a set of local loss functions $\{f_i\}_{i=1}^n\in\cF_{L,0}^{\Delta}$, independent unbiased compressors $\{C_i\}_{i=1}^n\in\cU_\omega^{\rm ind}$, such that the output $\hat{x}$ of any $A\in\cA_{\{C_i\}_{i=1}^n}$ starting from $x^0$ requires at least
    \begin{equation*}
        T_\epsilon(A,\{(f_i,C_i)\}_{i=1}^n)=\Omega\left(\omega \ln\Big(\frac{L\Delta}{\epsilon}\Big)+\Big(1+\frac{\omega}{\sqrt{n}}\Big)\Big(\frac{L\Delta}{\epsilon}\Big)^{\frac{1}{2}}\right)
    \end{equation*}
    rounds of communication to reach $\EE[f(\hat{x})]-\min_x f(x)\le\epsilon$ for any $0<\epsilon\leq {c}L \Delta $.
\end{itemize}
\end{theorem}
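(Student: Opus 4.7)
The plan is to prove both parts of Theorem~\ref{thm:lower-bounds} through a Nesterov-style progress argument adapted to the distributed compressed setting, in the spirit of \cite{Huang2022LowerBA,he2023lower}, refined to quantitatively exploit the independence assumption in $\cU_\omega^{\rm ind}$. The bounds have the form ``pure compression overhead $+$ accelerated rate with reduced compression penalty,'' so I would construct two separate hard instances corresponding to the two additive terms and combine them, either by taking the worst case or by packing them into a single block-diagonal problem.

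For the accelerated contributions $(1+\omega/\sqrt n)\sqrt{L/\mu}\ln(\mu\Delta/\epsilon)$ and $(1+\omega/\sqrt n)\sqrt{L\Delta/\epsilon}$, I would build a Nesterov chain quadratic $f(x) = \frac{1}{n}\sum_{i=1}^n f_i(x)$ on $\RR^d$ in which the chain bonds $(x_j - x_{j+1})^2$ are distributed cyclically across workers (e.g., bond $j$ to worker $j \bmod n$), with boundary and regularization terms chosen so that $f \in \cF_{L,\mu}^{\Delta}$. The canonical structural property is preserved: a local gradient at any iterate supported on coordinates $1,\dots,k$ can only expose coordinate $k+1$, and only at the unique worker holding the corresponding bond. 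Nesterov's truncation lemma then forces $f(\hat x) - \min f \geq \Omega(L\Delta/k^2)$ or $\Omega(\mu\Delta\,\rho^k)$ with $\rho = 1 - \Theta(1/\sqrt\kappa)$ whenever $\hat x$ is supported on $\{1,\dots,k\}$. I would then track the ``progress'' $k^t$, the largest coordinate activated at the server after $t$ rounds, and show, via a concentration analysis that exploits Assumption~\ref{ass:indpendent}, that the effective one-round reveal probability of the next boundary coordinate is $\Theta(\min\{1,(1+\omega/\sqrt n)^{-1}\})$ rather than $\Theta((1+\omega)^{-1})$ as in the dependent-compressor case treated in \cite{he2023lower}. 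Inverting this progress-growth bound and invoking Nesterov's truncation gap produces the accelerated contribution.

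For the additive $\omega$ and $\omega\ln(L\Delta/\epsilon)$ terms, I would use a separate one-dimensional hard instance in which the entire optimum is encoded in a single coordinate held by only one worker, so averaging across workers provides no variance reduction. Reading out this scalar through an $\omega$-unbiased channel provably requires $\Omega(\omega)$ rounds per order of magnitude of accuracy, producing the desired additive contribution under the stated smallness conditions $\epsilon \leq c_{n,\kappa}\mu\Delta$ and $\epsilon \leq c_n L\Delta$.

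The main obstacle is the reveal-rate analysis in the second paragraph, specifically pinning down the $\sqrt n$ (as opposed to $n$) improvement from independence. A naive worker-wise union bound would suggest a $\min\{1,n/(1+\omega)\}$ reveal rate per round, which is too fast and would contradict the known upper bound for ADIANA in Table~\ref{tab:unbiased}. Circumventing this requires designing the adversarial compressor (plausibly a random-$s$ sparsifier or a structured noise injector) and scaling the chain bonds so that exposing a new coordinate forces the \emph{averaged} message $\frac{1}{n}\sum_i C_i(x_i^t)$ to exceed a noise floor governed by a Bernstein-type concentration for sums of independent bounded random variables, rather than by a Bernoulli indicator for any single worker. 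Meshing this noise-floor argument cleanly with the accelerated Nesterov chain so that the two contributions compose (rather than interfere) is where the argument is most delicate and where the independence assumption is genuinely consumed in full.
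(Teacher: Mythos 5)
Your high-level decomposition is aligned with the paper: both cases are proved by combining separate hard instances (a distributed Nesterov chain plus a "pure compression" instance for the additive $\omega$ term), and your idea of spreading the chain bonds cyclically across workers ($j\bmod n$) matches the paper's construction exactly. Your Example for the additive $\omega$ term is also in the same spirit (though the paper uses a $d$-dimensional construction where worker $n$ must transmit every coordinate through a random sparsifier, giving an $(\omega/(1+\omega))^t$ geometric decay in the number of hidden coordinates, which is cleaner than a scalar channel argument).

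However, the mechanism by which you obtain the $\sqrt n$ improvement is not the paper's, and I do not think your route works. You propose to improve the per-round \emph{reveal probability} from $\Theta(1/(1+\omega))$ to $\Theta(1/(1+\omega/\sqrt n))$ via a Bernstein/noise-floor argument on the averaged message $\frac1n\sum_i C_i(x_i^t)$. But in a zero-chain instance where the bond at position $B^{t-1}+1$ is held by a single worker $i^\star$, the other $n-1$ workers have identically zero at the frontier coordinate, so averaging their compressed messages contributes nothing to advancing the progress counter. Concentration of the average is irrelevant here; the reveal rate is structurally pinned at $1/(1+\omega)$ regardless of $n$. The paper's $\sqrt n$ gain comes from a different place entirely: each local $f_i$ carries only one chain bond out of every $n$, scaled to saturate its own $L$-smoothness budget, so the averaged function $f = \frac1n\sum_i f_i$ has bond strength $\Theta((L-\mu)/n)$ and hence effective condition number $\Theta(1+\kappa/n)$. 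Nesterov's decay rate for this instance is $(1-\Theta(1/\sqrt{1+\kappa/n}))^k$, which combined with the unchanged $1/(1+\omega)$ reveal rate gives $\Omega((1+\omega)(1+\sqrt{\kappa/n})\ln(1/\epsilon))$. The $\sqrt n$ is absorbed into the function geometry, not into the compressor statistics. Your worry about contradicting the ADIANA upper bound is well-founded, but the correct resolution is this rescaling of the averaged Hessian, not a tightened concentration bound on the averaged compressed signal; as written, the concentration step you would need has no leverage, since the argument already forces all but one worker to send zeros at the frontier.
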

\noindent \textbf{Consistency with prior works.} 
The lower bounds established in Theorem \ref{thm:lower-bounds} are consistent with the best-known lower bounds in previous literature. When $\omega = 0$, our result reduces to the lower bound for distributed first-order algorithms established by Y. Nesterov in \cite{nesterov2003introductory}. When $n=1$, our result reduces to the lower bound established in \cite{he2023lower} for the single-node case.

% \vspace{1.5mm}
% \vspace{-0.25mm}
\noindent \textbf{Independence improves lower bounds.} 
A recent work \cite{he2023lower} establishes lower bounds for unbiased compression without the independence assumption, listed in the second row of Table \ref{tab:unbiased}.
Compared to these results, our lower bound in Theorem \ref{thm:lower-bounds} replaces $\omega$ with $\omega/\sqrt{n}$, showing a reduction in order. This reduction highlights the role of independence in unbiased compression. 
% We also provide intuition on why independence is helpful, see Appendix \ref{app:intuition}. 
To better illustrate the reduction, we take the strongly-convex case as an example. The ratio of the number of communication rounds $T_\epsilon$ under unbiased compression with independence to the one without independence is:
% A recent work \cite{he2023lower} establishes the lower bounds for unbiased compression without independent assumption, which is listed in the second row in Table \ref{tab:unbiased}. Compared to these results, our lower bound replaces $\omega$ with $\omega/\sqrt{n}$, which exhibits an improvement in order. The improvement highlights the role of independence in unbiased compression. To better illustrate the magnitude of improvement, we take the strongly-convex case as an example, the ratio of communication round $T_\epsilon$ under unbiased compression with independence to without independence is 
% \begin{equation}\label{eqn:vhinfafa}
%     \frac{\omega+(1+\omega/\sqrt{n})\sqrt{\kappa}}{(1+\omega)\sqrt{\kappa}}=\begin{cases}
%         \Theta\left(\frac{\omega +\sqrt{\kappa}}{(1+\omega)\sqrt{\kappa}}\right)\leq \Theta(1), &\text{if }\omega \lesssim \sqrt{n},\\
%         \Theta\left(\frac{1}{\sqrt{\min\{n,\kappa\}}}\right),&\text{if }\omega \gtrsim \sqrt{n}.
%     \end{cases}
% \end{equation}
\begin{equation}\label{eqn:vhinfafa}
    \frac{\omega+(1+\omega/\sqrt{n})\sqrt{\kappa}}{(1+\omega)\sqrt{\kappa}}=\frac{1}{1+\omega}+\frac{\omega}{1+\omega}\left(\frac{1}{\sqrt{n}}+\frac{1}{\sqrt{\kappa}}\right)=\Theta\left(\frac{1}{\min\{1+\omega,\sqrt{n},\sqrt{\kappa}\}}\right).
\end{equation}
Clearly, using independent unbiased compression can allow algorithms to converge faster, by up to a factor of $\Theta(\sqrt{\min\{n, \kappa\}})$ (attained at $\omega \gtrsim \sqrt{\min\{n, \kappa\}}$), in terms of the number of communication rounds, compared to the best algorithms with unbiased compressors but without independence.
% Clearly, using independent unbiased compression makes algorithms converge faster---at most $\sqrt{\min\{n,\kappa\}}$ times faster---in term of the rounds of communication, compared to the best algorithm with unbiased compressors but without independence.

% \vspace{-0.25mm}
\noindent \textbf{Total communication cost.} 
Substituting Theorem \ref{thm:lower-bounds} into the TCC formulation in \eqref{eqn:L-TCC}, we can obtain the TCC of algorithms using independent unbiased compression.
Comparing this with algorithms without compression, such as Nesterov's accelerated algorithm, and using the relations in \eqref{eqn:vhinfafa}, we can demonstrate that independent unbiased compression can reduce the total communication cost. Such reduction can be up to $
\Theta(\sqrt{\min\{n, \kappa\}})$ by using compressors  with $\omega \gtrsim \sqrt{\min\{n,\kappa\}}$, \eg, random-$s$ sparsification with $s\lesssim d/\sqrt{\min\{n,\kappa\}}$.

\subsection{ADIANA: a unified optimal algorithm}
% By comparing existing algorithms using independent unbiased compression such as DIANA, ADIANA, and CANITA to our established lower bounds in Table \ref{tab:unbiased}, it becomes clear that there is a noticeable gap between their
% convergence complexity and our established lower bounds. This gap could indicate the algorithms are suboptimal, but it could also be that our lower bounds are loose. As a result, our claim that using independent unbiased compression saves the total communication cost up to $\cO(\sqrt{\min\{n,\kappa\}})$ times is not well grounded yet. In this section, we address this issue by revisting ADIANA \cite{li2020acceleration} (Algorithm \ref{alg:ADIANA}) and introducing its new convergence results in both strongly- and generally-convex cases.
By comparing existing algorithms using independent unbiased compression, such as DIANA, ADIANA, and CANITA, to our established lower bounds in Table \ref{tab:unbiased}, it becomes clear that there is a noticeable gap between their convergence complexities and our established lower bounds.
This gap could indicate that these algorithms are suboptimal, but it could also mean that our lower bounds are loose. As a result, our claim that using independent unbiased compression reduces the total communication cost by up to $\Theta(\sqrt{\min\{n,\kappa\}})$ times is not well-grounded yet.
In this section, we address this issue by revisiting ADIANA \cite{li2020acceleration} (Algorithm \ref{alg:ADIANA}) and providing novel and refined convergence results in both strongly- and generally-convex cases.

 In the strongly-convex case, we refine the analysis of \cite{li2020acceleration} by: (i) adopting new parameter choices where  the initial scalar $\theta_2$ is delicately chosen instead of being fixed as  $\theta_2=1/2$ in \cite{li2020acceleration}, (ii) balancing different terms in the construction of the Lyapunov function. While we do not modify the algorithm design, our technical ingredients are necessary to obtain an improved convergence rate. In the generally-convex case, we provide the {\em first} convergence result for ADIANA, which is missing in literature to our knowledge. In both strongly- and generally-convex cases, our convergence results (nearly) match the lower bounds in Theorem \ref{thm:lower-bounds}. This verifies the tightness of our lower bounds for both the convergence complexity and the total communication cost. In particular, our results are: 

{
\begin{algorithm}[t]
\caption{ADIANA}
\KwIn{Scalars $\{\theta_{1,k}\}_{k=0}^{T-1}$, $\theta_2$, $\alpha$, $\beta$, $\{\gamma_k\}_{k=0}^{T-1}$, $\{\eta_k\}_{k=0}^{T-1}$, $p$.}
Initialize $w^0=x^0=y^0=z^0=h^0=h_i^0,\,\forall \,1\leq i\leq n$.

\For{$k=0,\cdots,T-1$}{
\textbf{On server:}\\
\quad Update $x$: $x^k=\theta_{1,k}z^k+\theta_2w^k+(1-\theta_{1,k}-\theta_2)y^k$ and broadcast to all workers\; 
\textbf{On all workers in parallel:}\\
\quad Compress the increment of local gradient $m_i^k=C_i(\nabla f_i(x^k)-h_i^k)$ and send to the server\;
\quad Compress the increment of local gradient $c_i^k=C_i(\nabla f_i(w^k)-h_i^k)$ and send to the server\;
\quad Update local shift $h_i^{k+1}=h_i^k+\alpha c_i^k$\;%\tcc*{send compressed message to server}
\textbf{On server:}\\
\quad Aggregate received compressed message $g^k=h^k+\frac{1}{n}\sum_{i=1}^nm_i^k$\;
\quad Update shift $h^{k+1}=h^k+\alpha\frac{1}{n}\sum_{i=1}^nc_i^k$\;
\quad Apply gradient descent $y^{k+1}=x^k-\eta_k g^k$\; 
\quad Update $z$: $z^{k+1}=\beta z^k+(1-\beta)x^k+\frac{\gamma_k}{\eta_k}(y^{k+1}-x^k)$\;
\quad Update $w$: $w^{k+1}=\begin{cases}y^k,&\mbox{with probability }p,\\w^k,&\mbox{with probability }1-p;\end{cases}$\\
% \quad \textcolor{blue}{Update $x$: $x^{k+1}=\theta_{1,k+1}z^{k+1}+\theta_2w^{k+1}+(1-\theta_{1,k+1}-\theta_2)y^{k+1}$ and send to all workers}\;\xh{use supscript $k$ for consistency}
}
{\textbf{Output: }$\hat{x}=w^T$ if $f(w^T)\leq f(y^T)$ else $\hat{x}=y^T$.}
% \vspace{-3mm}
\label{alg:ADIANA}
\end{algorithm}
}

\begin{theorem}\label{thm:upper-bounds}
    For any $L\geq \mu \geq 0$, $\Delta\geq 0$, $n \ge 1$,  and precision $\epsilon>0$, the following results hold. See the proof in Appendix \ref{app:upper}.
    \begin{itemize}
    % \vspace{-1mm}
    \item \textbf{Strongly-convex: }
    If $\mu>0$, by setting parameters $\eta_k\equiv\eta={n\theta_2}/{(120\omega L)}$, $\theta_{1,k}\equiv\theta_1=1/(3\sqrt{\kappa})$, $\alpha=p=1/(1+\omega)$, $\gamma_k\equiv\gamma=\eta/(2\theta_1+\eta\mu)$, $\beta=2\theta_1/(2\theta_1+\eta\mu)$, and %$\theta_2=\min\{1/3\sqrt{n},\omega/3n\}$,
$\theta_2=1/(3\sqrt{n}+3n/\omega)$,
 ADIANA requires
\begin{equation}
    \cO\left(\left(\omega+\left(1+\frac{\omega}{\sqrt{n}}\right)\sqrt{\kappa}\right)\ln\left(\frac{L\Delta}{\epsilon}\right)\right)
\end{equation}
 rounds of communication to reach $\EE[f(\hat{x})]-\min_{x}f(x)\le\epsilon$.
    % \vspace{-1mm}
        \item \textbf{Generally-convex: }If $\mu=0$, by setting parameters $\alpha=1/(1+\omega)$, $\beta=1$, $p=\theta_2=1/(3(1+\omega))$, $\theta_{1,k}=9/(k+27(1+\omega))$, $\gamma_k=\eta_k/(2\theta_{1,k})$, and
\begin{equation*}
\eta_k=\min\left\{\frac{k+1+27(1+\omega)}{9(1+\omega)^2(1+27(1+\omega))L},\frac{3n}{200\omega(1+\omega)L},\frac{1}{2L}\right\},
\end{equation*}
 ADIANA requires
\begin{equation}
    \cO\left((1+\omega)\sqrt[3]{\frac{L\Delta}{\epsilon}}+\left(1+\frac{\omega}{\sqrt{n}}\right)\sqrt{\frac{L\Delta}{\epsilon}}\right)
\end{equation}
 rounds of communication to reach $\EE[f(\hat{x})]-\min_{x}f(x)\le\epsilon$.
\end{itemize}
\end{theorem}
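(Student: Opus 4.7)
The plan is to attack both parts of Theorem \ref{thm:upper-bounds} through a single Lyapunov framework tailored to ADIANA, then specialize the parameters and telescoping to each regime. For the strongly-convex case I will work with a potential of the form
\begin{equation*}
\Psi^{k}=\tfrac{1}{\gamma}\EE\|z^{k}-x^{\star}\|^{2}+c_{1}\EE[f(y^{k})-f^{\star}]+c_{2}\EE[f(w^{k})-f^{\star}]+c_{3}\,\EE[\sigma_{k}^{2}],
\end{equation*}
where $\sigma_{k}^{2}=\tfrac{1}{n}\sum_{i=1}^{n}\|h_{i}^{k}-\nabla f_{i}(x^{\star})\|^{2}$ tracks the shift error, and I will tune $(c_{1},c_{2},c_{3})$ so that $\Psi^{k+1}\le(1-\rho)\Psi^{k}$ with $\rho$ matching the advertised rate. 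The one-step decrease needs four ingredients stitched together: (i) a quadratic-growth bound for $\|z^{k+1}-x^{\star}\|^{2}$ coming from the $z$-update, standard in accelerated schemes, which leverages $\EE g^{k}=\nabla f(x^{k})$ and the strong convexity of $f$; (ii) a descent bound on $f(y^{k+1})$ using $L$-smoothness and the variance estimate on $g^{k}$; (iii) the probabilistic recursion $\EE[f(w^{k+1})-f^{\star}]=(1-p)(f(w^{k})-f^{\star})+p(f(y^{k})-f^{\star})$; and (iv) a contraction on $\sigma_{k}^{2}$ obtained from the shift update $h_{i}^{k+1}=h_{i}^{k}+\alpha c_{i}^{k}$.

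\textbf{Where independence enters.} The gain from Assumption \ref{ass:indpendent} appears exclusively in step (ii): because $\{C_{i}\}_{i=1}^{n}$ are mutually independent and unbiased, the averaged compression error obeys $\EE\|g^{k}-\nabla f(x^{k})\|^{2}\le\tfrac{\omega}{n}\cdot\tfrac{1}{n}\sum_{i}\|\nabla f_{i}(x^{k})-h_{i}^{k}\|^{2}$, which is an $n$-fold improvement over the worst-case bound $\omega\cdot(\cdot)$ available without independence. After decomposing $\nabla f_{i}(x^{k})-h_{i}^{k}$ around $\nabla f_{i}(x^{\star})$, this error is controlled by $L(f(x^{k})-f^{\star})+\sigma_{k}^{2}$, and the factor $\omega/n$ in front is precisely what produces the $\omega/\sqrt{n}$ term in the final rate.

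\textbf{The refined parameter balance.} The main obstacle, and the point where the argument departs from \cite{li2020acceleration}, is choosing $\theta_{2}$ and the Lyapunov weights so that each of the four terms contracts at the same rate $\rho$. Writing out the coefficients produced by steps (i)--(iv), one finds that enforcing simultaneous contraction yields a system of inequalities roughly of the shape $\theta_{1}\lesssim\min\{\sqrt{\eta\mu},\;\theta_{2}\}$, $\eta\lesssim\theta_{2}/(L\omega/n)$, $p\sim\alpha\sim 1/(1+\omega)$, and the Lyapunov compatibility condition $\theta_{2}\ge C\eta\omega/n+\text{(cross terms)}$. Fixing $\alpha=p=1/(1+\omega)$, $\eta=n\theta_{2}/(120\omega L)$, and solving for the largest $\theta_{1}=1/(3\sqrt{\kappa})$ compatible with this system forces $\theta_{2}=1/(3\sqrt{n}+3n/\omega)$, exactly the prescribed value; this is the delicate step, because setting $\theta_{2}=1/2$ as in the original analysis breaks the compatibility with the shift-error contraction and loses the $\omega/\sqrt{n}$ factor. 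Substituting these choices into $\rho=\min(\theta_{1},p,\alpha,\theta_{2})$ yields the claimed rate after taking logs.

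\textbf{Generally-convex case.} Here the Lyapunov template is replaced by a time-weighted analogue $\Psi^{k}=A_{k}(f(y^{k})-f^{\star})+B_{k}\|z^{k}-x^{\star}\|^{2}+D_{k}(f(w^{k})-f^{\star})+E_{k}\sigma_{k}^{2}$ with $A_{k},B_{k},D_{k},E_{k}$ chosen so that telescoping yields $\sum_{k}A_{k}\EE[f(y^{k})-f^{\star}]\lesssim\|x^{0}-x^{\star}\|^{2}$. The schedule $\theta_{1,k}=9/(k+27(1+\omega))$, $\gamma_{k}=\eta_{k}/(2\theta_{1,k})$, together with the three-way minimum defining $\eta_{k}$, will be identified by requiring the same four contraction inequalities as above to hold with time-varying weights. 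The three branches of $\eta_{k}$ correspond respectively to the accelerated descent constraint (yielding the $\sqrt{L\Delta/\epsilon}$ term), the compressor-variance constraint (producing the $(\omega/\sqrt{n})\sqrt{L\Delta/\epsilon}$ term), and the shift-error constraint (producing the cube-root term $(1+\omega)(L\Delta/\epsilon)^{1/3}$; this is where the exponent $1/3$ comes from, because the relevant telescoped inequality is cubic in $T$). The hardest part of this half is verifying that the time-varying weights $A_{k},B_{k},D_{k},E_{k}$ can be picked monotonically so that the recursive inequality $\Psi^{k+1}\le\Psi^{k}$ holds term-by-term; I expect this to be a tedious but mechanical induction once the correct weights are guessed, and the final bound will follow by evaluating $A_{T}$ under the step-size schedule and bounding $f(\hat x)-f^{\star}\le\min\{f(y^{T}),f(w^{T})\}-f^{\star}$ via $\Psi^{T}/A_{T}$.
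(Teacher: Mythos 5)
Your high-level plan mirrors the paper's proof closely: both are four-term Lyapunov arguments combining (i) a descent inequality for $f(y^{k+1})$ from smoothness, (ii) a variance bound on $g^{k}$ that shrinks by $1/n$ under independence, (iii) the probabilistic recursion for $w^{k}$, and (iv) a contraction on the shift-error term, with the parameters chosen so all four contract at the common rate. The paper's strongly-convex Lyapunov is $\Psi^k=\lambda_{k-1}\cW^k+\tfrac{2\gamma_{k-1}\beta}{\theta_{1,k-1}}\cY^k+\cZ^k+\tfrac{10\eta_{k-1}\omega(1+\omega)\gamma_{k-1}\beta}{\theta_{1,k-1}n}\cH^k$, and its generally-convex treatment telescopes the same potential after checking term-by-term monotonicity of the time-varying coefficients — essentially what you describe. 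The claimed mechanism by which independence enters (the $\omega/n$ factor in the $g^{k}$ variance) matches the paper's Lemma \ref{lem:345}, equation \eqref{eq-ubscl400}.

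There is one substantive difference and a couple of small but nontrivial inaccuracies worth noting. First, you anchor the shift error at $x^\star$ via $\sigma_k^2=\tfrac{1}{n}\sum_i\|h_i^k-\nabla f_i(x^\star)\|^2$, whereas the paper tracks $\cH^k=\tfrac{1}{n}\sum_i\|h_i^k-\nabla f_i(w^k)\|^2$. Both anchors are workable — with $\alpha=1/(1+\omega)$ the cross term in the one-step variance recursion cancels exactly and your $\sigma_k^2$ inherits a clean $\EE[\sigma_{k+1}^2]\le(1-\alpha)\sigma_k^2+\alpha\cdot\tfrac{1}{n}\sum_i\|\nabla f_i(w^k)-\nabla f_i(x^\star)\|^2$ — but the paper's anchor at $w^k$ makes the residual land on $\cG_w^k,\cG_y^k$ which are already available in the descent inequality (Lemma \ref{lm-ubscl1}) and can be subtracted there. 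Your choice pushes the residual onto $\cW^k$ instead, which changes where the balancing has to happen; it is not wrong, but you would need to re-derive the compatibility system, so asserting that it "forces $\theta_2=1/(3\sqrt{n}+3n/\omega)$" is not justified at the level of your sketch. Second, you have the generally-convex branch attributions swapped: the first (linearly growing in $k$) branch of $\eta_k$ is the shift-error branch and produces the cube-root term $(1+\omega)(L\Delta/\epsilon)^{1/3}$, while the last branch $1/(2L)$ is the plain smoothness constraint and produces the $\sqrt{L\Delta/\epsilon}$ term. Finally, the paper's key technical refinement — choosing $\lambda_k$ as the specific root of a quadratic so that the $\cW^k$ and $\cY^k$ coefficients both contract at $1-p\theta_1/(p+\theta_1+\theta_2)$, rather than settling for $\theta_2=1/2$ — is not visible in your "tune $(c_1,c_2,c_3)$" step; that balancing is exactly what closes the gap to the lower bound, and any write-up needs to exhibit it explicitly rather than assert its existence. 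The generally-convex case as written ("mechanical induction once the correct weights are guessed") is too thin to be assessed; the paper's proof is an explicit verification of $\EE_k[\Psi^{k+1}]\le\Psi^k$ under the stated time-varying weights, and that verification is where the work is.
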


\noindent \textbf{Tightness of our lower bounds.}
Comparing the upper bounds in Theorem \ref{thm:upper-bounds} with the lower bounds in Theorem \ref{thm:lower-bounds}, ADIANA attains the lower bound in the strongly-convex case up to a $\ln(\kappa)$ factor, implying the tightness of our lower bound and ADIANA's optimality.
In the generally-convex case, the upper bound matches the lower bound's dominating term $(1+\omega/\sqrt{n})\sqrt{L\Delta/\epsilon}$ but mismatches the smaller term. This shows the tightness of our lower bound and ADIANA's optimality in the high-precision regime $\epsilon<L\Delta \left(\frac{1+\omega/\sqrt{n}}{1+\omega}\right)^{6}$. 
Our refined rates for ADIANA are state-of-the-art among existing algorithms using independent unbiased compression.

\section{Experiments}\label{sec-experiments}
% \vspace{-1.5mm}
% In this section, we empirically compared ADIANA with other distributed algorithms with (\eg, DIANA, EF21, CANITA, see Table \ref{tab:unbiased}) or without (\eg distributed Nesterov algorithm) communication compression. We conduct experiments both on quadratic problem (including constructed function and least squares problem with synthetic data) and on logistic regression problem (with real data). The problems all follow the form \eqref{eqn:prob}, with different $f_i$'s.
% In this section, we empirically compared ADIANA with other distributed algorithms  that conduct (DIANA \cite{li2020acceleration}, EF21 \cite{Richtrik2021EF21AN}, and CANITA \cite{li2021canita}) or do not conduct (distributed Nesterov's accelerated algorithm) communication compression. We conduct experiments on both quadratic problems (including constructed functions and least squares problems with synthetic data) and logistic regression problems (with real data). All problems follow the form in \eqref{eqn:prob}, with different $f_i$'s. 
In this section, we empirically compare ADIANA with  DIANA \cite{li2020acceleration}, EF21 \cite{Richtrik2021EF21AN}, and CANITA \cite{li2021canita} using unbiased compression, as well as Nesterov's accelerated algorithm \cite{nesterov2003introductory} which is an optimal algorithm when no compression is employed. We conduct experiments on
%  both quadratic problems (strongly-convex) including a delicately constructed instance and 
least-square problems (strongly-convex) with synthetic datasets as well as logistic regression problems (generally-convex) with real datasets. 
In all experiments, we measure the total communicated bits sent by a single worker, which is calculated through 
\emph{communication rounds to acheive an $\epsilon$-accurate  solutions} $\times$ \emph{per-round communicated bits}. 
All curves are averaged over $20$ trials with the region of standard deviations depicted.
% Due to the space limit, 
We only provide results with random-$s$ compressors here. 
More experimental results can be found in Appendix \ref{app:expri-add}.

\noindent\textbf{Least squares.} Consider a distributed least-square problem \eqref{eqn:prob} with $f_i(x):=\frac{1}{2}\|A_ix-b_i\|^2$,
where $A_i\in\RR^{M\times d}$ and $b_i\in\RR^M$ are randomly generated. We set $d=20$, $n=400$, and $M=25$, and generate $A_i$'s by randomly generating a Gaussian matrix in $\RR^{nM\times d}$, then modify its condition number to $10^4$ through the SVD decomposition, and finally distribute its rows to all $A_i$. We use  independent random-$1$ compressors for communication compression.
The results are depicted in Fig.~\ref{fig:experiment} (left) where we observe ADIANA beats all baselines in terms of the total communication cost. We do not compare with CANITA since it does not have theoretical guarantees for strongly-convex problems.

\begin{figure}[t]
\includegraphics[width=.35\textwidth]{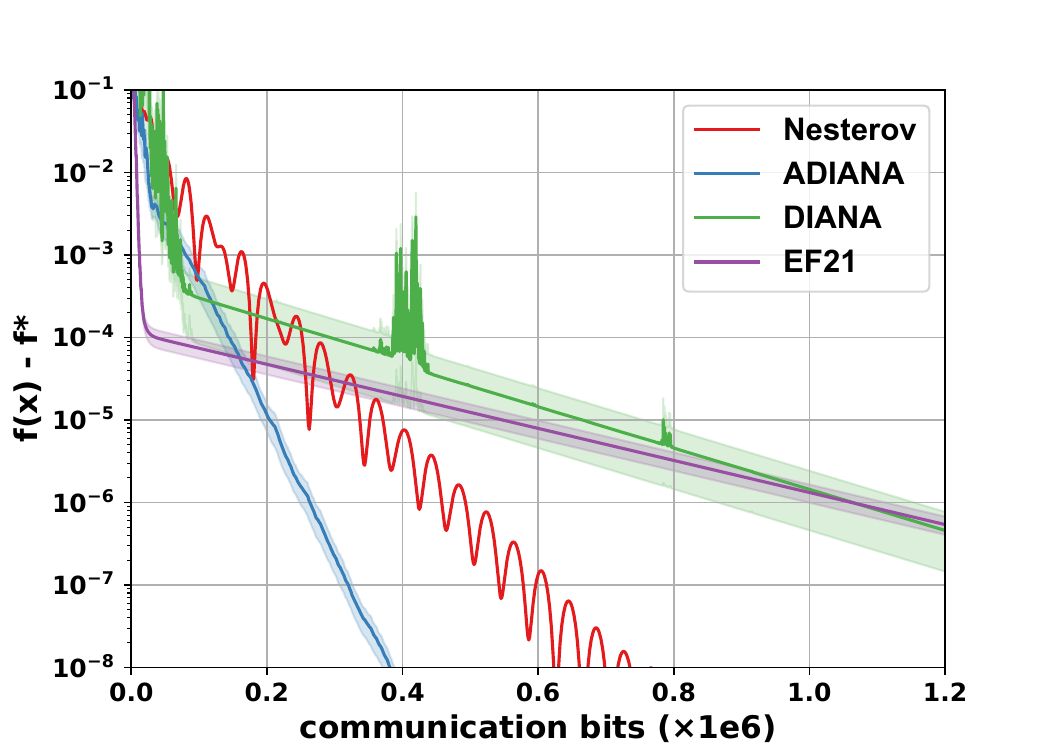}
\hspace{-4mm}
\includegraphics[width=.35\textwidth]{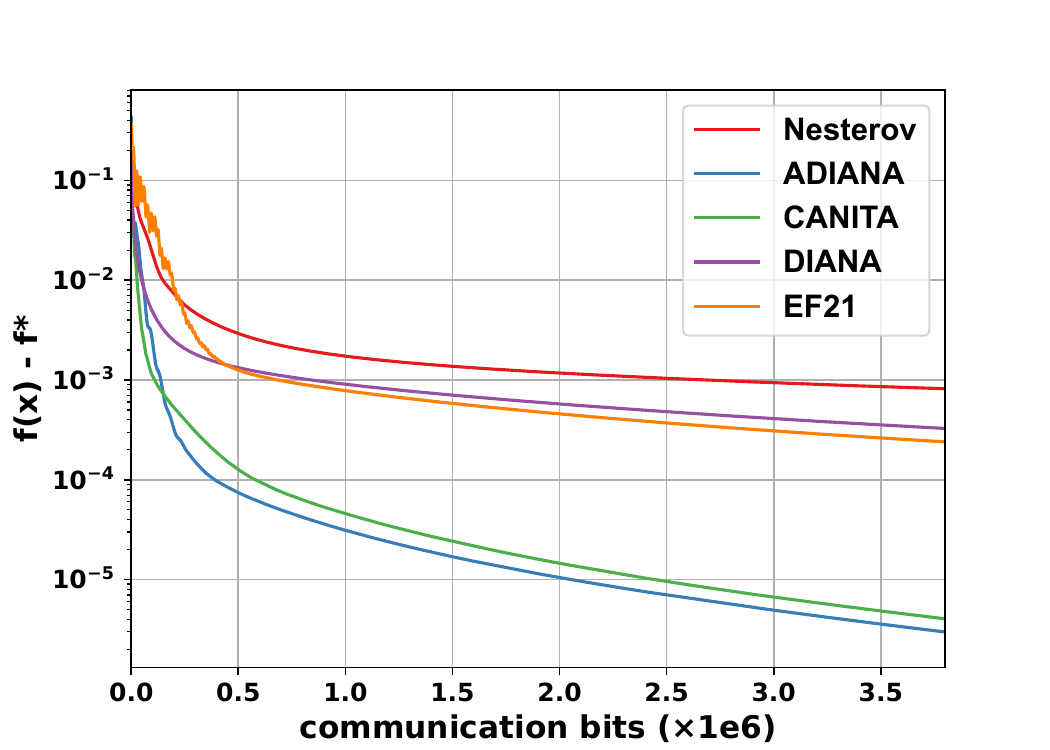}
\hspace{-4mm}
\includegraphics[width=.35\textwidth]{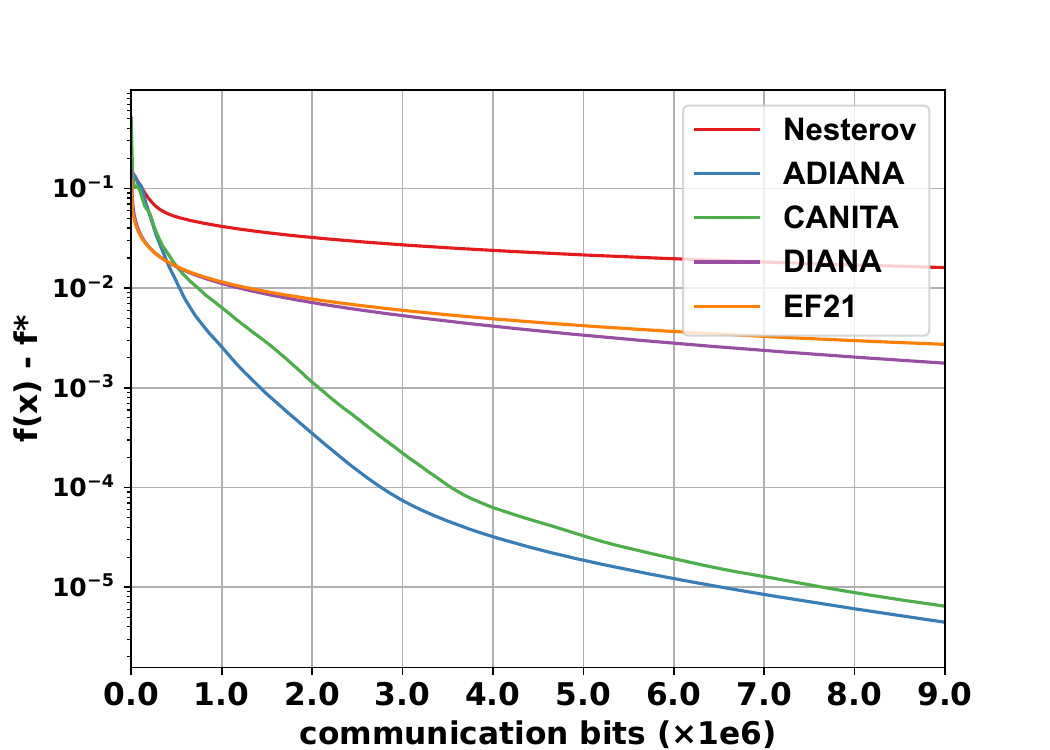}

\caption{\small Convergence results of various distributed algorithms on a synthetic least squares problem (left),  logistic regression problems with dataset \texttt{a9a} (middle) and \texttt{w8a} (right). The $y$-axis represents $f(\hat{x})-f^\star$ and the $x$-axis indicates the total communicated bits sent by per worker. 
\label{fig:experiment}
} 
\end{figure}

\noindent\textbf{Logistic regression.} Consider a distributed logistic regression problem \eqref{eqn:prob} with 
$
f_i(x):=\frac{1}{M}\sum_{m=1}^M\ln(1+\exp(-b_{i,m}a_{i,m}^\top x))$,
where $\{(a_{i,m},b_{i,m})\}_{1\le i\le n,1\le m\le M}$ are datapoints in \texttt{a9a} and \texttt{w8a} datasets from LIBSVM \cite{chang2011libsvm}. We set $n=400$ and choose independent  random-$\lfloor d/20\rfloor$ compressors for algorithms with compressed communication. The results are as shown in Fig.~\ref{fig:experiment} (middle and right). Again, we observe that ADIANA outperforms all baselines.

\noindent\textbf{Influence of independence in unbiased compression.} 
We also construct a delicate quadratic problem to validate the role of independence in unbiased compression to save communication, see Fig.~\ref{fig:IUC-saves-comm}. Experimental details are in Appendix \ref{app:expri}. We observe that ADIANA with independent random-$s$ compressors saves more bits than Nesterov's accelerated algorithm while random-$s$ compressors of shared randomness do not. Furthermore, more aggresive compression, \ie, a larger $\omega$, saves more communication costs in total. These observations are consistent with our theories implied in \eqref{eqn:vhinfafa}. 

% Through all experiments, we consider total communication bits sent from each worker as the communication cost, which is calculated through \emph{number of iterations}$\times$\emph{communication rounds per iteration}$\times$\emph{transmitted bits per-communication}. Among the algorithms we compare, ADIANA and CANITA communicate twice per iteration while others communicate only once. The transmitted bits per-communication are $64d$ for a \texttt{float64} gradient in non-compression algorithms, and $64s+\lceil\log_2\tbinom{d}{s}\rceil$ bits for a random-$s$ compressor 
% %(see Example \ref{eg:rands})
% , as well as its contractive version (where we do not multiply $d/s$) used in EF21. All hyperparameters are set by applying Bayes Optimization\cite{bayesopt} around their theoretical values in the first few iterations. The results in Figure \ref{fig:experiment} show superior performance of ADIANA, which converges with much less communication cost than non-compression and other compressed algorithms. CANITA also behaves competitively in the generally-convex problems. 

\section{Conclusion}
This paper clarifies that unbiased compression alone cannot save communication, but this goal can be achieved by further assuming mutual independence between compressors. We also demonstrate the saving can be up to $\Theta(\sqrt{\min\{n,\kappa\}})$ by establishing the optimal convergence complexities and total communication costs of distributed algorithms employing independent unbiased compressors.
% However, our current analysis only limits to convex and deterministic scenarios with unbiased compressors. 
Future research can explore when and how much biased compressors can save communication in non-convex and stochastic scenarios. 

{
\small

\bibliography{reference}
\bibliographystyle{abbrv}
}

\appendix
\newpage
% \clearpage

\section{Random sparsification}\label{app:rands}
We illustrate the random-$s$ sparsification here. More examples of unbiased compressors can be found in literature \cite{Safaryan2020UncertaintyPF}.

\begin{example}[\sc Random-$s$ sparsification]\label{eg:rands}
For any $x\in\RR^d$, the random-$s$ sparsification is defined by $C(x):=\frac{d}{s}(\xi \odot x)$ where $\odot$ denotes the entry-wise product and $\xi\in\{0,1\}^d$ is a uniformly random binary vector with $s$ non-zero entries. This random-$s$ sparsification operator $C$ satisfies Assumption \ref{ass:unbiased} with $\omega = d/s-1$. When each entry of the input $x$ is represented with $r$ bits, random-$s$ sparsification compressor takes 
$rs$ bits to transmit $s$ entries and $\log_2\binom{d}{s}$ bits to transmit the indices of $s$ transmitted entries, resulting in a total $\frac{rd}{1+\omega}+\log_2\binom{d}{s}$ bits in each communication round, see \cite[Table 1]{Safaryan2020UncertaintyPF}. 
\end{example}

% \begin{example}[\sc Stochastic quantization]\label{eg:quant}
% For any $x\in\RR^d$, the random-$s$ sparsification is defined by $C(x):=\frac{d}{s}(\xi \odot x)$ where $\odot$ denotes the entry-wise product and $\xi\in\{0,1\}^d$ is a uniformly random binary vector with $s$ nonzero entries. This random-$s$ sparsification operator $C$ satisfies \eqref{eqn:omega} with $\omega = d/s-1$. 
% \end{example}

\section{Proof of Proposition \ref{prop:bit-lower}}\label{app:tcc}
We first recall a result proved by \cite{Safaryan2020UncertaintyPF}.
\begin{lemma}[\cite{Safaryan2020UncertaintyPF}, Theorem 2]\label{lem:nvisdnvs}
    Let $C:\RR^d\to\RR^d$ be any unbiased compressors satisfying \ref{eqn:omega} and $b$ be the total number
of bits needed to encode the compressed vector $C(x)$ for any $x\in\RR^d$. 
If each entry of the input $x$ is represented with $r$ bits,
it holds that $\max\{\frac{\omega}{1+\omega},4^{-r}\}4^{b/d}\geq 1$.
\end{lemma}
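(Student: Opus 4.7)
The plan is to show $\max\{\omega/(1+\omega),4^{-r}\}\cdot 4^{b/d}\geq 1$ for any $\omega$-unbiased compressor $C:\RR^d\to\RR^d$ whose output is encoded in $b$ bits when each input entry is represented in $r$ bits, via a case split on whether the required variance $\omega$ exceeds the input precision $4^{-r}$.

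\emph{Setup.} Since $C(x)$ is sent as a $b$-bit string, its range over all inputs lies in a fixed codebook $\cY=\{y_1,\ldots,y_N\}$ with $N\leq 2^b$, and for each input $x$ we write $p_k(x):=\Pr[C(x)=y_k]$. The hypotheses then supply: (i) unbiasedness $\sum_k p_k(x)\,y_k = x$; (ii) variance $\sum_k p_k(x)\|y_k-x\|^2\leq \omega\|x\|^2$; and (iii) combining (i) and (ii) via $\EE\|C(x)\|^2 = \|x\|^2+\EE\|C(x)-x\|^2$, the second-moment bound $\sum_k p_k(x)\|y_k\|^2\leq (1+\omega)\|x\|^2$.

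\emph{Regime 1: $\omega/(1+\omega)\geq 4^{-r}$.} The goal is $4^{b/d}\geq (1+\omega)/\omega$. I would use a rate-distortion-style argument: feed the compressor a probe input $x\sim\cN(0,\sigma^2 I_d)$ and bound the mutual information $I(x;C(x))$ in two ways. On one side, the data-processing inequality gives $I(x;C(x))\leq H(C(x))\leq \log_2 N\leq b$. On the other side, the identity $I(x;C(x))=h(x)-h(x\mid C(x))$ combined with the Gaussian maximum-entropy bound $h(x\mid C(x))\leq \frac{d}{2}\log_2(2\pi e\,\EE\|C(x)-x\|^2/d)$ and the variance/second-moment bounds in (ii)--(iii) yields $I(x;C(x))\geq \frac{d}{2}\log_2((1+\omega)/\omega)$. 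Rearranging gives $b\geq \frac{d}{2}\log_2((1+\omega)/\omega)$ as required.

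\emph{Regime 2: $\omega/(1+\omega)<4^{-r}$.} The goal is $b\geq rd$. Since $\omega<4^{-r}$, Chebyshev applied to the variance bound shows that for any input $x$ with $\|x\|\asymp 1$, the rounded decoding $\hat x=\text{round}_r(C(x))$ (nearest point on the $r$-bit grid) equals $x$ with probability bounded away from zero. Taking $x$ uniform on the $r$-bit grid in $[-1,1]^d$ (which contains $\Theta(2^{rd})$ points) and invoking Fano's inequality yields $b\geq H(C(x))\geq I(x;C(x))\geq (1-o(1))\log_2(2^{rd})$, hence $b\geq rd$.

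\emph{Main obstacle.} The delicate step is obtaining the sharp factor $(1+\omega)/\omega$ in Regime 1 instead of the weaker $1/\omega$ that a naive covering/Chebyshev argument yields. The $(1+\omega)$ improvement must draw on the second-moment bound (iii)---which is an algebraic consequence of \emph{unbiasedness}, not merely of the variance inequality---and likely requires either a judicious (possibly non-Gaussian) choice of probe distribution that saturates the second-moment constraint, or an entropy-power-inequality manipulation coupling the input and output variances. A secondary concern is patching the two regimes cleanly at the threshold $\omega/(1+\omega)=4^{-r}$ so that the stated $\max$ form holds continuously in both cases.
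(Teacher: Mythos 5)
Your Regime 1 is where the entire content of the lemma lies, and it is exactly the step you do not have. The computation you sketch --- $I(x;C(x))=h(x)-h(x\mid C(x))$ with a Gaussian probe and the max-entropy bound $h(x\mid C(x))\le\tfrac d2\log_2(2\pi e\,\omega\sigma^2)$ --- yields only $I\ge\tfrac d2\log_2(1/\omega)$, which is negative, hence vacuous, whenever $\omega\ge1$; but $\omega\gtrsim1$ is precisely the regime in which this paper applies the lemma (Proposition \ref{prop:bit-lower} needs $b=\Omega(d/(1+\omega))$, which comes from $\log_2\tfrac{1+\omega}{\omega}\approx\tfrac{1}{\omega\ln2}$ for large $\omega$). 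You assert the stronger bound $I\ge\tfrac d2\log_2\tfrac{1+\omega}{\omega}$ and then, in your ``main obstacle'' paragraph, concede that you do not know how to obtain the $(1+\omega)$ factor, offering only possible directions (EPI, a non-Gaussian probe). So the central inequality is asserted, not proved. Note also that the paper itself does not prove this lemma; it imports it from Safaryan et al., and the known proof there is not information-theoretic at all: unbiasedness together with the $\omega$-variance bound gives, by a one-line computation, $\EE\big\|\tfrac{1}{1+\omega}C(x)-x\big\|^2\le\tfrac{\omega}{1+\omega}\|x\|^2$, so for every unit-norm $x$ some scaled codeword $\tfrac{1}{1+\omega}y_k$ (with at most $2^b$ codewords in total) lies within distance $\sqrt{\omega/(1+\omega)}$ of $x$; a covering/cap-counting bound on the sphere, with the covering radius floored at the precision $2^{-r}$ (whence the $4^{-r}$ term in the max), then forces $2^b\ge\big(\max\{\tfrac{\omega}{1+\omega},4^{-r}\}\big)^{-d/2}$. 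The factor $(1+\omega)$ you were hunting for comes from this elementary rescaling of the unbiased compressor into a contractive one, not from any refinement of the rate-distortion bound.

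Regime 2 also does not go through as written. When $\omega$ is just below the threshold $4^{-r}/(1-4^{-r})$, the error budget for a typical probe $x\in[-1,1]^d$ is $\omega\|x\|^2\asymp4^{-r}d$, i.e.\ on the order of $d$ times the squared grid spacing; a compressor may perturb a constant fraction of coordinates by a full grid step, so nearest-grid-point decoding recovers $x$ with probability tending to zero as $d\to\infty$ (per coordinate, Chebyshev with variance $\asymp4^{-r}$ against threshold $2^{-r-1}$ gives a trivial bound), and your claim that $\hat x=x$ with probability bounded away from zero is unsupported. Even granting a constant success probability, Fano yields only $b\ge(1-P_e)rd-1$, i.e.\ $b=\Omega(rd)$ with an unspecified constant, whereas the max-form of the lemma in this regime requires $b\ge rd$ itself. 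Both regimes are handled uniformly by the covering argument above, which is the missing idea.
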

Using Lemma \ref{lem:nvisdnvs}, when $\omega/(1+\omega )\leq 4^{-r}$, \ie, $\omega \leq (4^r-1)^{-1}\leq 1/3$,
we have $(1+\omega)=\Theta(1)$ and $b\geq rd =\Omega_r(d/(1+\omega))$, where $r$ is regarded as a constant in $\Omega_r(\cdot)$. 
When $\omega/(1+\omega )\geq 4^{-r}$, we have 
\begin{align}
    b\geq d\log_4(1+\omega^{-1})=d\ln(1+\omega^{-1})/\ln(4)\geq d\frac{\omega^{-1}}{\ln(4)(1+\omega^{-1})}=\Omega_r\left(\frac{d}{1+\omega}\right),
\end{align}
where we use the inequality $\ln(1+t)\geq t/(1+t)$ with $t=\omega^{-1}\geq0$.

\section{Proof of Theorem \ref{thm:lower-bounds}}\label{app:lower}
Following  
\cite{Arjevani2019LowerBF,carmon2020lower}
, we denote the $k$-th coordinate of a vector $x\in\RR^d$ by $[x]_k$ for $k=1,\dots,d$, and 
 let $\prog(x)$ be 
\begin{equation*}
    \prog(x):=\begin{cases}
    0, & \text{if $x=0$},\\
    \max_{1\leq k\leq d}\{k:[x]_k\neq 0\},& \text{otherwise}.
    \end{cases}
\end{equation*}
Similarly, for a set of multiple points $\mathcal{X}=\{x_1,x_2,\dots\}$, we define $\prog(\mathcal{X}):=\max_{x\in\cX}\prog(x)$.
We call a function $f$ zero-chain if it satisfies
\begin{equation*}
    \prog(\nabla f(x))\leq \prog(x)+1,\quad\forall\,x\in\RR^d,
\end{equation*}
which implies that starting from $x^{0}=0$, a single gradient evaluation can only earn at most one more non-zero coordinate for the model parameters. 

Let us now illustrate the setup of distributed optimization with communication compression. For any $t\geq 1$, we consider the $t$-th communication round, which begins with the server broadcasting a vector denoted as $u^t$ to all workers. We initialize $u^1$ as $x^0$. Upon receiving the vector $u^t$ from the server, each worker performs necessary algorithmic operations, and the round concludes with each worker sending a compressed message back to the server.

We denote $v^{t}_{i}$ as the vector that worker $i$ aims to send in the $t$-th communication round before compression, and $\hat{v}^{t}_{i}$ as the compressed vector that will be received by the server, \ie, $\hat{v}^{t}_{i}=C_i(v_i^t)$. While we require communication to be synchronous among workers, we do not impose restrictions on the number of gradient queries made by each worker  within a communication round. We use $\cY_i^t$ to represent the set of vectors at which worker $i$ makes gradient queries in the $t$-th communication round, after receiving $u^t$ but before sending $\hat{v}_i^t$.

Following the above description, we now formally state the linear spanning property in the setting of centralized distributed optimization with communication compression.
\begin{definition}[\sc Linear-spanning algorithms]\label{def:lin-span}
    We say a distributed algorithm $A$ is linear-spanning if, for any $t\geq 1$, the following conditions hold:
\begin{enumerate}
    \item  The server can only send a vector  in the linear manifold spanned by all the past received messages, sent messages, \ie, $u^t\in \sspan\left(\{u^r\}_{r=1}^{t-1}\cup\{\hat{v}_i^r:1\leq i\leq n\}_{r=1}^{t-1}\right)$.
    \item Worker $i$ can only query at vectors  in the linear manifold spanned by its past received messages, compressed messages, and gradient queries, \ie, $\cY_i^t\subseteq \sspan\left(\{u^r\}_{r=1}^t \cup\{\nabla f_i(y):y\in \cY_i^{r}\}_{r=1}^{t-1}\cup\{\hat{v}_i^r\}_{r=1}^{t-1}\right)$.
    \item  Worker $i$ can only send a vector  in the linear manifold spanned by its past received messages, compressed messages, and local gradient queries, \ie, $v_i^t\in \sspan\left(\{u^r\}_{r=1}^t \cup\{\nabla f_i(y):y\in \cY_i^{r}\}_{r=1}^{t}\cup\{\hat{v}_i^r\}_{r=1}^{t-1}\right)$.
    \item  After $t$ communication rounds, the server can only output a model in the linear manifold spanned by all the past received messages, sent messages, \ie, $\hat{x}^t\in \sspan\left(\{u^r\}_{r=1}^{t}\cup\{\hat{v}_i^r:1\leq i\leq n\}_{r=1}^{t}\right)$.
\end{enumerate}
\end{definition}

In essence, when starting from $x^0=0$, the above linear-spanning property requires that any expansion of non-zero coordinates in vectors held by worker $i$ (\eg, $\cY^{t}_i$, $ v_i^{t}$) are  attributed to its past local gradient updates, local compression, or synchronization with the server.
Meanwhile, it also requires that any expansion of non-zero coordinate in vectors held, including the final algorithmic output, in the server is due to the  received compressed messages from workers. 

Without loss of generality, we assume algorithms to start from $x^{0}=0$ throughout the proofs. 
When $\{f_i\}_{i=1}^n$ are further assumed to be zero-chain, following Definition \ref{def:lin-span}, one can easily establish by induction that  for any $t\geq 1$,
\begin{align}
    \max_{1\leq r\leq t}\prog(u^r)\leq &\max_{1\leq r< t}\max_{1\leq i\leq n}\prog(\hat{v}_i^r)\label{eqn:vbcixvbncx1}\\
    % \max_{1\leq r\leq t}\prog(\cY_i^t)\leq &\max_{1\leq r< t}\max_{1\leq i\leq n}\prog(\hat{v}_i^r)+1\\
    \max_{1\leq r\leq t}\prog(v_i^t)\leq &\max_{1\leq r< t}\max\left\{\max_{1\leq i\leq n}\prog(\hat{v}_i^r),\prog(\cY_i^r)\right\}\leq \max_{1\leq r< t}\max_{1\leq i\leq n}\prog(\hat{v}_i^r)+1\label{eqn:vbcixvbncx3}\\
    \prog(\hat{x}^t)\leq &\max_{1\leq r\leq  t}\max_{1\leq i\leq n}\prog(\hat{v}_i^r)\label{eqn:vbcixvbncx4}
\end{align}

Next, we outline the proofs for the lower bounds presented in Theorem \ref{thm:lower-bounds}. For each case, we provide separate proofs for terms in the lower bound by constructing different hard-to-optimize examples, respectively. The construction of these proofs follows four steps:
\begin{itemize}
\item Constructing a set of zero-chain local functions $\{f_i\}_{i=1}^n$.
     
\item Constructing a set of independent unbiased compressors $\{C_i\}_{i=1}^n\subseteq \cU_\omega^{\rm ind}$. These compressors are delicately designed to impede algorithms from expanding the non-zero coordinates of model parameters.

\item Establishing a limitation on zero-respecting algorithms that utilize the predefined compressor with $t$ rounds of compressed communication on each worker. This limitation is based on the non-zero coordinates of model parameters.

\item Translating the above limitation into the lower bound of the complexity measure defined in equation \eqref{eqn:measure}.
\end{itemize}

While the overall proof structure is similar to that of \cite{he2023lower}, our novel construction of functions and compressors enable us to derive lower bounds for independent compressors. These lower bounds clarify the unique properties and benefits of independent compressors.

% \newpage 
% Next, we outline the proofs for the lower bounds. 
% In each case  of Theorem \ref{thm:lower-bounds}, we separately prove the term $\tilde{\Omega}(\omega\ln(1/\epsilon))$ and the rest term in the lower bound by constructing two hard-to-optimize examples, respectively. Following \cite{he2023lower}, the construction of the example is conducted in four steps: first, constructing zero-chain local functions $\{f_i\}_{i=1}^n$; second, constructing compressors $\{C_i\}_{i=1}^n\subseteq \cU_\omega^{\rm ind}$ that hamper algorithms to expand the non-zero coordinates of model parameters; third, establishing a limitation, in terms of the non-zero coordinates of model parameters, 
% for zero-respecting algorithms utilizing the predefined compression protocol 
% with $T$ rounds of compressed communication on each worker; last, translating this limitation into the lower bound of the complexity measure defined in \eqref{eqn:measure}. While the proof structure is the same as \cite{he2023lower}, the construction of functions and compressors are different in order to obtain new rates for independent compressors.

We will use the following lemma in the analysis of the third step.
% \xh{the probabilistic lemma}
\begin{lemma}[\cite{he2023lower}, Lemma 3]\label{lem:small-prob}
Given a constant $p\in[0,1]$ and 
random variables $\{B^{t}\}_{t=0}^\infty$  such that $B^{t}\leq B^{(t-1)}+1$ and  $\PP(B^{t}\leq B^{t-1}\mid \{B^{r}\}_{r=0}^{t-1})\geq 1-p$ for any $t\geq 1$, it holds for $t\geq 1/p$, with probability at least $1-e^{-1}$, that $B^{t}\leq B^{0}+ept$.
\end{lemma}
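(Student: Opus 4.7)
The plan is to reduce the problem to a standard Chernoff tail bound for a sum of conditionally Bernoulli indicators. First I would define the one-step indicator $Y_s := \mathbf{1}\{B^s > B^{s-1}\}$ for $s\ge 1$. The two assumptions then give two useful consequences: (i) the increment satisfies $B^s - B^{s-1} \le Y_s$, because the increment is $\le 0$ whenever $Y_s=0$ and at most $1 = Y_s$ whenever $Y_s=1$ (using $B^s \le B^{s-1}+1$); (ii) the conditional law satisfies $\PP(Y_s=1 \mid \{B^r\}_{r=0}^{s-1}) \le p$, directly from the hypothesis on $\PP(B^s\le B^{s-1}\mid\cdot)$. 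Summing (i) telescopically yields the key deterministic domination
\begin{equation*}
B^t - B^0 \;\le\; \sum_{s=1}^t Y_s,
\end{equation*}
so it suffices to prove $\PP\bigl(\sum_{s=1}^t Y_s > ept\bigr) \le e^{-1}$ whenever $t\ge 1/p$.

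Next I would bound the moment generating function of $\sum_{s=1}^t Y_s$ by peeling off one factor at a time using the tower property. For any $\lambda>0$, the conditional bound from (ii) gives
\begin{equation*}
\EE\!\left[e^{\lambda Y_s}\,\big|\,\{B^r\}_{r=0}^{s-1}\right] \;=\; 1 + (e^\lambda-1)\,\PP(Y_s=1\mid\cdot) \;\le\; 1 + p(e^\lambda-1) \;\le\; \exp\!\bigl(p(e^\lambda-1)\bigr).
\end{equation*}
Iterating (conditioning on $\{B^r\}_{r=0}^{s-1}$ for $s=t,t-1,\ldots,1$) produces $\EE\bigl[\exp\bigl(\lambda\sum_{s=1}^t Y_s\bigr)\bigr] \le \exp\!\bigl(tp(e^\lambda-1)\bigr)$. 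This is the standard super-martingale version of the Chernoff setup and requires no independence among the $Y_s$.

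Finally I would apply Markov's inequality and optimize $\lambda$. For any $\lambda>0$,
\begin{equation*}
\PP\!\left(\sum_{s=1}^t Y_s > ept\right) \;\le\; \exp\!\bigl(tp(e^\lambda-1) - \lambda\,ept\bigr).
\end{equation*}
The exponent is minimized at $\lambda=1$, giving $tp(e-1) - ept = -tp$, hence the tail is at most $e^{-tp}\le e^{-1}$ whenever $t\ge 1/p$. Combined with the domination $B^t - B^0 \le \sum Y_s$, this delivers the claim $\PP(B^t \le B^0 + ept) \ge 1 - e^{-1}$.

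I do not expect a serious obstacle: the argument is a textbook conditional Chernoff bound, and the only nontrivial observation is the inequality $B^s - B^{s-1} \le Y_s$, which hinges crucially on the assumption $B^s \le B^{s-1}+1$ so that positive jumps are capped at one. The particular constant $e$ in $ept$ is exactly what comes out of optimizing the Chernoff exponent at $\lambda=1$, matching the stated bound tightly.
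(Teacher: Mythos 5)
Your proof is correct. Note that this paper does not prove the lemma at all: it is imported verbatim from the cited reference (Lemma 3 of \cite{he2023lower}), so there is no in-paper argument to compare against. Your route is the natural self-contained one: the domination $B^t - B^0 \le \sum_{s=1}^t Y_s$ with $Y_s = \mathbf{1}\{B^s > B^{s-1}\}$ uses the cap $B^s \le B^{s-1}+1$ exactly as needed, the tower-property peeling gives $\EE\bigl[\exp\bigl(\lambda \sum_{s=1}^t Y_s\bigr)\bigr] \le \exp\bigl(tp(e^{\lambda}-1)\bigr)$ without any independence assumption, and optimizing the Chernoff exponent at $\lambda = 1$ yields the tail bound $e^{-tp} \le e^{-1}$ for $t \ge 1/p$, which is precisely where the constant $e$ in $ept$ comes from (for $p=0$ the hypothesis $t\ge 1/p$ is vacuous, so no case is lost). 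All conditioning steps are legitimate since $\sum_{s=1}^{t-1} Y_s$ is measurable with respect to $\sigma(\{B^r\}_{r=0}^{t-1})$.
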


\subsection{Strongly-convex case}\label{app:lower-sconvex}

Below, we present two examples, each of which corresponding to a lower bound $LB_m$ for $T_\epsilon$. We integrate the two lower bounds together and use the inequality
\begin{align}
T_\epsilon\geq \max_{1\leq m\leq 2}\{LB_m\}=\Omega\left(LB_1+LB_2\right)\label{eqn:7z6qgba097}
\end{align}
to accomplish the lower bound for strongly-convex problems in Theorem \ref{thm:lower-bounds}.

\noindent\textbf{Example 1.}
In this example, we prove the lower bound $\Omega((1+\omega)(1+\sqrt{\kappa/n})\ln\left({\mu\Delta}/{\epsilon}\right))$.

\vspace{2mm}
\noindent\noindent (Step 1.) We assume the variable $x\in\ell_2\triangleq\{([x]_1,[x]_2,\dots,):\sum_{r=1}^{\infty}[x]_r^2<\infty\}$ to be infinitely dimensional and square-summable for simplicity. It is easy to adapt the argument for finitely dimensional variables as long as the dimension is proportionally larger than $t$. Let $M $ be
\begin{align*}
    M=\left[\begin{array}{ccccc}
2 & -1 & & &\\
-1 & 2 & -1 & &  \\
& -1 & 2 & -1 &   \\
& & \ddots & \ddots & \ddots 
\end{array}\right]\in\RR^{\infty\times \infty},
\end{align*}
then it is easy to see $0\preceq M\preceq 4I$. Let $\{f_i\}_{i=1}^n$ be as follows
\begin{align}
    f_i(x)=\begin{cases}
     \frac{\mu}{2}\|x\|^2+\frac{L-\mu}{4}\sum_{r\geq 0}([x]_{nr+i}-[x]_{nr+i+1})^2,&\text{if }1\leq i\leq n-1,\\
    \frac{\mu}{2}\|x\|^2+\frac{L-\mu}{4}\left([x]_1^2+\sum_{r\geq 1}([x]_{nr}-[x]_{nr+1})^2-2\lambda [x]_{1}\right),&\text{if }i=n.
    \end{cases}
\end{align}
where $\lambda\in\RR\backslash \{0\}$ is to be specified.
It is easy to see that $\sum_{r\geq 0}([x]_{nr+i}-[x]_{nr+i+1})^2$ and $[x]_1^2+\sum_{r\geq 0}([x]_{nr}-[x]_{nr+1})^2-2\lambda [x]_{1}$ are convex and $4$-smooth.
Consequently, all $f_i$s are $L$-smooth and $\mu$-strongly convex.
More importantly, it is easy to verify that all $f_i$s defined above are zero-chain functions and satisfy
\begin{align}\label{eqn:jvisnvsdfads}
    \prog(\nabla f_i(x))
    \begin{cases}
    =\prog(x) +1,&\text{if }\prog(x)\equiv i \mod n,\\
    \leq \prog(x), &\text{otherwise}.
    \end{cases}
\end{align}
We further have $f(x)=\frac{1}{n}\sum_{i=1}^nf_i(x)=\frac{\mu}{2}\|x\|^2+\frac{L-\mu}{4n}\left(x^\top Mx-2\lambda [x]_1\right)$.
For the functions defined above, we also establish that 
\begin{lemma}\label{lem:sc-communi}
Let $\kappa \triangleq L/\mu\geq 1$, it holds for any $x$ that,
\begin{align*}
f(x)-\min_{x}f(x) \geq \frac{\mu}{2} \left(1-2\left(1+\sqrt{1+\frac{2(\kappa-1)}{n}}\right)^{-1}\right)^{2\prog(x)}\|x^{0}-x^\star\|^2.
\end{align*}
\end{lemma}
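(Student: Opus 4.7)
The plan is to combine strong convexity with an explicit geometric decay of the minimizer. By $\mu$-strong convexity of $f$,
\[
f(x) - f(x^\star) \geq \frac{\mu}{2}\|x - x^\star\|^2
\]
for every $x$. When $\prog(x) = k$, all entries $[x]_j$ with $j > k$ vanish, hence $\|x - x^\star\|^2 \geq \sum_{j > k}[x^\star]_j^2$. It therefore suffices to show that the coordinates of $x^\star$ decay geometrically with ratio $\rho := 1 - 2/(1+s)$, where $s := \sqrt{1 + 2(\kappa-1)/n}$, so that this tail equals $\rho^{2k}\|x^\star\|^2$; combined with $x^0 = 0$ this delivers the stated bound.

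Aggregating the local losses collapses to $f(x) = \frac{\mu}{2}\|x\|^2 + \frac{L-\mu}{4n}(x^\top M x - 2\lambda[x]_1)$. Its (unique, by strong convexity) critical point obeys $Ax^\star = \lambda e_1$ with $A := \frac{2n\mu}{L-\mu}I + M$, a symmetric tridiagonal operator whose diagonal entries are $d := \frac{2(n+\kappa-1)}{\kappa-1}$ and whose off-diagonals are $-1$. I look for the square-summable solution of the ansatz form $[x^\star]_j = c\rho^j$; substituting into the three-term recurrence $-[x^\star]_{j-1} + d[x^\star]_j - [x^\star]_{j+1} = 0$ for $j \geq 2$ forces $\rho^2 - d\rho + 1 = 0$. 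The unique root in $(0,1)$ is $\rho = (d - \sqrt{d^2-4})/2$, and the boundary row at $j=1$ pins down $c$.

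Using $s^2 - 1 = 2(\kappa-1)/n$, a short calculation gives $\frac{s-1}{s+1} + \frac{s+1}{s-1} = \frac{2(s^2+1)}{s^2-1} = \frac{2(n+\kappa-1)}{\kappa-1} = d$, so $\rho = \frac{s-1}{s+1} = 1 - \frac{2}{1+s}$, matching the rate in the lemma. Summing the geometric series then yields $\|x^\star\|^2 = c^2\rho^2/(1-\rho^2)$ and $\sum_{j>k}[x^\star]_j^2 = c^2 \rho^{2k+2}/(1-\rho^2) = \rho^{2k}\|x^\star\|^2$. Chaining this with the strong-convexity inequality and $x^0 = 0$ completes the proof.

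The principal subtlety lies in matching the spectral decay rate $(d-\sqrt{d^2-4})/2$ of the tridiagonal operator $A$ with the clean closed form $1 - 2/(1+s)$; beyond this algebraic identification, the remaining ingredients, namely uniqueness of $x^\star$, the three-term recurrence analysis, and the geometric-series tail computation, are routine.
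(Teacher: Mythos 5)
Your proof is correct and follows essentially the same route as the paper: lower-bound $f(x)-f(x^\star)$ by $\frac{\mu}{2}\|x-x^\star\|^2$ via strong convexity, truncate to $\sum_{j>\prog(x)}[x^\star]_j^2$, solve the tridiagonal recurrence to find $[x^\star]_j=\lambda q^j$ with $q$ the root of $q^2-dq+1=0$ in $(0,1)$, and sum the geometric tail. The small refinement you add—verifying $q=\tfrac{s-1}{s+1}$ by checking $q+q^{-1}=d$ directly—is a clean way to confirm the closed-form rate, but it is the same argument in substance, including the observation that the $j=1$ boundary row forces $c=\lambda$.
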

\begin{proof}
The minimum $x^\star$ of function $f$ satisfies $\left(\frac{L-\mu}{2n}M+\mu\right)x^\star-\lambda\frac{L-\mu}{2}e_1=0$, which is equivalent to
\begin{align}
    \frac{2\kappa+2n-2}{\kappa-1}[x^\star]_1-[x^\star]_2&=\lambda ,
    \nonumber \\
    -[x^\star]_{j-1}+ \frac{2\kappa+2n-2}{\kappa-1}[x^\star]_j-[x^\star]_{j+1}&=0,\quad \forall\,j\geq 2.\label{eqn:vpwnmfwq}
\end{align}
Note that 
\[
q=\frac{\kappa+n-1-\sqrt{n(2\kappa+n-2)}}{\kappa-1}=1-\frac{2}{1+\sqrt{1+\frac{2(\kappa-1)}{n}}}
\]
is the only root of the equation $q^2- \frac{2\kappa+2n-2}{\kappa-1}q+1=0$ that is smaller than $1$. Then it is straight forward to check $x^\star =\left([x^\star]_j=\lambda q^j\right)_{j\geq 1}$
satisfies \eqref{eqn:vpwnmfwq}. By the strong convexity of $f$, $x^\star$ is the unique solution. Therefore, we have that 
\begin{align*}
    \|x-x^\star\|^2\geq \sum\limits_{j=\prog(x)+1}^\infty \lambda^2q^{2j}=\lambda^2\frac{q^{2(r+1)}}{1-q^2}=q^{2r}\|x^{0}-x^\star\|^2.
\end{align*}
Finally, using the strong convexity of $f$ leads to the conclusion.
\end{proof}

Following the proof of Lemma \ref{lem:sc-communi}, we have
\begin{equation*}
    \|x^{0}-x^\star\|^2=\lambda^2\sum_{j=1}^\infty q^{2j}=\lambda^2\frac{q^2}{1-q^2}
\end{equation*}
Therefore, for any given $\Delta>0$, letting $\lambda=\sqrt{((1-q^2) \Delta)/{q^2}}$ results in $\|x^{0}-x^\star\|^2=\Delta$. Consequently, our construction ensures $\{f_i\}_{i=1}^n\in \cF_{L,\mu}^{\Delta}$.

\vspace{2mm}
\noindent (Step 2.) For the construction of $\omega$-unbiased compressors, we consider $\{C_i\}_{i=1}^n$ to be independent random sparsification compressors. Building upon Example \ref{eg:rands}, we make a slight modification: during a round of communication on any worker, each coordinate is independetly chosen with a probability of $(1+\omega)^{-1}$ to be transmitted, and if selected, its value is scaled by $(1+\omega)$ and then the scaled value is transmitted. Notably, the indices of chosen coordinates are not identical across all workers due to the independence of compressors. It can be easily verified that this construction ensures that $\{C_i\}_{i=1}^n\subseteq\cU_\omega^{\rm ind}$.
% For the construction of $\omega$-unbiased compressors, we consider $\{C_i\}_{i=1}^n$ to be independent random sparsification operators.
% Here, slightly tweaking Example \ref{eg:rands}, in a round of communication on any worker, we let each coordinate has a probability $(1+\omega)^{-1}$ to be chosen   to be transmitted, and once being chosen, its $(1+\omega)$-multiplication  will be transmitted. 
% The  indexes of chosen coordinates are not identical across all workers due to the independence of compressors. One can easily check that 
% this construction ensures $\{C_i\}_{i=1}^n\subseteq\cU_\omega^{\rm ind}$. 

\vspace{2mm}
\noindent  (Step 3.) 
Since the algorithmic output $\hat{x}^{t}$ calculated by the server lies in the linear manifold spanned by received messages, we can use \eqref{eqn:vbcixvbncx1} to obtain the following expression: 
\begin{equation}\label{eqn:gjvowfnoqwmfgqw=0s1}
    \prog(\hat{x}^{t})\leq \max_{1\leq r\leq t}\max_{1\leq i\leq n}\max\{\prog(u^r),\prog(\hat{v}^{r}_{i})\}=\max_{1\leq r\leq t}\max_{1\leq i\leq n}\prog(\hat{v}^{r}_{i})\triangleq B^{t}.
\end{equation}
We next bound $B^{t}$ with $B^{0}:=0$ by showing that $\{B^{t}\}_{t=0}^\infty$ satisfies Lemma \ref{lem:small-prob} with $p=(1+\omega)^{-1}$.

For any linear-spanning algorithm $A$, according to \eqref{eqn:jvisnvsdfads}, the worker $i$ can only attain one additional non-zero coordinate through local gradient-based updates when $\prog(\cY_i^t)\equiv i\mod n$. In other words, upon receiving messages $\{u_i^{r}\}_{r=1}^{t}$ from the server, we have
\begin{equation}\label{eqn:gjvowfnoqwmfgqw=1s}
    \prog(v_i^{t})\leq \begin{cases}
        \max_{1\leq r\leq  t}\prog(u^{r}_{i})+1\leq B^{t-1}+1,&\text{if }\prog(\cY_i^t)\equiv i\mod n,\\
        \max_{1\leq r\leq  t}\prog(u^{r}_{i})\leq B^{t-1},&\text{otherwise}.
    \end{cases}
\end{equation}
Consequently, we have 
\[
\max_{1\leq r\leq t}\prog(v_i^r)\leq \max_{1\leq r\leq t}B^{r-1}+1= B^{t-1}+1.
\]
It then follows from the definition of the constructed $C_i$ in Step 2 that $\max_{1\leq i\leq n}\prog(\hat{v}^{t}_{i})\leq \max_{1\leq i\leq n}\prog(v^{t}_{i})$, and therefore we have:
\begin{equation}
B^{t}\leq \max_{1\leq r\leq t} \max_{1\leq i\leq n}\prog({v}_i^{r})\leq B^{t-1}+1.
\end{equation}
Next, we aim to prove that $B^{t}\leq B^{t-1}+1$ with a probability of at least $\omega/(1+\omega)$. For any $t\geq 1$, let $i\in\{1,\dots,n\}$ be such that $B^{t-1}\equiv i\mod n$. Due to the property in equation \eqref{eqn:jvisnvsdfads}, during the $t$-th communication round, if $\prog(\cY_i^t)=B^{t-1}$, worker $i$ can push the number of non-zero entries forward by $1$, resulting in $\prog(v_i^t)=B^{t-1}+1$, using local gradient updates. Note that any other worker $j$ cannot achieve this even if $\prog(\cY_j^t)=B^{t-1}$ due to equation \eqref{eqn:jvisnvsdfads}.
% It then follows the definition of the constructed $C_i$ in Step 2 that  $\max_{1\leq i\leq n}\prog(\hat{v}^{t}_{i})\leq  \max_{1\leq i\leq n}\prog(v^{t}_{i})$ and thus 
% \begin{equation}
%     B^{t}\leq \max_{1\leq r\leq t} \max_{1\leq i\leq n}\prog({v}_i^{r})\leq B^{t-1}+1.
% \end{equation}
% Next, we prove $B^{t}\leq B^{t-1}+1$ with probability at least $\omega/(1+\omega)$.
% For any $t\geq 1$, let $1\le i\leq n$ be such that $B^{t-1}\equiv i\mod n$.
% Due to the property in  \eqref{eqn:jvisnvsdfads}, during the $t$-th communication round, if $\prog(\cY_i^t)=B^{t-1}$, worker $i$ that can push the number of non-zero entries  forward  by $1$ such that $\prog(v_i^t)=B^{t-1}+1$ via local gradient updates. Note that any other worker $j$ cannot do so even if $\prog(\cY_j^t)=B^{t-1}$ due to  \eqref{eqn:jvisnvsdfads}.

Therefore, to achieve $B^t=B^{t-1}+1$, it is necessary for worker $i$ to transmit a non-zero value at the  $(B^{t-1}+1)$-th entry  to the server. Otherwise, we have $B^t\leq B^{t-1}$.
However, since the compressor $C_i$ associated with worker $i$ has a probability $\omega/(1+\omega)$ to zero out the $(B^{t-1}+1)$-th entry in the $t$-th communication round, we have 
\begin{equation}
    \PP\left(B^t\leq B^{t-1}\mid \{B^{r}\}_{r=0}^{t-1}\right)\geq \omega /(1+\omega).
\end{equation}
In summary, we have shown that $B^{t}\leq B^{t-1}+1$ and $\PP(B^{t}\leq B^{t-1}\mid \{B^{r}\}_{r=0}^{t-1})\geq \omega/(1+\omega)$. 

By applying Lemma \ref{lem:small-prob}, we can conclude that for any $t\geq (1+\omega)^{-1}$, with a probability of at least $1-e^{-1}$, it holds that $B^{t}\leq et/(1+\omega)$ and hence $\prog(\hat{x}^{t})\leq et/(1+\omega)$ due to \eqref{eqn:gjvowfnoqwmfgqw=0s1}.
% . Consequently, using equation \eqref{eqn:gjvowfnoqwmfgqw=0s1}, we can conclude that $\prog(\hat{x}^{t})\leq et/(1+\omega)$.

% Using Lemma \ref{lem:small-prob}, we have that for any $t\geq (1+\omega)^{-1}$, it holds with probability at least $1-e^{-1}$ that 
% $B^{t}\leq et/(1+\omega)$ and consequently 

\vspace{2mm}
\noindent  (Step 4.) Using Lemma \ref{lem:sc-communi} and  that $\prog(\hat{x}^{t})\leq et/(1+\omega)$  with probability at least $1-e^{-1}$, we obtain
\begin{align}\label{eqn:vncxiv}
    \EE[f(\hat{x}^{t})]-\min_x f(x)\geq & \frac{(1-e^{-1})\mu\Delta}{2}\left(1-2\left(1+\sqrt{1+\frac{2(\kappa-1)}{n}}\right)^{-1}\right)^{2et/(1+\omega)}\\
    =&\Omega\left(\mu\Delta \exp\left({-\frac{4et}{(\sqrt{\kappa/n}+1)(1+\omega)}}\right)\right).
\end{align}
Therefore, to ensure $\EE[f(\hat{x}^{t})]-\min_x f(x)\leq \epsilon$,
% with 
% \[
% 0<\epsilon\leq\frac{(1-e^{-1})\mu\Delta}{2}\left(1-2\left(1+\sqrt{1+\frac{2(\kappa-1)}{n}}\right)^{-1}\right)^{2e}\triangleq c_{n,\kappa} \mu\Delta,
% \]
relation \eqref{eqn:vncxiv} implies 
the lower bound
$T_\epsilon=\Omega((1+\omega)(1+\sqrt{\kappa/n})\ln(\mu \Delta/\epsilon))$.

\noindent\textbf{Example 2.}
Considering $f_1=f$ to be homogeneous and $C_i=I$ to be a loss-less compressor for all $1\leq i\leq n$, the problem reduces to single-node convex optimization. In this case, the lower bound of  $\Omega(\sqrt{\kappa}\ln\left({\mu\Delta}/{\epsilon}\right))$ is well-known in the literature, as shown in \cite{nesterov1983method,nesterov2003introductory}. 

\vspace{2mm}
{With the two lower bounds achieved in Examples 1 and 2, we have
\begin{align}
T_\epsilon &= \Omega\Big( (1+\omega)(1+\sqrt{\kappa/n})\ln(\mu \Delta/\epsilon) + \sqrt{\kappa}\ln({\mu\Delta}/{\epsilon})\Big) \nonumber \\
&= \Omega\Big( (1+\omega +\sqrt{\kappa/n} + \omega\sqrt{\kappa/n} + \sqrt{\kappa} )\ln(\mu \Delta/\epsilon) \Big) \nonumber \\
&= \Omega\Big( (\omega  + \omega\sqrt{\kappa/n} + \sqrt{\kappa} )\ln(\mu \Delta/\epsilon) \Big)
\end{align}
which is the result for the strongly-convex case in Theorem \ref{thm:lower-bounds}. 
}

\subsection{Generally-convex case}\label{app:lower-convex}
Below, we present three examples, each of which corresponding to a lower bound $LB_m$ for $T_\epsilon$. We integrate the three lower bounds together and use the inequality
\[
T_\epsilon\geq \max_{1\leq m\leq 3}\{LB_m\}=\Omega\left(LB_1+LB_2 +LB_3\right)
\]
to accomplish the lower bound for the generally-convex case in Theorem \ref{thm:lower-bounds}.

\noindent\textbf{Example 1.}
In this example, we prove the lower bound $\Omega((1+\omega)(L\Delta/\epsilon)^{1/2})$.

\vspace{2mm}
\noindent (Step 1.) We assume variable $x \in \mathbb{R}^d$, where $d$ can be sufficiently large and will be determined later. Let $M $ denote
\begin{align*}
    M=\left[\begin{array}{ccccc}
2 & -1 & & & \\
-1 & 2 & -1 & & \\
& \ddots & \ddots & \ddots &\\
& &  -1 & 2 & -1\\
& & & -1 & 2
\end{array}\right]\in\RR^{d\times d},
\end{align*}
it is easy to verify $0\preceq M\preceq 4I$. Similar to example 1 of the strongly-convex case, we consider
\begin{align}
    f_i(x)=\begin{cases}
     \frac{L}{4}\sum_{r\geq 0}([x]_{nr+i}-[x]_{nr+i+1})^2,&\text{if }1\leq i\leq n-1,\\
    \frac{L}{4}\left([x]_1^2+\sum_{r\geq 1}([x]_{nr}-[x]_{nr+1})^2-2\lambda [x]_{1}\right),&\text{if }i=n.
    \end{cases}
\end{align}
where $\lambda\in\RR\backslash \{0\}$ is to be specified.
It is easy to see that all $f_i$s are $L$-smooth.
We further have $f(x)=\frac{1}{n}\sum_{i=1}^nf_i(x)=\frac{L}{4n}\left(x^\top Mx-2\lambda [x]_1\right)$. 
The $f_i$ functions defined above are also zero-chain functions satisfying \eqref{eqn:jvisnvsdfads}.

Following \cite{nesterov2003introductory}, it is easy to verify that the optimum of $f$ satisfies
\begin{equation*}
    x^\star = \left(\lambda\left(1-\frac{k}{d+1}\right)\right)_{1\leq k\leq d}\quad \text{and}\quad f(x^\star)=\min_x f(x)=-\frac{\lambda^2 Ld }{4n(d+1)}.
\end{equation*}
% Therefore, we have $\|x^{0}-x^\star\|^2=$
More generally, it holds for any $0\leq k\leq d$ that
\begin{equation}\label{eqn:k-optimum}
    \min_{x: \,\prog(x)\leq k}f(x)=-\frac{\lambda^2 Lk}{4n(k+1)}.
\end{equation}
Since $\|x^{0}-x^\star\|^2=\frac{\lambda^2}{(d+1)^2}\sum_{k=1}^dk^2=\frac{\lambda^2d(2d+1)}{6(d+1)}\leq \frac{\lambda^2 d}{3}$, letting $\lambda=\sqrt{3\Delta/d}$, we have $\{f_i\}_{i=1}^n\in \cF_{L,0}^{\Delta}$.

\vspace{2mm}
\noindent (Step 2.) Same as Step 2 of Example 1 of the strongly-convex case, we consider $\{C_i\}_{i=1}^n$ to be independent random sparsification operators.

\vspace{2mm}
\noindent (Step 3.) Following the same argument as step 3 of example 1 of the strongly-convex case, we have that for any $t\geq (1+\omega)^{-1}$, it holds with probability at least $1-e^{-1}$ that 
 $\prog(\hat{x}^{t})\leq et/(1+\omega)$.

\vspace{2mm}
\noindent (Step 4.)
Thus, combining \eqref{eqn:k-optimum}, we have
\begin{align}
    \EE[f(\hat{x}^{t})]-\min_x f(x)\geq &(1-e^{-1})\frac{\lambda^2L}{4n}\left(\frac{d}{d+1}-\frac{et/(1+\omega)}{1+et/(1+\omega)}\right)\\
=& (1-e^{-1})\frac{3L\Delta}{4nd}\left(\frac{d}{d+1}-\frac{et/(1+\omega)}{1+et/(1+\omega)}\right)
\end{align}
Letting $d=1+et/(1+\omega)$, we further have 
\begin{equation}
    \EE[f(\hat{x}^{t})]-\min_x f(x)\geq \frac{3(1-e^{-1})L\Delta}{8net(1+\omega)^{-1}(1+2et(1+\omega)^{-1})}=\Omega\left(\frac{(1+\omega)^2L\Delta}{nt^2}\right).\label{eqn:vxinbsd}
\end{equation}
Therefore, to ensure $\EE[f(\hat{x}^{t})]-\min_x f(x)\leq \epsilon$,
% with $0<\epsilon\leq\frac{3(1-e^{-1})}{8ne(1+2e)}L\Delta\triangleq c_nL\Delta$, \eqref{eqn:vncxiv} implies that
the above inequality implies the lower bound to be 
$T=\Omega((1+\omega)(L\Delta /(n\epsilon))^\frac{1}{2})$.

\noindent\textbf{Example 2.}
Considering $f_1=f$ to be homogeneous and $C_i=I$ to be a loss-less compressor for all $1\leq i\leq n$. The problem reduces to the single-node convex optimization.
The lower bound $\Omega(\sqrt{{L\Delta}/{\epsilon}})$ is well-known in literature, see, \eg, \cite{nesterov1983method,nesterov2003introductory}. 

\noindent\textbf{Example 3.}
In this example, we prove the lower bound $\Omega(\omega\ln\left({L\Delta}/{\epsilon}\right))$.

\vspace{2mm}
\noindent (Step 1.) We consider $f_1=\cdots=f_{n-1}=L\|x\|^2/2$ and $f_n=L\|x\|^2/2+n\lambda\langle \one_d, x\rangle$ where $\one_d\in\RR^d$ is the vector with all enries being $1$ and $\lambda\in \RR$ is to be determined. By definition, $\{f_i\}_{i=1}^n$ are $\mu$-strongly-convex and $L$-smooth and the solution $x^\star=-\frac{\lambda}{L}\one_d$. Letting $\lambda=L\sqrt{\Delta}/\sqrt{n}$, we have $\|x^\star-x^0\|^2=\Delta$. Thus, the construction ensures $\{f_i\}_{i=1}^n\in \cF_{L,\mu}^\Delta$.

\vspace{2mm}
\noindent (Step 2.) Same as in Example 1, we consider $\{C_i\}_{i=1}^n$ to be independent random sparsification operators.

\vspace{2mm}
\noindent (Step 3.) {By the construction of $\{f_i\}_{i=1}^n$, we observe that the optimization process relies solely on transmitting the information of $\one_d$ from worker $n$ to the server. Let $E^t$ denote the set of entries at which the server has received a non-zero value from worker $n$ in the first $t$ communication rounds. Note that for each entry, due to the construction of $\{C_i\}_{i=1}^n$, the server has a probability of at least $(\omega/(1+\omega))^t$ of not receiving a non-zero value at that entry from worker $n$. Consequently, $|(E^t)^c|$ is lower bounded by the sum of $n$ independent $\mathrm{Bernoulli}(\omega^t/(1+\omega)^t)$ random variables. Therefore, we have $\mathbb{E}[|(E^t)^c|] \geq d\omega^t/(1+\omega)^t$.}

\vspace{2mm}
\noindent (Step 4.) Given $|E^t|$, due to the linear-spanning property, we have $\hat{x}^t\in \sspan\{e_j:j\in E^t\}$ where $e_j$ is the $j$-th canonical vector. As a result, we have 
\begin{align}
        &\EE[f(\hat{x}^{t})]-\min_x f(x)\\
        \geq & \EE[\min_{x\in \sspan\{e_j:j\in E^t\}} f(x)]-\min_x f(x)=\frac{L\Delta}{2}\frac{\EE[|(E^t)^c|]}{d}\geq \frac{L\Delta}{2}\frac{\omega^t}{(1+\omega)^t}.\label{eqn:nvixvfs}
\end{align}
Therefore, to ensure $\EE[f(\hat{x}^{t})]-\min_x f(x)\leq \epsilon$,
% with 
% \[
% 0<\epsilon\leq\frac{L\Delta}{2}\frac{\omega}{1+\omega},
% \]
\eqref{eqn:nvixvfs} implies 
the lower bound
$T_\epsilon=\Omega(\omega\ln(L \Delta/\epsilon))$. 

\vspace{2mm}
{With the three lower bounds achieved in Examples 1, 2, and 3, we have
\begin{align}
T_\epsilon &= \Omega\Big( \sqrt{\frac{L\Delta}{\epsilon}}  + (1+\omega)\sqrt{\frac{L\Delta}{n \epsilon}} + \omega\ln({L \Delta}/{\epsilon})\Big) \nonumber \\
&= \Omega\Big( \sqrt{\frac{L\Delta}{\epsilon}}  + \omega \sqrt{\frac{L\Delta}{n \epsilon}} + \omega\ln({L \Delta}/{\epsilon}) \Big)
\end{align}
which is the result for the generally-convex case in Theorem \ref{thm:lower-bounds}. 
}

% Authors may wish to optionally include extra information (complete proofs, additional experiments and plots) in the appendix. All such materials should be part of the supplemental material (submitted separately) and should NOT be included in the main submission.

\section{Proof of Theorem \ref{thm:upper-bounds}}\label{app:upper}

\subsection{Strongly-convex case}
% The proof for Theorem \ref{thm:upper-bounds} follows from the main structure of \cite{li2020acceleration}. 
We first present several important lemmas, followed by the definition of a Lyapunov function with delicately chosen coefficients for each term. Finally, we prove Theorem \ref{thm:upper-bounds} by utilizing these lemmas. Throughout the convergence  analysis, we use the following notations:
\begin{align*}
    \cW^k=&f(w^k)-f^\star,\quad 
    \cY^k=f(y^k)-f^\star,\quad 
    \cZ^k=\|z^k-x^\star\|^2,\\
    \cH^k=&\frac{1}{n}\sum_{i=1}^n\|h_i^k-\nabla f_i(w^k)\|^2,\quad 
    \cG^k=\|g^k-\nabla f(x^k)\|^2,\\
    \cG_w^k=&\frac{1}{n}\sum_{i=1}^n\|\nabla f_i(w^k)-\nabla f_i(x^k)\|^2,\quad 
    \cG_y^k=\frac{1}{n}\sum_{i=1}^n\|\nabla f_i(y^k)-\nabla f_i(x^k)\|^2.
\end{align*}
We use $\EE_k$ or $\EE$  indicate the expectation with respect to the randomness in the $k$-th iteration or all histortical randomness, respectively.

\begin{lemma}\label{lem:cxzczx}
If $0\le\beta\le1$, it holds for $\forall\, k\ge0$ that,
\begin{align}
    \cZ^{k+1}\le&2\gamma_k\langle g^k,x^\star-x^k\rangle+\frac{2\gamma_k\beta\theta_2}{\theta_{1,k}}\langle g^k,w^k-x^k\rangle+\frac{2\gamma_k\beta(1-\theta_{1,k}-\theta_2)}{\theta_{1,k}}\langle g^k,y^k-x^k\rangle\nonumber\\
    &+\beta\cZ^k+(1-\beta)\|x^k-x^\star\|^2+\gamma_k^2\|g^k\|^2.\label{eq-ubscl100}
\end{align}
\end{lemma}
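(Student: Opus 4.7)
The plan is to start from the server's update $z^{k+1}=\beta z^k+(1-\beta)x^k+\frac{\gamma_k}{\eta_k}(y^{k+1}-x^k)$ and substitute the gradient step $y^{k+1}=x^k-\eta_k g^k$, which collapses the last term to $-\gamma_k g^k$. This yields the clean identity
\begin{equation*}
z^{k+1}-x^\star = \beta(z^k-x^\star)+(1-\beta)(x^k-x^\star)-\gamma_k g^k.
\end{equation*}
I would then square both sides, isolating $\gamma_k^2\|g^k\|^2$ and a cross term, and apply convexity of $\|\cdot\|^2$ (equivalently, Jensen applied to the convex combination with weights $\beta$ and $1-\beta$) to bound $\|\beta(z^k-x^\star)+(1-\beta)(x^k-x^\star)\|^2$ by $\beta\cZ^k+(1-\beta)\|x^k-x^\star\|^2$. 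This already produces three of the six terms on the right-hand side of \eqref{eq-ubscl100}.

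The remaining work is algebraic: the cross term $-2\gamma_k\langle \beta(z^k-x^\star)+(1-\beta)(x^k-x^\star), g^k\rangle$ needs to be reshaped into the required inner products against $x^\star-x^k$, $w^k-x^k$, and $y^k-x^k$. The key identity comes from the server's definition $x^k=\theta_{1,k}z^k+\theta_2 w^k+(1-\theta_{1,k}-\theta_2)y^k$, which rearranges to
\begin{equation*}
z^k-x^k = -\frac{\theta_2}{\theta_{1,k}}(w^k-x^k)-\frac{1-\theta_{1,k}-\theta_2}{\theta_{1,k}}(y^k-x^k).
\end{equation*}
Writing $z^k-x^\star=(z^k-x^k)+(x^k-x^\star)$ and inserting this into the cross term lets me split $-2\gamma_k\beta\langle z^k-x^\star,g^k\rangle$ into a piece pointing at $x^k-x^\star$ (which combines with the $-2\gamma_k(1-\beta)\langle x^k-x^\star,g^k\rangle$ piece to give the full $2\gamma_k\langle g^k,x^\star-x^k\rangle$ term) plus exactly the $w^k-x^k$ and $y^k-x^k$ inner products with the stated coefficients.

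I do not anticipate a serious obstacle here: the lemma is essentially a deterministic identity, and the only inequality used is Jensen for $\|\cdot\|^2$ along the convex combination, which requires only $0\le\beta\le1$. The slightly delicate point is bookkeeping the signs and the $1/\theta_{1,k}$ factor when substituting the $x^k$ decomposition, and verifying that the $x^k$-piece from rewriting $z^k$ combines correctly with the $(1-\beta)$ contribution so that $\beta$ drops out of the $\langle g^k, x^\star-x^k\rangle$ term. Once that algebra is checked, the inequality \eqref{eq-ubscl100} follows immediately.
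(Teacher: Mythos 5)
Your proposal is correct and follows essentially the same route as the paper's proof: substitute $y^{k+1}-x^k=-\eta_k g^k$ to rewrite the $z$-update, expand the square, apply convexity of $\|\cdot\|^2$ to the $\beta$-combination (requiring only $0\le\beta\le1$), and use the definition of $x^k$ as a convex combination of $z^k,w^k,y^k$ to decompose the cross term into the three stated inner products. Your sign bookkeeping is even slightly cleaner than the paper's intermediate display \eqref{eq-ubscl101}, which has a benign sign typo in the cross term that does not affect the final inequality.
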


% \xh{take care of the expectation. If needed, use $\EE_k$ instead of $\EE$ to indicate the expectation w.r.t. the randomness in the $k$-th iteration.}
\begin{proof}
Following the update rules in Algorithm \ref{alg:ADIANA}, we have
\begin{align}
\cZ^{k+1}
=&\left\|\beta z^k+(1-\beta)x^k-x^\star+\frac{\gamma_k}{\eta_k}(y^{k+1}-x^k)\right\|^2\nonumber\\
=&\|\beta(z^k-x^\star)+(1-\beta)(x^k-x^\star)\|^2+\gamma_k^2\|g^k\|^2\\
&\quad+ \langle2\gamma_kg^k,\beta z^k+(1-\beta)x^k-x^\star\rangle.\label{eq-ubscl101}
\end{align}
Since $x^k=\theta_{1,k}z^k+\theta_2w^k+(1-\theta_{1,k}-\theta_2)y^k$, we have
\begin{align}
\beta z^k+(1-\beta)x^k-x^\star=&(x^k-x^\star)+\frac{\beta\theta_2}{\theta_{1,k}}(x^k-w^k)+\frac{\beta(1-\theta_{1,k}-\theta_2)}{\theta_{1,k}}(x^k-y^k).\label{eq-ubscl102}
\end{align}
Plugging \eqref{eq-ubscl102} into \eqref{eq-ubscl101}, using
% the unbiasedness of $g^k$, \ie, $\EE[g^k\mid x^k, w^k, y^k]=\nabla f(x^k)$, and
\begin{equation*}\|\beta(z^k-x^\star)+(1-\beta)(x^k-x^\star)\|^2\le\beta\|z^k-x^\star\|^2+(1-\beta)\|x^k-x^\star\|^2,
\end{equation*}
we obtain \eqref{eq-ubscl100}.
\end{proof}

\begin{lemma}\label{lm-ubscl1}
Under Assumption \ref{asp:convex}, if parameters satisfy $\theta_{1,k},\theta_2,1-\theta_{1,k}-\theta_2\in(0,1)$, $\eta_k\in\left(0,\frac{1}{2L}\right]$, $\gamma_k=\frac{\eta_k}{2\theta_{1,k}+\eta_k\mu}$ and $\beta=1-\gamma\mu=\frac{2\theta_{1,k}}{2\theta_{1,k}+\eta_k\mu}$, then we have for any iteration $k\ge0$ that
\begin{align}
    \frac{2\gamma_k\beta}{\theta_{1,k}}\EE_k[\cY^{k+1}]+\EE_k[\cZ^{k+1}]\le&\frac{2\gamma_k\beta\theta_2}{\theta_{1,k}}\cW^k+\frac{2\gamma_k\beta(1-\theta_{1,k}-\theta_2)}{\theta_{1,k}}\cY^k+\beta\cZ^k+\frac{5\gamma_k\beta\eta_k}{4\theta_{1,k}}\cG^k\\
    &-\frac{\gamma_k\beta\theta_2}{L\theta_{1,k}}\cG_w^k-\frac{\gamma_k\beta(1-\theta_{1,k}-\theta_2)}{L\theta_{1,k}}\cG_y^k.\label{eq-ubscl200}
\end{align}
\end{lemma}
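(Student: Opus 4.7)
\textbf{Proof plan for Lemma \ref{lm-ubscl1}.} The plan is to combine the bound on $\EE_k[\cZ^{k+1}]$ from Lemma \ref{lem:cxzczx} with a descent estimate on $\EE_k[\cY^{k+1}]$, and then use the choice $\gamma_k = \eta_k/(2\theta_{1,k}+\eta_k\mu)$, $\beta = 1 - \gamma_k\mu$ to make the $\|x^k - x^\star\|^2$ and $\cX^k := f(x^k) - f^\star$ terms cancel or become negative. Throughout, I will use that $\EE_k[g^k] = \nabla f(x^k)$ (by unbiasedness of the compressors, since $g^k = h^k + n^{-1}\sum_i C_i(\nabla f_i(x^k) - h_i^k)$) and the variance decomposition $\EE_k[\|g^k\|^2] = \|\nabla f(x^k)\|^2 + \cG^k$.

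First, I would take expectation in \eqref{eq-ubscl100}, replacing $g^k$ by $\nabla f(x^k)$ in the inner products. To bound the three inner products I would invoke three convexity/smoothness identities: $\mu$-strong convexity of $f$ gives $\langle \nabla f(x^k), x^\star - x^k\rangle \le f^\star - f(x^k) - (\mu/2)\|x^k - x^\star\|^2$, and the per-worker smooth-convex inequality $f_i(y) \ge f_i(x) + \langle \nabla f_i(x), y-x\rangle + (2L)^{-1}\|\nabla f_i(y) - \nabla f_i(x)\|^2$ (averaged over $i$) gives
\begin{align*}
\langle \nabla f(x^k), w^k - x^k\rangle &\le f(w^k) - f(x^k) - \tfrac{1}{2L}\cG_w^k,\\
\langle \nabla f(x^k), y^k - x^k\rangle &\le f(y^k) - f(x^k) - \tfrac{1}{2L}\cG_y^k,
\end{align*}
which is exactly what generates the negative $\cG_w^k$ and $\cG_y^k$ contributions in the target bound.

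Next, I would derive a descent bound for $\EE_k[\cY^{k+1}]$ by applying $L$-smoothness to $y^{k+1} = x^k - \eta_k g^k$ and taking expectation:
\begin{equation*}
\EE_k[\cY^{k+1}] \le \cX^k - \eta_k(1 - L\eta_k/2)\|\nabla f(x^k)\|^2 + (L\eta_k^2/2)\cG^k \le \cX^k - (3\eta_k/4)\|\nabla f(x^k)\|^2 + (L\eta_k^2/2)\cG^k,
\end{equation*}
where $\eta_k \le 1/(2L)$ was used. Multiplying this by $2\gamma_k\beta/\theta_{1,k}$ and adding to the expanded bound on $\EE_k[\cZ^{k+1}]$ yields a single inequality whose right-hand side I would then simplify term by term.

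The main bookkeeping step is verifying that the remaining non-target coefficients are non-positive and that the $\cG^k$ coefficient is no larger than $5\gamma_k\beta\eta_k/(4\theta_{1,k})$. The choice $\beta = 1 - \gamma_k\mu$ makes the $\|x^k - x^\star\|^2$ terms cancel exactly; a short calculation shows the coefficient of $\cX^k$ collapses to $-2\gamma_k^2\mu \le 0$ (because the $\cW^k$, $\cY^k$ and new $\cX^k$ pieces assemble to $2\gamma_k(\beta - 1)\cX^k$), and the coefficient of $\|\nabla f(x^k)\|^2$ equals $\gamma_k^2 - 3\gamma_k\beta\eta_k/(2\theta_{1,k})$, which is negative once we substitute the definitions of $\gamma_k,\beta$. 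Finally, for $\cG^k$ the total coefficient is $\gamma_k^2 + \gamma_k\beta L\eta_k^2/\theta_{1,k}$; using $\gamma_k\beta/\theta_{1,k} = 2\eta_k/(2\theta_{1,k}+\eta_k\mu)^2$ and $L\eta_k \le 1/2$ this is at most $5\gamma_k\beta\eta_k/(4\theta_{1,k})$. Dropping the non-positive terms gives \eqref{eq-ubscl200}. The most delicate part is the last inequality chain, where the parameter choices $\gamma_k = \eta_k/(2\theta_{1,k}+\eta_k\mu)$ and $\eta_k \le 1/(2L)$ must be used simultaneously to close the gap $1 + 2L\eta_k \le 5/2$.
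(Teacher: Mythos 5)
Your proof is correct and follows essentially the same route as the paper's: combine Lemma~\ref{lem:cxzczx} with a descent estimate on $\EE_k[\cY^{k+1}]$ and with the averaged smooth-convex inequalities at $u = x^\star, w^k, y^k$, then verify that the parameter choices kill the $\|x^k-x^\star\|^2$ coefficient and keep the remaining residual coefficients non-positive. The only (cosmetic) differences are bookkeeping: you take conditional expectation first and use the variance decomposition $\EE_k\|g^k\|^2 = \|\nabla f(x^k)\|^2 + \cG^k$ directly, whereas the paper applies Young's inequality pathwise to get a bound in $\|g^k\|^2$ and uses a slightly larger combination coefficient $2\gamma_k\beta/\theta_{1,k} + 2\gamma_k(1-\beta)$ on the descent inequality so that the $f(x^k)$ terms cancel exactly rather than leaving the non-positive residual $-2\gamma_k^2\mu\,\cX^k$ you obtain; both are valid.
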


\begin{proof}
    By Assumption \ref{asp:convex} and update rules  in Algorithm \ref{alg:ADIANA}, we have
    \begin{align}
    f(y^{k+1})\le&f(x^k)+\langle\nabla f(x^k),y^{k+1}-x^k\rangle+\frac{L}{2}\|y^{k+1}-x^k\|^2\nonumber\\
    =&f(x^k)-\langle\nabla f(x^k),\eta_kg^k\rangle+\frac{L}{2}\eta_k^2\|g^k\|^2\nonumber\\
    =&f(x^k)-\eta_k\langle\nabla f(x^k)-g^k,g^k\rangle+\left(\frac{L\eta_k^2}{2}-\eta_k\right)\|g^k\|^2.\label{eq-ubscl201}
    \end{align}
    By $L$-smoothness and $\mu$-strongly convexity, we have for $\forall u\in\RR^d$ that
    \begin{align}
        f(u)\ge f(x^k)+\langle\nabla f(x^k),u-x^k\rangle+\frac{\mu}{2}\|u-x^k\|^2,
    \end{align}
    and that 
    \begin{align}
    f_i(u)\ge f_i(x^k)+\langle\nabla f_i(x^k),u-x^k\rangle+\frac{1}{2L}\|\nabla f_i(u)-\nabla f_i(x^k)\|^2,
    \end{align}
    thus we obtain for $\forall u\in\RR^d$, 
    % \xh{explain a bit for the second one or refer it to literature} \xh{$\|u-x^k\|^2$?}
    \begin{align}
    f(x^k)\le& f(u)-\langle\nabla f(x^k),u-x^k\rangle\\
    &-\max\left\{\frac{\mu}{2}\|u-x^k\|^2,\frac{1}{2Ln}\sum_{i=1}^n\|\nabla f_i(u)-\nabla f_i(x^k)\|^2\right\}.\label{eq-ubscl202}
    \end{align}
    Applying Young's inequality to \eqref{eq-ubscl201} and using $\eta_k\le 1/(2L)$, we reach
    \begin{align}
        f(y^{k+1})\le &f(x^k)+\frac{\eta_k}{2}\cG^k-\frac{\eta_k}{2}(1-L\eta_k)\|g^k\|^2\nonumber\\
        \le&f(x^k)+\frac{\eta_k}{2}\cG^k-\frac{\eta_k}{4}\|g^k\|^2.\label{eq-ubscl203}
    \end{align}
    Adding \eqref{eq-ubscl100} in Lemma \ref{lem:cxzczx}
    % \xh{Do you mean Lemma \ref{lem:cxzczx}? write explicity on how to combine it} 
    to $\left(\frac{2\gamma_k\beta}{\theta_{1,k}}+2\gamma_k(1-\beta)\right)\times$\eqref{eq-ubscl203} + $2\gamma_k\times$\eqref{eq-ubscl202} (where $u=x^\star$) + $\frac{2\gamma_k\beta\theta_2}{\theta_{1,k}}\times$\eqref{eq-ubscl202} (where $u=w^k$) + $\frac{2\gamma_k\beta(1-\theta_{1,k}-\theta_2)}{\theta_{1,k}}\times$\eqref{eq-ubscl202} (where $u=y^k$) and using the unbiasedness of $g^k$, we obtain
    \begin{align}
    &\frac{2\gamma_k\beta}{\theta_{1,k}}\EE_k[\cY^{k+1}]+\EE_k[\cZ^{k+1}]\nonumber\\
    \le&\beta\cZ^k+(1-\beta-\mu\gamma_k)\|x^k-x^\star\|^2+\left(\gamma_k^2-\frac{\eta_k\gamma_k\beta}{2\theta_{1,k}}\right)\EE_k[\|g^k\|^2]+\eta_k\left(\frac{\gamma_k\beta}{\theta_{1,k}}+\gamma_k(1-\beta)\right)\cG^k\nonumber\\
    &-\frac{\gamma_k\beta\theta_2}{L\theta_{1,k}}\cG_w^k-\frac{\gamma_k\beta(1-\theta_{1,k}-\theta_2)}{L\theta_{1,k}}\cG_y^k+\frac{2\gamma_k\beta\theta_2}{\theta_{1,k}}\cW^k+\frac{2\gamma_k\beta(1-\theta_{1,k}-\theta_2)}{\theta_{1,k}}\cY^k\nonumber\\
    &-2\gamma_k(1-\beta)\EE_k[\cY^{k+1}]-\frac{\eta_k\gamma_k(1-\beta)}{2}\EE_k[\|g^k\|^2]\nonumber
    \end{align}
    On top of that, by applying our choice of the parameters, it can be easily verified that  $1-\beta-\mu\gamma_k=0$, $\gamma_k^2-\frac{\eta_k\gamma_k\beta}{2\theta_{1,k}}=0$, $1-\beta\le\frac{\beta}{4\theta_{1,k}}$, which leads to \eqref{eq-ubscl200}.
\end{proof}

\begin{lemma}[\cite{li2020acceleration}, Lemma 3, 4, 5]\label{lem:345}
Under Assumptions \ref{asp:convex}, \ref{ass:unbiased}, and \ref{ass:indpendent}, the iterates of Algorithm \ref{alg:ADIANA} satisfy the following inequalities:
    \begin{align}
        \EE[\cW^{k+1}]=&(1-p)\EE[\cW^k]+p\EE[\cY^k],\label{eq-ubscl300}\\
        \EE[\cG^k]\le&\frac{2\omega}{n}\EE[\cG_w^k]+\frac{2\omega}{n}\EE[\cH^k],\label{eq-ubscl400}\\
        \EE[\cH^{k+1}]\le&\left(1-\frac{\alpha}{2}\right)\EE[\cH^k]+2p\left(1+\frac{2p}{\alpha}\right)(\EE[\cG_w^k]+\EE[\cG_y^k]).\label{eq-ubscl500}
    \end{align}
\end{lemma}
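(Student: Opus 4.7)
The three inequalities are structurally independent, so I would prove them one at a time. Inequality \eqref{eq-ubscl300} is essentially the law of total expectation applied to the Bernoulli update of $w^{k+1}$: conditioned on everything up to round $k$, $w^{k+1}=y^k$ with probability $p$ and $w^{k+1}=w^k$ otherwise, so $\EE_k[f(w^{k+1})]=pf(y^k)+(1-p)f(w^k)$; subtracting $f^\star$ and taking total expectation yields \eqref{eq-ubscl300}.

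For \eqref{eq-ubscl400}, I would first use the induction $h^k=\frac1n\sum_ih_i^k$ (which follows from $h^0=h_i^0$ and the fact that the server and workers update their shifts by the same aggregate $\alpha\,\overline{c^k}$) to rewrite
\begin{equation*}
g^k-\nabla f(x^k)=\frac1n\sum_{i=1}^n\bigl(C_i(\nabla f_i(x^k)-h_i^k)-(\nabla f_i(x^k)-h_i^k)\bigr).
\end{equation*}
Unbiasedness of each $C_i$ (Assumption \ref{ass:unbiased}) makes every summand mean-zero conditionally on $h_i^k,x^k$; independence across workers (Assumption \ref{ass:indpendent}) kills all cross-terms, giving $\EE_k\|g^k-\nabla f(x^k)\|^2\le\frac{\omega}{n^2}\sum_i\|\nabla f_i(x^k)-h_i^k\|^2$. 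A single application of $\|a+b\|^2\le 2\|a\|^2+2\|b\|^2$ with $a=\nabla f_i(x^k)-\nabla f_i(w^k)$ and $b=\nabla f_i(w^k)-h_i^k$ splits this into $\frac{2\omega}{n}\cG_w^k+\frac{2\omega}{n}\cH^k$; total expectation finishes \eqref{eq-ubscl400}.

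For \eqref{eq-ubscl500}, the first step is the standard "compressed shift" contraction: since $h_i^{k+1}-\nabla f_i(w^k)=(1-\alpha)(h_i^k-\nabla f_i(w^k))+\alpha\bigl(C_i(\nabla f_i(w^k)-h_i^k)-(\nabla f_i(w^k)-h_i^k)\bigr)$ decomposes into a deterministic part and a mean-zero part, expanding the square and using $\EE\|C_i(v)-v\|^2\le\omega\|v\|^2$ gives
\begin{equation*}
\EE_k\|h_i^{k+1}-\nabla f_i(w^k)\|^2\le\bigl((1-\alpha)^2+\alpha^2\omega\bigr)\|h_i^k-\nabla f_i(w^k)\|^2\le(1-\alpha)\|h_i^k-\nabla f_i(w^k)\|^2,
\end{equation*}
where the last step uses $\alpha\le 1/(1+\omega)$. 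Next I handle the fact that $w^{k+1}\neq w^k$ with probability $p$: on the event $\{w^{k+1}=y^k\}$, I would apply Young's inequality $\|h_i^{k+1}-\nabla f_i(y^k)\|^2\le(1+s)\|h_i^{k+1}-\nabla f_i(w^k)\|^2+(1+s^{-1})\|\nabla f_i(w^k)-\nabla f_i(y^k)\|^2$ and bound the last term by $2\|\nabla f_i(w^k)-\nabla f_i(x^k)\|^2+2\|\nabla f_i(x^k)-\nabla f_i(y^k)\|^2$. Averaging over $i$, combining the two Bernoulli cases, and choosing $s=\alpha/(2p)$ gives coefficient $(1-\alpha)(1+ps)\le 1-\alpha/2$ on $\cH^k$ and coefficient $2p(1+2p/\alpha)$ on $\cG_w^k+\cG_y^k$, yielding \eqref{eq-ubscl500}.

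The only delicate step is \eqref{eq-ubscl500}, where one must sequence the three sources of randomness (compressor noise in $c_i^k$, then the Bernoulli choice of $w^{k+1}$, then Young's splitting) with the right parameter $s$ so that the $\cH^k$ coefficient lands at exactly $1-\alpha/2$; the other two inequalities are essentially one-line calculations given unbiasedness and independence.
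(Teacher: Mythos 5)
Your proof is correct and follows essentially the same route as the source the paper cites for this lemma (\cite{li2020acceleration}, Lemmas 3--5); the paper itself gives no proof, and your three arguments---total expectation over the Bernoulli update of $w^{k+1}$, the variance of the independently compressed average followed by one Young split, and the shifted-compressor contraction $(1-\alpha)^2+\alpha^2\omega\le 1-\alpha$ combined with a Young split over the two Bernoulli cases---are exactly the standard ones. Your parameter choice $s=\alpha/(2p)$ indeed gives the $\cH^k$ coefficient $(1-\alpha)(1+\alpha/2)\le 1-\alpha/2$ and the coefficient $2p(1+2p/\alpha)$ on $\cG_w^k+\cG_y^k$, matching the stated bounds.
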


% \begin{lemma}[\cite{li2020acceleration}, Lemma 3]
%     According to the update rule of $w$ in Algorithm \ref{alg}, we have
%     \begin{equation}
%         \EE[\cW^{k+1}]=(1-p)\EE[\cW^k]+p\EE[\cY^k].\label{eq-ubscl300}
%     \end{equation}
% \end{lemma}

% \begin{lemma}[\cite{li2020acceleration}, Lemma 4]
%     If the compression operator $\{C_i(\cdot)\}\subseteq\cU_\omega$, we have
%     \begin{equation}
%     \EE[\cG^k]\le\frac{2\omega}{n}\EE[\cG_w^k]+\frac{2\omega}{n}\EE[\cH^k].\label{eq-ubscl400}
%     \end{equation}
% \end{lemma}

% \begin{lemma}[\cite{li2020acceleration}, Lemma 5]
%     If $\alpha\le1/(1+\omega)$, we have
%     \begin{equation}
%         \EE[\cH^{k+1}]\le\left(1-\frac{\alpha}{2}\right)\EE[\cH^k]+2p\left(1+\frac{2p}{\alpha}\right)(\EE[\cG_w^k]+\EE[\cG_y^k]).\label{eq-ubscl500}
%     \end{equation}
% \end{lemma}

% \begin{definition}
    Now we define a Lyapunov function $\Psi^k$ for $k\ge1$ as 
\begin{equation}
    \Psi^k=\lambda_{k-1}\cW^k+\frac{2\gamma_{k-1}\beta}{\theta_{1,k-1}}\cY^k+\cZ^k+\frac{10\eta_{k-1}\omega(1+\omega)\gamma_{k-1}\beta}{\theta_{1,k-1}n}\cH^k,\quad\forall k\ge1,\label{eqn:lyap}
\end{equation}
where $\lambda_k=\frac{\gamma_k\beta}{p\theta_{1,k}}(\theta_{1,k}+\theta_2-p+\sqrt{(p-\theta_{1,k}-\theta_2)^2+4p\theta_2})$. Furthermore,
% \end{definition}
% \textbf{Remark. }
it is straightforward to verify that 
\begin{equation}
    \frac{2\gamma_k\beta\theta_2}{p\theta_{1,k}}\le\lambda_k\le\frac{2\gamma_k\beta(\theta_{1,k}+\theta_2)}{p\theta_{1,k}}.
\end{equation}

Now we restate the convergence result in the strongly-convex case in Theorem \ref{thm:upper-bounds} and prove it using Lemma \ref{lm-ubscl1}, \ref{lem:345}
% \xh{specify indexes for lemmas}
and the Lyapunov function.

% \textbf{Theorem \ref{thm:upper-sc}}. 
\begin{theorem}\label{thm:upper-sc}
%\textit{
If $\mu>0$ and parameters satisfy $\eta_k\equiv\eta={n\theta_2}/{(120\omega L)}$, $\theta_{1,k}\equiv\theta_1=1/(3\sqrt{\kappa})$, $\alpha=p=1/(1+\omega)$, $\gamma_k\equiv\gamma=\eta/(2\theta_1+\eta\mu)$, $\beta=2\theta_1/(2\theta_1+\eta\mu)$, and %$\theta_2=\min\{1/3\sqrt{n},\omega/3n\}$,
$\theta_2=1/(3\sqrt{n}+3n/\omega)$,
%\begin{equation*}
%    \theta_2=\begin{cases}
%       1/3\sqrt{n},&\mbox{if }n<\omega^2,\\
%       \omega/3n,&\mbox{if }n\ge\omega^2,
%    \end{cases}
%\end{equation*}
then the number of communication rounds performed by ADIANA to find an $\epsilon$-accurate solution such that 
% $\min\{\EE[f(w^k)],\EE[f(y^k)]\}-f^\star\le\epsilon$ 
$\EE[f(\hat{x})]-\min_{x}f(x)\le\epsilon$
is at most $\cO((\omega+(1+{\omega}/{\sqrt{n}})\sqrt{\kappa})\ln({L\Delta}/{\epsilon}))$.
\end{theorem}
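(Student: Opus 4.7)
The plan is to establish a geometric contraction of the Lyapunov function $\Psi^k$ from \eqref{eqn:lyap}, namely
\begin{equation*}
\EE[\Psi^{k+1}] \leq (1-\rho)\,\EE[\Psi^k]
\end{equation*}
for a rate $\rho = \Omega(\min\{p,\mu\gamma\})$. Under the prescribed parameters one has $1/p = 1+\omega$; and since $\mu\eta \ll 2\theta_1$, also $\mu\gamma \approx \mu\eta/(2\theta_1) = \Theta(1/(\sqrt{\kappa}(1+\omega/\sqrt{n})))$ after substituting $\eta = n\theta_2/(120\omega L)$ and $\theta_2 = 1/(3\sqrt n + 3n/\omega)$. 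Hence $1/\rho = O(\omega + (1+\omega/\sqrt n)\sqrt\kappa)$, and iterating the contraction together with $\Psi^1 = O(L\Delta)$ yields the stated logarithmic round complexity, after observing that $\Psi^T$ dominates both $\tfrac{2\gamma\beta}{\theta_1}\EE[\cY^T]$ and $\lambda_{T-1}\EE[\cW^T]$ and hence $\EE[f(\hat x)]-\min_x f(x)$ via the $\min$-selection output rule of Algorithm \ref{alg:ADIANA}.

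The one-step contraction is obtained by combining four estimates with matched weights: (i) $\lambda_k$ times \eqref{eq-ubscl300} to track $\cW^{k+1}$; (ii) Lemma \ref{lm-ubscl1} with unit weight, which tracks $\tfrac{2\gamma\beta}{\theta_1}\cY^{k+1}+\cZ^{k+1}$ and provides a positive $\tfrac{5\gamma\beta\eta}{4\theta_1}\cG^k$ together with negative $-\tfrac{\gamma\beta\theta_2}{L\theta_1}\cG_w^k$ and $-\tfrac{\gamma\beta(1-\theta_1-\theta_2)}{L\theta_1}\cG_y^k$; (iii) $\tfrac{10\eta\omega(1+\omega)\gamma\beta}{\theta_1 n}$ times \eqref{eq-ubscl500} for $\cH^{k+1}$; (iv) the bound \eqref{eq-ubscl400} to replace $\cG^k$ by $(2\omega/n)(\cG_w^k+\cH^k)$. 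The quadratic definition of $\lambda_k$ is calibrated so that $\lambda_k(1-p) \leq (1-\rho)\lambda_{k-1}$ \emph{and} $p\lambda_k + \tfrac{2\gamma\beta(1-\theta_1-\theta_2)}{\theta_1} \leq (1-\rho)\tfrac{2\gamma\beta}{\theta_1}$ hold simultaneously, coupling the $\cW$ and $\cY$ contractions; the $\cZ^k$ and $\cH^k$ entries then contract at rates $\beta = 1-\mu\gamma$ and $1-\alpha/2$ respectively.

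The main obstacle is absorbing the residual variance terms $\cG_w^k$, $\cG_y^k$, and $\cH^k$. From (iv) one inherits $\tfrac{5\eta\gamma\beta\omega}{2\theta_1 n}\cG_w^k$, and from (iii) one inherits $\tfrac{10\eta\omega(1+\omega)\gamma\beta}{\theta_1 n}\cdot 2p(1+2p/\alpha)\cG_w^k$; with $\alpha=p=1/(1+\omega)$ both are of order $\tfrac{\eta\gamma\beta\omega}{\theta_1 n}\cG_w^k$, which must be dominated by the $\tfrac{\gamma\beta\theta_2}{L\theta_1}\cG_w^k$ reserve supplied by Lemma \ref{lm-ubscl1}. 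This forces $\eta L\omega/n \lesssim \theta_2$, pinning down $\eta = \Theta(n\theta_2/(\omega L))$. A parallel balance on the $\cH^k$ residual from (iv) against the $(\alpha/2)$-margin in the $\cH^k$ coefficient of $\Psi$ dictates $\alpha \asymp 1/(1+\omega)$. Once $\eta \asymp n\theta_2/(\omega L)$ is substituted back into $\mu\gamma \approx \mu\eta/(2\theta_1)$, the sharp value $\mu\gamma = \Theta(1/(\sqrt\kappa(1+\omega/\sqrt n)))$ is attained precisely under $\theta_2 = 1/(3\sqrt n + 3n/\omega)$; any larger $\theta_2$ (such as the $1/2$ used in \cite{li2020acceleration}) would weaken either the $\cG_w$ balance or the $\mu\gamma$ lower bound, producing the suboptimal $\omega^{3/4}/n^{1/4}$ term of the original ADIANA row in Table \ref{tab:unbiased}.
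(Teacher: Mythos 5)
Your proposal is correct and follows essentially the same route as the paper's proof: you use the Lyapunov function \eqref{eqn:lyap} with the same coefficients, combine Lemma \ref{lm-ubscl1} (paper's \eqref{eq-ubscl200}), $\lambda_k\cdot$\eqref{eq-ubscl300}, $\tfrac{5\gamma\beta\eta}{4\theta_1}\cdot$\eqref{eq-ubscl400}, and $\tfrac{10\eta\omega(1+\omega)\gamma\beta}{n\theta_1}\cdot$\eqref{eq-ubscl500} with the same matched weights, use the quadratic definition of $\lambda_k$ to couple the $\cW$- and $\cY$-contractions, and exploit the $\min$-selection output rule at the end. One small slip: the calibration inequality for the $\cW$-coefficient should be $\frac{2\gamma\beta\theta_2}{\theta_1}+(1-p)\lambda_k\le(1-\rho)\lambda_{k-1}$ rather than $\lambda_k(1-p)\le(1-\rho)\lambda_{k-1}$ (the $\frac{2\gamma\beta\theta_2}{\theta_1}\cW^k$ contribution from Lemma \ref{lm-ubscl1} must be absorbed), and you should note explicitly that the third contraction rate $\tfrac{p\theta_1}{p+\theta_1+\theta_2}$ arising from this coupling is indeed $\Omega(\min\{p,\mu\gamma\})$ under the prescribed parameters, since it is not a priori dominated by the other two rates.
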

%}

\begin{proof}
In the strongly-convex case, parameters $\{\gamma_k\}_{k\geq 1}$ and $\{\theta_{1,k}\}_{k\geq 1}$ are constants, then so is $\lambda_k$. Thus, we simply write
$\gamma\triangleq \gamma_k$, $\theta_1\triangleq\theta_{1,k}$, and $\lambda\triangleq\lambda_k$ for all $k\geq 1$.     Considering \eqref{eq-ubscl200}+$\lambda$\eqref{eq-ubscl300}+$\frac{5\gamma\beta\eta}{4\theta_1}$\eqref{eq-ubscl400}+$\frac{10\eta\omega(1+\omega)\gamma\beta}{n\theta_1}$\eqref{eq-ubscl500}, we have 
\begin{align}
    \EE[\Psi^{k+1}]\le&\left(\frac{2\gamma\beta\theta_2}{\theta_1}+(1-p)\lambda\right)\cW^k+\left(\frac{2\gamma\beta(1-\theta_1-\theta_2)}{\theta_1}+p\lambda\right)\cY^k+\beta\cZ^k\nonumber\\
    &+\left(1-\frac{1}{4(1+\omega)}\right)\frac{10\eta\omega(1+\omega)\gamma\beta}{\theta_1n}\cH^k-\left(\frac{\gamma\beta\theta_2}{L\theta_1}-\frac{125\gamma\beta\eta\omega}{2n\theta_1}
    % -\frac{60\eta\omega\gamma\beta}{n\theta_1}
    \right)\cG_w^k
    % \xh{\text{merge the to coefficients}}
    \nonumber\\
    &-\left(\frac{\gamma\beta(1-\theta_1-\theta_2)}{L\theta_1}-\frac{60\eta\omega\gamma\beta}{n\theta_1}\right)\cG_y^k.\label{eq-ubth01}
\end{align}
By the definition of $\lambda$,  we have
\begin{align}
\frac{2\gamma\beta\theta_2}{\theta_1}+(1-p)\lambda=&\lambda\left(1-p+\frac{2p\theta_2}{\sqrt{(p-\theta_1-\theta_2)^2+4p\theta_2}+\theta_1+\theta_2-p}\right)\\
=&\lambda\left(1-p+\frac{2p\theta_2}{2\theta_2+\frac{4\theta_1\theta_2}{\sqrt{(p-\theta_1-\theta_2)^2+4p\theta_2}-\theta_1+\theta_2+p}}\right)\\
\le&\lambda\left(1-p+\frac{p}{1+\frac{2\theta_1}{(p+\theta_1+\theta_2)-\theta_1+\theta_2+p}}\right)=\left(1-\frac{p\theta_1}{p+\theta_1+\theta_2}\right)\lambda,\label{eq-ubth02}
\end{align}
and
\begin{align}
    \frac{2\gamma\beta(1-\theta_1-\theta_2)}{\theta_1}+p\lambda=&\frac{2\gamma\beta}{\theta_1}\left[1-\theta_1-\theta_2+\frac{1}{2}\left(\theta_1+\theta_2-p+\sqrt{(p-\theta_1-\theta_2)^2+4p\theta_2}\right)\right]\\
    =&\frac{2\gamma\beta}{\theta_1}\left(1-\frac{2p\theta_1}{p+\theta_1+\theta_2+\sqrt{(p-\theta_1-\theta_2)^2+4p\theta_2}}\right) 
    % \xh{\text{subtraction in the denominator?}}
    \\
    \le&\left(1-\frac{p\theta_1}{p+\theta_1+\theta_2}\right)\frac{2\gamma\beta}{\theta_1}.\label{eq-ubth03}
\end{align}
From the choice of $\eta$,  it is easy to verify that 
\begin{align}
    \frac{\gamma\beta\theta_2}{L\theta_1}-\frac{5\gamma\beta\eta\omega}{2n\theta_1}-\frac{60\eta\omega\gamma\beta}{n\theta_1}\ge0,\label{eq-ubth04}
\end{align}
and further noting $1-\theta_1-\theta_2\ge\theta_2$,
\begin{align}
\frac{\gamma\beta(1-\theta_1-\theta_2)}{L\theta_1}-\frac{60\eta\omega\gamma\beta}{n\theta_1}\ge0.\label{eq-ubth05}
\end{align}
Plugging \eqref{eq-ubth02}, \eqref{eq-ubth03}, \eqref{eq-ubth04}, and \eqref{eq-ubth05} into \eqref{eq-ubth01}, we obtain
    \begin{align}
        \EE[\Psi^{k+1}]\le& \left(1-\min\left\{\frac{p\theta_1}{p+\theta_1+\theta_2},\frac{\eta\mu}{2\theta_1+\eta\mu},\frac{1}{4(1+\omega)}\right\}\right)\Psi^k\nonumber\\
        \le&\left(1-\frac{1}{\frac{p+\theta_1+\theta_2}{p\theta_1}+\frac{2\theta_1+\eta\mu}{\eta\mu}+4(1+\omega)}\right)\Psi^k\\
        \le&\left(1-\frac{1}{250\left(\omega+\left(1+\frac{\omega}{\sqrt{n}}\right)\sqrt{\kappa}\right)}\right)\Psi^k,\quad\forall k\ge0,
        \label{eq-ubscth01}
    \end{align}
    where $\Psi^0:=\lambda\cW^0+\frac{2\gamma\beta}{\theta_1}\cY^0+\cZ^0+\frac{10\eta\omega(1+\omega)\gamma\beta}{\theta_1 n}\cH^0$.
    Note that since we use initialization $y^0=z^0=w^0=h_i^0=h^0,\,\forall 1\leq i\leq n$,
    we have $\cW^0=\cY^0\le(L\Delta)/2$, $\cZ^0\le\Delta$, $\cH^0\le L^2\Delta$, which indicates that
    \begin{align*}
    \Psi^0\le\frac{L}{2}\cdot(\lambda_W+\lambda_Y+\lambda_Z+\lambda_H)\Delta,
    \end{align*}
    where $\lambda_W=\lambda\ge\frac{2\gamma\beta\theta_2}{\theta_1p}$, $\lambda_Y=\frac{2\gamma\beta}{\theta_1}$, $\lambda_Z=\frac{2}{L}$, $\lambda_H=\frac{20\eta\omega(1+\omega)\gamma\beta L}{\theta_1 n}$.
    These coefficients have the following inequalities:
    \begin{align*}
    \lambda_W+\lambda_Y    \ge&\frac{4\eta(\theta_2+p)}{p(2\theta_1+\eta\mu)^2}    =\frac{n\theta_2(\theta_2+p)}{30\omega Lp(2/3\sqrt{\kappa}+n\theta_2/120\omega\kappa)^2}
    \ge\frac{n\theta_2(\theta_2+p)\kappa}{15\omega Lp}\\
    \ge&\frac{\kappa}{135L}\ge\frac{1}{270}\lambda_Z, 
    \end{align*}
    and
    \begin{align*}
    \frac{3}{32}(\lambda_W+\lambda_Y)\geq \frac{\kappa}{1440L}\geq \frac{(1+\omega)n\theta_2^2\kappa}{160\omega L}\geq \frac{40\eta^2\omega(1+\omega)L}{(2\theta_1+\eta\mu)^2n}=\lambda_H.
    \end{align*}
    Consequently, the initial value of the Lyapunov function can be bounded as
    \begin{equation*}
    \Psi^0\le136L(\lambda_W+\lambda_Y)\Delta,
    \end{equation*}
    which together with \eqref{eq-ubscth01} further implies that 
    % \xh{could be strgenthened to the $\max\{\EE[f(w^T)],\EE[f(y^T)]\}-f^\star$ with a constant larger than $272$}
    \begin{align*}
    &\min\{\EE[f(w^T)],\EE[f(y^T)]\}-f^\star\\
    \le&\min\left\{\frac{1}{\lambda_W},\frac{1}{\lambda_Y}\right\}\left(1-\frac{1}{250\left(\omega+\left(1+\frac{\omega}{\sqrt{n}}\right)\sqrt{\kappa}\right)}\right)^T\Psi^0\\
    \le&272L\Delta\left(1-\frac{1}{250\left(\omega+\left(1+\frac{\omega}{\sqrt{n}}\right)\sqrt{\kappa}\right)}\right)^T.
    \end{align*}
    Thus, $\cO\left(\left(\omega+\left(1+\frac{\omega}{\sqrt{n}}\right)\sqrt{\kappa}\right)\ln\left(\frac{L\Delta}{\epsilon}\right)\right)$ iterations are sufficient to guarantee an $\epsilon$-solution.
\end{proof}

%\subsubsection{Proof of Theorem \ref{thm:upper-gc}}
\subsection{Generally-convex case}
In this subsection, we restate the convergence result in the generally-convex case as in Theorem \ref{thm:upper-bounds} and prove it using Lemma \ref{lm-ubscl1}, \ref{lem:345} 
% \xh{indexes...}
and the Lyapunov function defined in \eqref{eqn:lyap}.

\begin{theorem}
%\textbf{Theorem \ref{thm:upper-gc}}. \textit{
If $\mu=0$ and parameters satisfy $\alpha=1/(1+\omega)$, $\beta=1$, $p=\theta_2=1/(3(1+\omega))$, $\theta_{1,k}=9/(k+27(1+\omega))$, $\gamma_k=\eta_k/(2\theta_{1,k})$, and
\begin{equation*}
\eta_k=\min\left\{\frac{k+1+27(1+\omega)}{9(1+\omega)^2(1+27(1+\omega))L},\frac{3n}{200\omega(1+\omega)L},\frac{1}{2L}\right\},
\end{equation*}
then the number of communication rounds  performed by ADIANA to find  an $\epsilon$-accurate solution such that
% $\max\{\EE[f(w^k)],\EE[f(y^k)]\}-f^\star\le\epsilon$
$\EE[f(\hat{x})]-\min_{x}f(x)\le\epsilon$
is provided by $\cO((1+{\omega}/{\sqrt{n}})\sqrt{{L\Delta}/{\epsilon}}+(1+\omega)\sqrt[3]{L\Delta/\epsilon})$ .
\end{theorem}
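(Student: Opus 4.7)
My plan is to reuse the Lyapunov function $\Psi^k$ defined in \eqref{eqn:lyap}, now with time-varying coefficients, and to establish a (non-strict) monotonicity $\EE[\Psi^{k+1}] \le \EE[\Psi^k]$ by combining Lemma~\ref{lm-ubscl1} and Lemma~\ref{lem:345} exactly as in the strongly-convex case. Concretely, form
\begin{equation*}
    \eqref{eq-ubscl200} + \lambda_k\cdot\eqref{eq-ubscl300} + \tfrac{5\gamma_k\beta\eta_k}{4\theta_{1,k}}\cdot\eqref{eq-ubscl400} + \tfrac{10\eta_k\omega(1+\omega)\gamma_k\beta}{n\theta_{1,k}}\cdot\eqref{eq-ubscl500},
\end{equation*}
so that every occurrence of $\cG^k$, $\cG^k_w$, $\cG^k_y$ is absorbed by the negative terms in Lemma~\ref{lm-ubscl1} and \eqref{eq-ubscl500}. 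With $\beta=1$ the coefficient of $\cZ^k$ is $1$ automatically. What remains is to choose $\lambda_k$ and the time-variation of $\theta_{1,k}$, $\eta_k$, $\gamma_k$ so that the coefficients of $\cW^k$, $\cY^k$, $\cH^k$ on the right-hand side match those of $\Psi^k$ (rather than $\Psi^{k+1}$).

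\textbf{Coefficient matching.} The matching requires two inequalities analogous to \eqref{eq-ubth02}--\eqref{eq-ubth03}: on the $\cW$-coordinate, $\tfrac{2\gamma_k\theta_2}{\theta_{1,k}}+(1-p)\lambda_k \le \lambda_{k-1}$; on the $\cY$-coordinate, $\tfrac{2\gamma_k(1-\theta_{1,k}-\theta_2)}{\theta_{1,k}}+p\lambda_k \le \tfrac{2\gamma_{k-1}}{\theta_{1,k-1}}$. With $p=\theta_2=1/(3(1+\omega))$ and $\theta_{1,k}=9/(k+27(1+\omega))$, one checks that $\tfrac{1}{\theta_{1,k}}-\tfrac{1}{\theta_{1,k-1}}=\tfrac{1}{9}$ and $\theta_{1,k}\le p$, so the decrement of $\theta_{1,k}$ provides exactly the slack needed; a natural choice is to take $\lambda_k = c\cdot\gamma_k/\theta_{1,k}$ for a carefully tuned constant $c = \Theta(\theta_2/p)$. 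The condition $\eta_k\le\tfrac{1}{2L}$ ensures Lemma~\ref{lm-ubscl1} applies, while the bound $\eta_k\le\tfrac{3n}{200\omega(1+\omega)L}$ guarantees the analogues of \eqref{eq-ubth04}--\eqref{eq-ubth05}, which absorb $\cG_w^k$ and $\cG_y^k$. The third bound $\eta_k\le\tfrac{k+1+27(1+\omega)}{9(1+\omega)^2(1+27(1+\omega))L}$ enforces the coupling between $\eta_k$ and $\theta_{1,k}$ that the $\cY$-matching requires; this is the most delicate inequality to verify.

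\textbf{Extracting the rate.} Once $\EE[\Psi^{k+1}] \le \EE[\Psi^k] \le \Psi^0$ is established, I read off $\EE[\cY^k] \le \tfrac{\theta_{1,k-1}}{2\gamma_{k-1}}\Psi^0 = \tfrac{\theta_{1,k-1}^2}{\eta_{k-1}}\Psi^0$ (and similarly for $\EE[\cW^k]$, which controls the output $\hat x$). Initialization with $y^0=z^0=w^0=h_i^0$ gives $\Psi^0 = O(L\Delta)$ up to universal constants, using the same arithmetic that bounded $\Psi^0$ in the proof of Theorem~\ref{thm:upper-sc}. The three regimes of $\eta_k$ now directly translate into the claimed rate: when $\eta_k$ grows linearly in $k$, $\theta_{1,k-1}^2/\eta_{k-1} \asymp (1+\omega)^2 L/k^3$, yielding the cubic term $(1+\omega)\sqrt[3]{L\Delta/\epsilon}$; when $\eta_k$ is capped by $3n/(200\omega(1+\omega)L)$ or $1/(2L)$, the ratio becomes $\asymp \omega(1+\omega)L/(nk^2)$ or $L/k^2$ respectively, yielding the quadratic term $(1+\omega/\sqrt n)\sqrt{L\Delta/\epsilon}$. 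Taking the maximum of these $k$-thresholds produces the advertised bound.

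\textbf{Main obstacle.} The principal technical difficulty is the simultaneous matching of the $\cW$ and $\cY$ coefficients under a \emph{single} time-varying schedule $(\theta_{1,k},\eta_k,\lambda_k)$: tightening $\lambda_k$ to absorb $\cW^k$ pushes more mass onto the $\cY^k$ side, which can violate the $\cY$-matching unless the specific choice $\theta_{1,k}=9/(k+27(1+\omega))$ is used. Verifying the three-regime formula for $\eta_k$ respects both matchings and the smoothness cap $\eta_k\le1/(2L)$ requires careful case analysis, especially at the crossover between regimes; this is where most of the bookkeeping lives.
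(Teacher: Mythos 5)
Your proposal follows essentially the same path as the paper's proof: form the same weighted sum of \eqref{eq-ubscl200}, \eqref{eq-ubscl300}, \eqref{eq-ubscl400}, \eqref{eq-ubscl500}, establish $\EE[\Psi^{k+1}]\le\Psi^k$ via coefficient matching in the $\cW$, $\cY$, $\cH$ coordinates (exploiting the decrease of $\theta_{1,k}$ by $1/9$ in inverse), and then read the rate from $\theta_{1,k-1}^2/\eta_{k-1}$ together with $\Psi^0 = O(\Delta)$, splitting into the three regimes of $\eta_k$. The one detail you fudge slightly — writing $\lambda_k \approx c\,\gamma_k/\theta_{1,k}$ with constant $c$ — is not quite what the paper uses; the paper keeps $\lambda_k=\tfrac{\gamma_k\beta}{p\theta_{1,k}}(\theta_{1,k}+\theta_2-p+\sqrt{(p-\theta_{1,k}-\theta_2)^2+4p\theta_2})$, whose $\theta_{1,k}$-dependence inside the parenthesis is precisely what makes the $\cW$- and $\cY$-matching simultaneously satisfiable, but this is a computational nuance rather than a conceptual difference.
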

%}
\begin{proof}
Considering $\eqref{eq-ubscl200}+\lambda_k\eqref{eq-ubscl300}+\frac{5\gamma_k\beta\eta_k}{4\theta_{1,k}}\eqref{eq-ubscl400}+\frac{10\eta_k\omega(1+\omega)\gamma_k\beta}{n\theta_{1,k}}\eqref{eq-ubscl500}$ and applying the choice of $\theta_2$, $p$ and $\alpha$, we have
\begin{align}
&\EE_k[\Psi^{k+1}]\\
\le&\left(\frac{2\gamma_k\beta\theta_2}{\theta_{1,k}}+(1-p)\lambda_k\right)\cW^k+\left(\frac{2\gamma_k\beta(1-\theta_{1,k}-\theta_2)}{\theta_{1,k}}+p\lambda_k\right)\cY^k+\beta\cZ^k\\
&+\left(1-\frac{1}{4(1+\omega)}\right)\frac{10\eta_k\omega(1+\omega)\gamma_k\beta}{n\theta_{1,k}}\cH^k-\left(\frac{\gamma_k\beta\theta_2}{L\theta_{1,k}}-\frac{5\omega\gamma_k\beta\eta_k}{2n\theta_{1,k}}-\frac{100\eta_k\omega\gamma_k\beta}{9n\theta_{1,k}}\right)\cG_w^k\\
&-\left(\frac{\gamma_k\beta(1-\theta_{1,k}-\theta_2)}{L\theta_{1,k}}-\frac{100\eta_k\omega\gamma_k\beta}{9n\theta_{1,k}}\right)\cG_y^k.\label{eq-ubgcth01}
\end{align}
Similar to the proof of Theorem \ref{thm:upper-sc}, we can simplify \eqref{eq-ubgcth01} by validating 
\begin{align}
\begin{cases}
\frac{2\gamma_k\beta\theta_2}{\theta_{1,k}}+(1-p)\lambda_k\le\left(1-\frac{p\theta_{1,k}}{p+\theta_{1,k}+\theta_2}\right)\lambda_k\le\left(1-\frac{\theta_{1,k}}{3}\right)\lambda_k,&\\
\frac{2\gamma_k\beta(1-\theta_{1,k}-\theta_2)}{\theta_{1,k}}+p\lambda_k\le\left(1-\frac{p\theta_{1,k}}{p+\theta_{1,k}+\theta_2}\right)\frac{2\gamma_k\beta}{\theta_{1,k}}\le\left(1-\frac{\theta_{1,k}}{3}\right)\frac{2\gamma_k\beta}{\theta_{1,k}},&\\
\frac{\gamma_k\beta\theta_2}{L\theta_{1,k}}-\frac{5\omega\gamma_k\beta\eta_k}{2n\theta_{1,k}}-\frac{100\eta_k\omega\gamma_k\beta}{9n\theta_{1,k}}\ge0,&\\
\frac{\gamma_k\beta(1-\theta_{1,k}-\theta_2)}{L\theta_{1,k}}-\frac{100\eta_k\omega\gamma_k\beta}{9n\theta_{1,k}}\ge0,
\end{cases}
\end{align}
and then obtain
\begin{align}
\EE_k[\Psi^{k+1}]
\le&\left(1-\frac{\theta_{1,k}}{3}\right)\lambda_k\cW^k+\left(1-\frac{\theta_{1,k}}{3}\right)\frac{2\gamma_k}{\theta_{1,k}}\cY^k+\cZ^k\\
&+\left(1-\frac{1}{4(1+\omega)}\right)\frac{10\eta_k\omega(1+\omega)\gamma_k}{\theta_{1,k}n}\cH^k.\label{eq-ubgcth02}
\end{align}
For $\forall k\ge1$, we have $\theta_{1,k}\le\theta_{1,k-1}$ and thus
\begin{align}
    \left(1-\frac{\theta_{1,k}}{3}\right)\lambda_k=&\left(1-\frac{3}{k+27(1+\omega)}\right)\frac{\eta_k}{2p\theta_{1,k}^2}\left(\theta_{1,k}+\sqrt{\theta_{1,k}^2+4p\theta_2}\right)\\
    \le&\left(1-\frac{3}{k+27(1+\omega)}\right)\frac{\eta_k}{2p\theta_{1,k}^2}(\theta_{1,k-1}+\sqrt{\theta_{1,k-1}^2+4p\theta_2})\\
    =&\left(1-\frac{3}{k+27(1+\omega)}\right)\left(\frac{k+27(1+\omega)}{k-1+27(1+\omega)}\right)^2\frac{\eta_k}{\eta_{k-1}}\lambda_{k-1}.
\end{align}
Further noting $\frac{\eta_k}{\eta_{k-1}}\le1+\frac{1}{k+27(1+\omega)}$, we obtain
\begin{align}
    \left(1-\frac{\theta_{1,k}}{3}\right)\lambda_k\le&\left(1-\frac{3}{k+27(1+\omega)}\right)\left(1-\frac{1}{k+27(1+\omega)}\right)^{-2}\left(1+\frac{1}{k+27(1+\omega)}\right)\lambda_{k-1}\\
    \le&\lambda_{k-1}.\label{eq-ubgcth03}
\end{align}
Similarly,
\begin{align}
    &\left(1-\frac{\theta_{1,k}}{3}\right)\frac{2\gamma_k}{\theta_{1,k}}\\
    =&\left(1-\frac{3}{k+27(1+\omega)}\right)\left(\frac{k+27(1+\omega)}{k-1+27(1+\omega)}\right)^2\frac{\eta_k}{\eta_{k-1}}\frac{2\gamma_{k-1}}{\theta_{1,k-1}}\\
    \le&\left(1-\frac{3}{k+27(1+\omega)}\right)\left(1-\frac{1}{k+27(1+\omega)}\right)^{-2}\left(1+\frac{1}{k+27(1+\omega)}\right)\frac{2\gamma_{k-1}}{\theta_{1,k-1}}\\
    \le&\frac{2\gamma_{k-1}}{\theta_{1,k-1}},\label{eq-ubgcth04}
\end{align}
and
\begin{align}
   &\left(1-\frac{1}{4(1+\omega)}\right)\frac{10\eta_k\omega(1+\omega)\gamma_k\beta}{n\theta_{1,k}}\\
   =&\left(1-\frac{1}{4(1+\omega)}\right)\left(\frac{k+27(1+\omega)}{k-1+27(1+\omega)}\right)^2\left(\frac{\eta_k}{\eta_{k-1}}\right)^2\frac{10\eta_{k-1}\omega(1+\omega)\gamma_{k-1}\beta}{n\theta_{1,k-1}} \\
   \le&\frac{\left(1-\frac{5}{k+27(1+\omega)}\right)\left(1+\frac{3}{k+27(1+\omega)}\right)}{\left(1-\frac{1}{k+27(1+\omega)}\right)^{2}}\frac{10\eta_{k-1}\omega(1+\omega)\gamma_{k-1}\beta}{n\theta_{1,k-1}}\\
   \le&\frac{10\eta_{k-1}\omega(1+\omega)\gamma_{k-1}\beta}{\theta_{1,k-1}n}.\label{eq-ubgcth05}
\end{align}
Combining \eqref{eq-ubgcth02},\eqref{eq-ubgcth03},\eqref{eq-ubgcth04}, and \eqref{eq-ubgcth05}, we have for $\forall\,k\ge1$ that
\begin{align}
\EE_k[\Psi^{k+1}]\le\Psi^k.\label{eq-ubgcth06}
\end{align}
By applying \eqref{eq-ubgcth06} with $k=T-1,T-2,\cdots,1$ and \eqref{eq-ubgcth02} with $k=0$, we obtain
\begin{align*}
\EE[\Psi^T]\le&\left(1-\frac{\theta_{1,0}}{3}\right)\frac{2\gamma_0(\theta_{1,0}+\theta_2)}{p\theta_{1,0}}\cW^0+\left(1-\frac{\theta_{1,0}}{3}\right)\frac{2\gamma_0}{\theta_{1,0}}\cY^0+\cZ^0\\
&+\left(1-\frac{1}{4(1+\omega)}\right)\frac{10\eta_0\omega(1+\omega)\gamma_0}{\theta_{1,0}n}\cH^0\\
\le&\frac{2}{L}\cW^0+\frac{1}{L}\cY^0+\cZ^0+\frac{3}{40L^2}\cH^0\le\left(1+\frac{1}{2}+1+\frac{3}{40}\right)\Delta\le3\Delta.
\end{align*}
Note that 
\begin{align}
    \Psi^T\ge&\lambda_{T-1}\cW^T+\frac{2\gamma_{T-1}\beta}{\theta_{1,T-1}}\cY^T    \ge\frac{2\gamma_{T-1}\beta\theta_2}{\theta_{1,T-1}p}\cW^T+\frac{2\gamma_{T-1}\beta}{\theta_{1,T-1}}\cY^T    =\frac{\eta_{T-1}}{\theta_{1,T-1}^2}(\cW^T+\cY^T),
\end{align}
% \xh{characterize the coefficients in $\Psi^T$}
thus 
\begin{align*}
&\max\{\EE[f(w^T)],\EE[f(y^T)]\}-f^\star\\
\le&\frac{\theta_{1,T-1}^2}{\eta_{T-1}}\EE[\Psi^T]\\
\le&\frac{243\Delta}{(T-1+27(1+\omega))^2}\cdot\max\left\{\frac{9(1+\omega)^2(1+27(1+\omega))L}{T+27(1+\omega)},\frac{200\omega(1+\omega)L}{3n},2L\right\}\\
=&\cO\left(\frac{(1+\omega^2/n)L\Delta}{T^2}+\frac{(1+\omega^3)L\Delta}{T^3}\right),
\end{align*}
thus it suffices to achieve an $\epsilon$-solution with $\cO\left(\left(1+\frac{\omega}{\sqrt{n}}\right)\sqrt{\frac{L\Delta}{\epsilon}}+(1+\omega)\sqrt[3]{\frac{L\Delta}{\epsilon}}\right)$ iterations.
\end{proof}

\section{Correction on CANITA \cite{li2021canita}}\label{app:discussion}
We observe that when $\omega\gg n$, the original convergence rate of CANITA \cite{li2021canita} contradicts the lower bounds presented in our Theorem \ref{thm:lower-bounds}. This discrepancy may stem from errors in the derivation of equations (35) and (36) in \cite{li2021canita}, or from the omission of certain conditions such as $\omega={\Omega}(n)$. To address this issue, we provide a corrected proof and the corresponding convergence rate.  Here we modify the choice of $\beta_0$ in (\cite{li2021canita}, Theorem 2)
% \xh{in what place of \cite{li2021canita}; locate them as accurately as possible}
to $9(1+b+\omega)^2/(2(1+b))$, while keeping all other choices consistent with the original proof, \ie, $b=\min\{\omega,\sqrt{\omega(1+\omega)^2/n}\}$, $p_t\equiv1/(1+b)$, $\alpha_t\equiv1/(1+\omega)$, $\theta_t=3(1+b)/(t+9(1+b+\omega))$, $\beta=48\omega(1+\omega)(1+b+2(1+\omega))/(n(1+b)^2)$ and 
% \begin{align}
%     \beta_t=\begin{cases}
%         \beta_0=\frac{9(1+b+\omega)^2}{2(1+b)},&\mbox{for }t=0,\\
%         \beta\equiv\frac{48\omega(1+\omega)(1+b+2(1+\omega))}{n(1+b)^2},&\mbox{for }t\ge1,
%     \end{cases}
% \end{align}

\begin{align}
    \eta_t=\begin{cases}
        \frac{1}{L(\beta_0+3/2)},&\mbox{for }t=0,\\
        \min\left\{\left(1+\frac{1}{t+9(1+b+\omega)}\right)\eta_{t-1},\frac{1}{L(\beta+3/2)}\right\},&\mbox{for }t\ge1.
    \end{cases}
\end{align}
% \xh{introduce these variables concisely (by referring them to \cite{li2021canita})} we obtain:
% We note that when $\omega\gg n$, the original rate of CANITA \cite{li2021canita} is contradicted with the lower bounds in our Theorem \ref{thm:lower-bounds}. This may results from the mistakes made in the derivation of equations (35)(36) in \cite{li2021canita} (or from the omitness of certain condition like $\omega=\cO(n)$). Here we provide a correction of the proof and its corrected convergence rate. By modifying $\beta_0$ to $9(1+b+\omega)^2/(2(1+b))$, we have
% \xh{why does the second line hold? what is the relation between $\eta_0$ and $\beta_0$.}
By definition we have
\begin{align}
\eta_T=&\min\left\{\frac{T+1+9(1+b+\omega)}{1+9(1+b+\omega)}\eta_0,\frac{1}{L(\beta+3/2)}\right\}\\
=&\min\left\{\frac{T+1+9(1+b+\omega)}{1+9(1+b+\omega)}\frac{1}{L(\beta_0+3/2)},\frac{1}{L(\beta+3/2)}\right\}\\
\ge&\min\left\{\frac{(T+9(1+b+\omega))(1+b)}{60L(1+b+\omega)^3},\frac{1}{L(\beta+3/2)}\right\}\label{eq-dis01}
\end{align}
Plugging \eqref{eq-dis01} and (\cite{li2021canita},34) into (\cite{li2021canita},33), we obtain
\begin{align}
    \EE[F^{T+1}]=&\cO\left(\frac{(1+b+\omega)^3L\Delta}{(T+9(1+b+\omega))^3}+\frac{(1+b)(\beta+3/2)L\Delta}{(T+9(1+b+\omega))^2}\right)\\
    =&\cO\left(\frac{(1+b+\omega)^3L\Delta}{T^3}+\frac{(1+b)(\beta+3/2)L\Delta}{T^2}\right).\label{eq-dis02}
\end{align}
Using $b=\min\{\omega,\sqrt{{\omega(1+\omega)^2}/{n}}\}$, 
% \xh{explain why $b$ has to be this order}, 
we have
\begin{align}
    (1+b+\omega)^3=\Theta\left((1+\omega)^3\right),
\end{align}
and 
\begin{align}
(1+b)(\beta+3/2)=&\Theta\left((1+b)+\frac{\omega(1+\omega)(1+b+\omega)}{n(1+b)}\right)\\
=&\Theta\left(1+\frac{\omega^{3/2}}{n^{1/2}}+\frac{\omega^2}{n}\right),
\end{align}
thus \eqref{eq-dis02} can be simplified as 
\begin{align}
\EE[F^{T+1}]=\cO\left(\frac{(1+\omega)^3L\Delta}{T^3}+\frac{(1+\omega^{3/2}/n^{1/2}+\omega^2/n)L\Delta}{T^2}\right).
\end{align}
Consequently, for $\epsilon<L\Delta/2$ (\ie, a precision that the initial point does not satisfy), the communication rounds to achieve precision $\epsilon$ is given by $\cO\left(\omega\frac{\sqrt[3]{L\Delta}}{\sqrt[3]{\epsilon}}+\left(1+\frac{\omega^{3/4}}{n^{1/4}}+\frac{\omega}{\sqrt{n}}\right)\frac{\sqrt{L\Delta}}{\sqrt{\epsilon}}\right)$. 
\section{Experimental details and additional results}
This section provides more details of the experiments listed in Sec.~\ref{sec-experiments}, as well as a few new experiments to validate our theories.
\subsection{Experimental details}\label{app:expri}
This section offers a comprehensive and detailed description of the experiments listed in Sec.~\ref{sec-experiments}, including problem formulation, data generation, cost calculation, and algorithm implementation.

\noindent\textbf{Least squares. }The local objective function of node $i$ is defined as $f_i(x):=\frac12\|A_ix-b_i\|^2$, where $A_i\in\RR^{M\times d}$, $b_i\in\RR^M$. We set $d=20$, $M=25$, and the number of nodes $n=400$. To generate $A_i$'s, we first randomly generate a Gaussian matrix $G\in\RR^{nM\times d}$; we then apply the SVD decomposition $G=U\Sigma V^\top$ and replace the singular values in $\Sigma$ by an arithmetic sequence starting from 1 and ending at 100 to get $\tilde{\Sigma}$ and the resulted data matrix  $\tilde{G}=U\tilde{\Sigma}V^\top$; we finally allocate the submatrix of $\tilde{G}$ composed of the $((i-1)M+1)$-th row to the $(iM)$-th row  to be $A_i$ for all $1\leq i\leq n$.

\noindent\textbf{Logistic regression. }The local objective function of node $i$ is defined as $f_i(x):=\frac{1}{M}\sum_{m=1}^M\ln(1+\exp(-b_{i,m}a_{i,m}^\top x)$, where number of nodes $n=400$, $a_{i,m}$ stands for the feature of the $m$-th datapoint in the node $i$'s dataset, and $b_{i,m}$ stands for the corresponding label. In \texttt{a9a} dataset, node $i$ owns the $(81(i-1)+1)$-th to the $(81i)$-th datapoint with feature dimension $d=123$. In \texttt{w8a} dataset, node $i$ owns the $(120(i-1)+1)$-th to the $(120i)$-th datapoint with feature dimension $d=300$.

\noindent\textbf{Constructed problem. }The local objective function of node $i$ is defined as
\begin{align}
    f_i(x):=\begin{cases}
        \frac{\mu}{2}\|x\|^2+\frac{L-\mu}{4}([x]_1^2+\sum_{1\le r\le d/2-1}([x]_{2r}-[x]_{2r+1})^2+[x]_d^2-2[x]_1),& \mbox{if } i\le n/2,\\
        \frac{\mu}{2}\|x\|^2+\frac{L-\mu}{4}(\sum_{1\le  r\le d/2}([x]_{2r-1}-[x]_{2r})^2),&\mbox{if } i>n/2,
    \end{cases}
\end{align}
where $[x]_l$ denotes the $l$-th entry of vector $x\in\RR^d$. We set $\mu=1$, $L=10^4$, $d=20$ and number of nodes $n=400$.

\noindent\textbf{Compressors. }We apply various compressors to the algorithms with communication compression through our experiments. In the constructed quadratic problem, we consider ADIANA algorithm with random-$s$ compressors (see Example \ref{eg:rands} in Appendix \ref{app:rands}) in six 
% \ky{three?}
different settings, \ie, three choices of $s$ ($s=1,2,4$), with two different (shared or independent) randomness settings. In the least squares and logistic regression problems, we apply the independent random-$\lfloor d/20\rfloor$ compressor to ADIANA, CANITA and DIANA algorithm. In particular, we use the unscaled version of the independent random-$\lfloor d/20\rfloor$ compressor for EF21 to guarantee convergence, where the values of selected entries are transmitted directly to the server without being scaled  by $d/s$ times. In Appendix \ref{app:expri-add}, we further apply independent \emph{natural compression}~\cite{Horvath2019NaturalCF} and \emph{random quantization}~\cite{Alistarh2017QSGDCS} with $s=\lceil\sqrt{d}\rceil$ in the above algorithms.

\noindent\textbf{Total communicated bits. }For non-compression algorithms and algorithms with a fixed-length compressor, such as random-$s$ and natural compression, the total communication bits can be calculated using the following formula: \emph{total communication bits} = \emph{number of iterations} $\times$ \emph{communication rounds per iteration} $\times$ \emph{communicated bits per round}. Among the algorithms we compare, ADIANA and CANITA communicate twice per iteration, while the other algorithms communicate only once. The communicated bits per round for non-compression algorithms amount to $64d$ for $d$ \texttt{float64} entries. In the case of the random-$s$ compressor, the communicated bits per communication are calculated as $64s+\lceil\log_2\binom{d}{s}\rceil$. Similarly, for natural compression, the communicated bits per round are fixed at $12d$, with 1 sign bit and 11 exponential bits allocated for each entry. In the case of adaptive-length random quantization, the communication cost is evaluated using \emph{Elias} integer encoding \cite{elias1975universal}.
This cost is then averaged among $n$ nodes, providing a more representative estimate.
% For non-compression algorithms and algorithms with fixed-length compressor like random-$s$ and natural compression, the total communication bits are calculated as \emph{total communication bits} = \emph{number of iterations} $\times$ \emph{communication rounds per-iteration} $\times$ \emph{communicated bits per-round}. Among the algorithms we compare, only ADIANA and CANITA communicate twice per-iteration, while other algorithms only communicate once. The communicated bits per-round for non-compression algorithms are $64d$ for $d$ \texttt{float64} entries, while the random-$s$ compressor communicates $64s+\lceil\log_2\binom{d}{s}\rceil$ bits per-communication, and the natural compression communicates $12d$, with 1 sign bit and 11 exponential bits for each entry. The communication cost for the adaptive-length random quantization is counted using \emph{Elias} integer  encoding~\cite{elias1975universal} and averaged among $n$ nodes.

\noindent\textbf{Algorithm implementation. }We implement ADIANA, CANITA, DIANA, EF21 algorithms following the formulations in \ref{alg:ADIANA}, \cite{li2021canita}, 
\cite{khaled2020unified}, 
and \cite{Richtrik2021EF21AN}, respectively. We implement Nesterov's accelerated algorithm with the following recursions:
\begin{align}
\begin{cases}
y^k=(1-\theta_t)x^k+\theta_tz^k,&\\
x^{k+1}=y^k-\eta_k\nabla f(y^k),&\\
z^{k+1}=x^k+\frac{1}{\theta_k}(x^{k+1}-x^k).&
\end{cases}
\end{align}
The value of $\alpha$ in ADIANA, CANITA and DIANA are all set to $1/(1+\omega)$, and we set $\gamma_k, \beta$ of ADIANA as in Theorem \ref{thm:upper-bounds}. Other parameters are all selected through running Bayesian Optimization
~\cite{bayesopt}
for the first $20\%$ iterations with 5 initial points and 20 trials. The exact value of the selected parameters are listed in Appendix \ref{app:exactparams}. Each curve (except for Nesterov's accelerated algorithm which does not involve randomness) is averaged through 20 trials, with the range of standard deviation depicted.

\noindent\textbf{Computational resource.} All experiments are run on an NVIDIA A100 server. Each trial consumes up to 10 minutes of running time.
\subsection{Additional experiments}\label{app:expri-add}
\textbf{Addtional compressors. }In addition to the experiments in Sec.~\ref{sec-experiments}, we consider applying different compressors in the algorithms with communication compression. Fig.~\ref{fig:exp-nc} and Fig.~\ref{fig:exp-rq} show results of using natural compression and random quantization, respectively. These results are consistent with the results in Sec.~\ref{sec-experiments}. 

\begin{figure}[t]
\includegraphics[width=.33\textwidth]{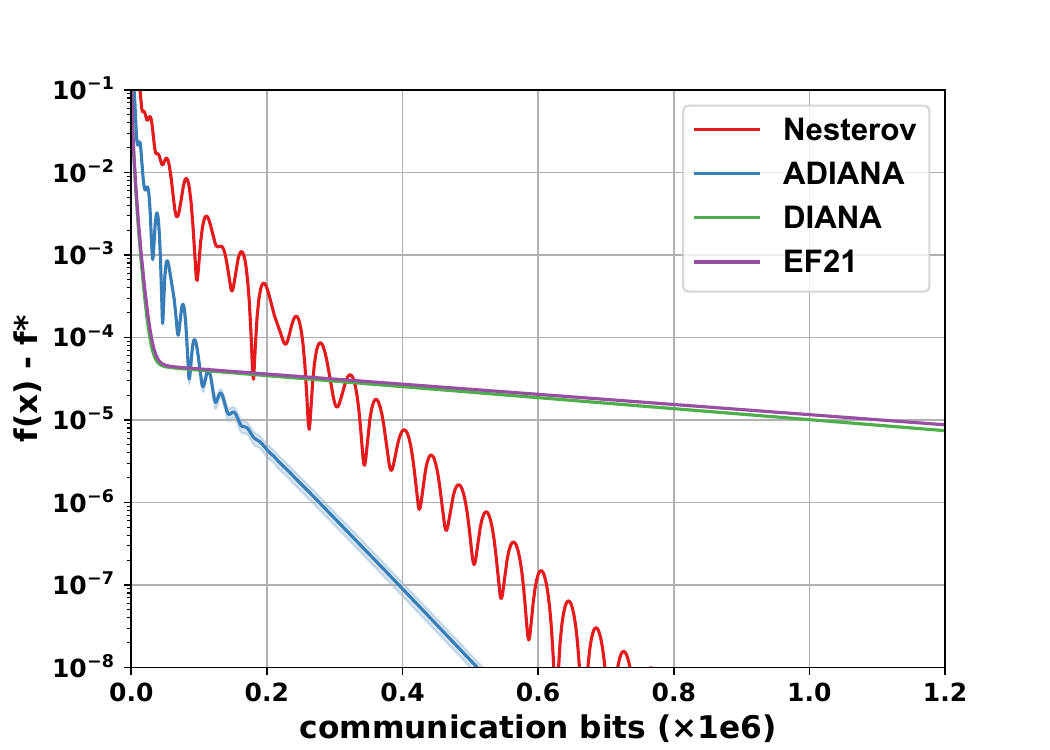}
\includegraphics[width=.33\textwidth]{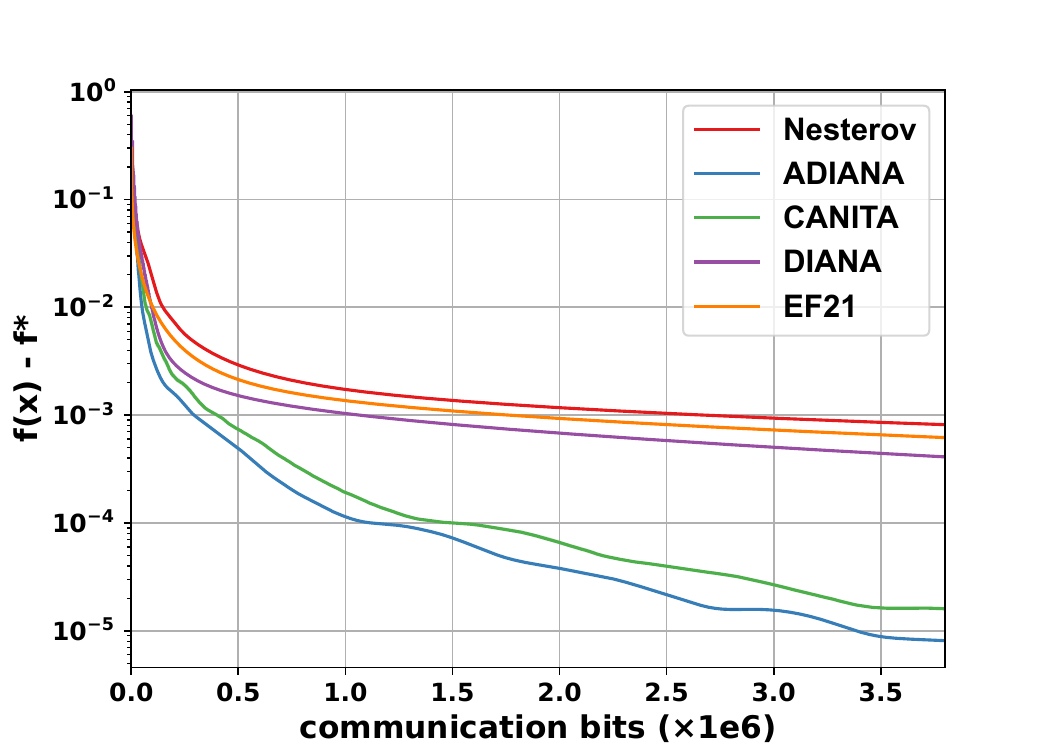}
\includegraphics[width=.33\textwidth]{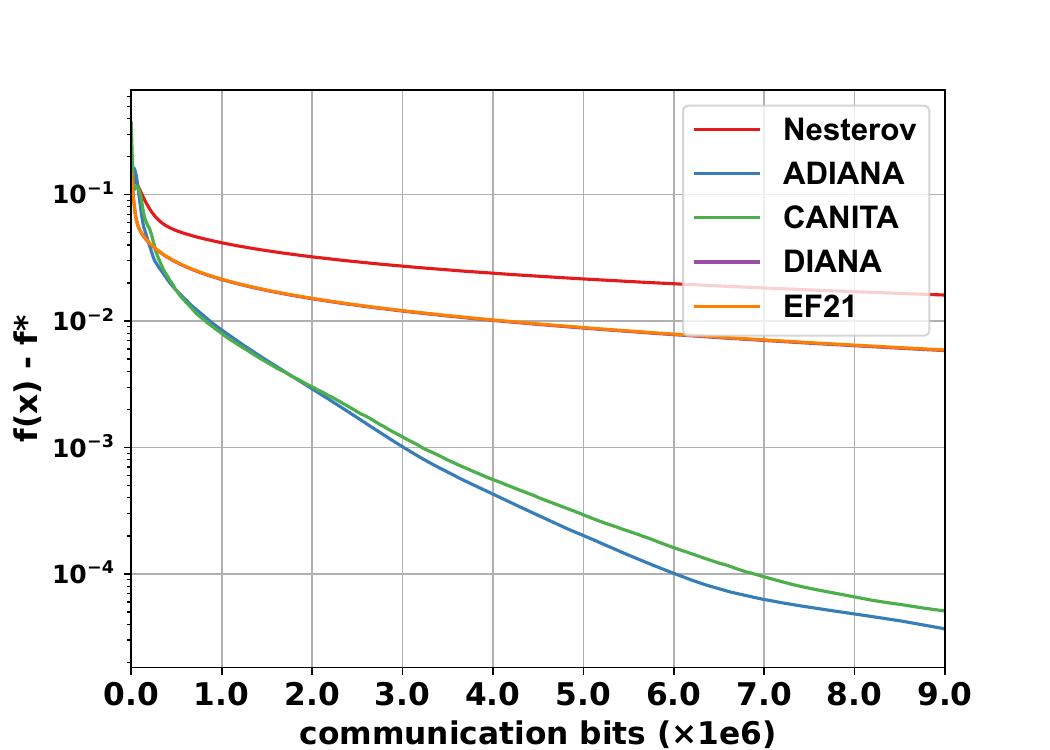}

% \vspace{-5mm}
\caption{\footnotesize Convergence results of various distributed algorithms on a synthetic least squares problem (left),  logistic regression problems with dataset \texttt{a9a} (middle) and \texttt{w8a} (right). The $y$-axis represents $f(\hat{x})-f^\star$ and the $x$-axis indicates the total communicated bits sent by per worker. All compressors used are independent natural compression.
% All curves with stochasticity are averaged over 20 trials.
\label{fig:exp-nc}
} 
% \vspace{-5mm}
\end{figure}
\begin{figure}[t]
\includegraphics[width=.33\textwidth]{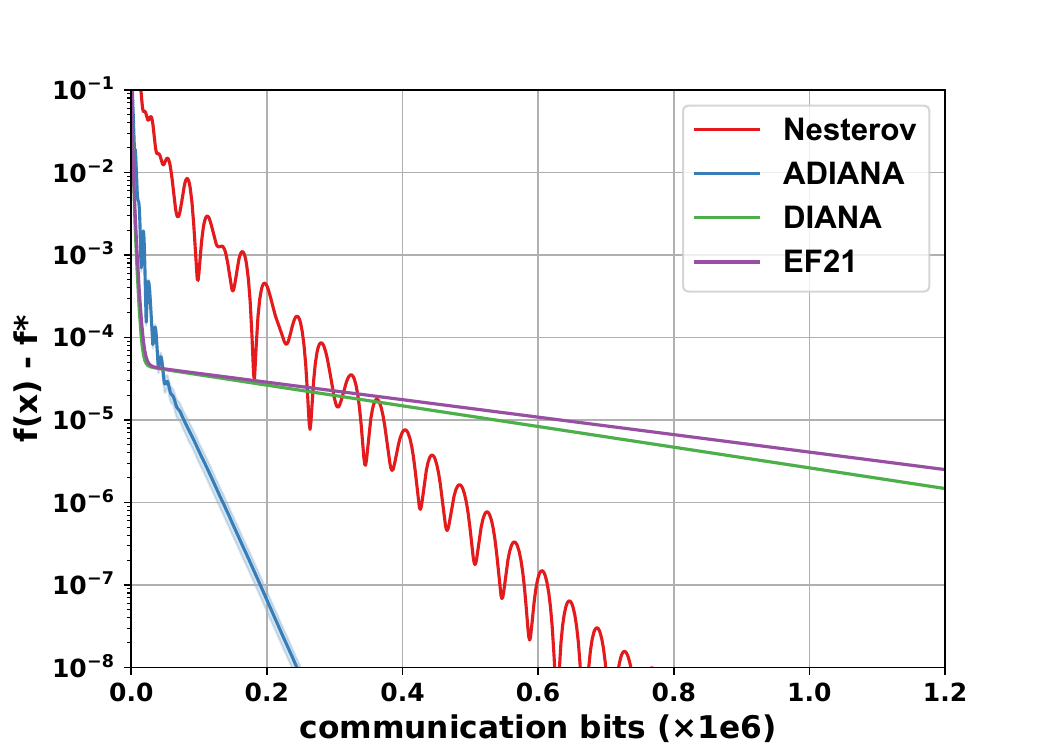}
\includegraphics[width=.33\textwidth]{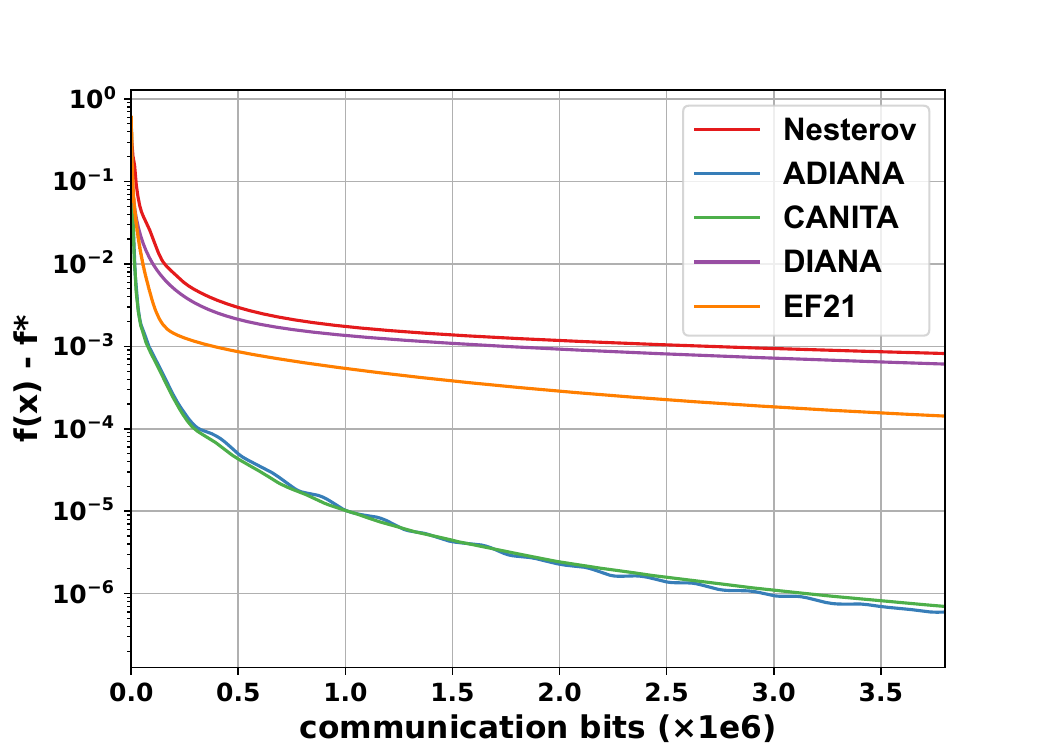}
\includegraphics[width=.33\textwidth]{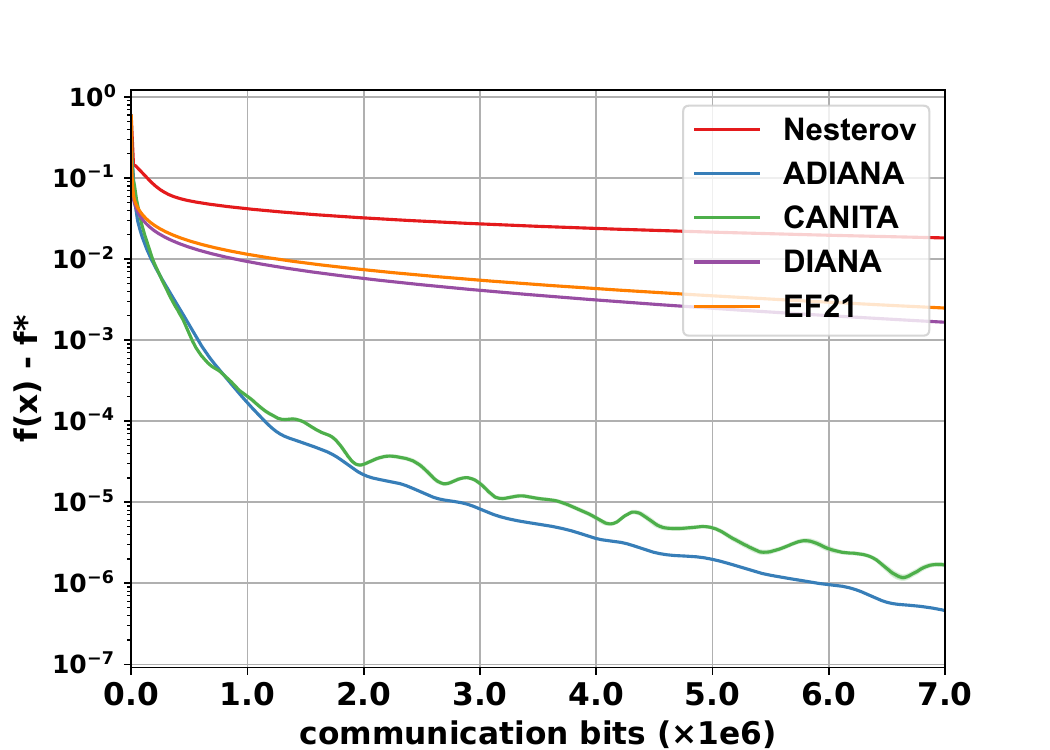}

% \vspace{-5mm}
\caption{\footnotesize Convergence results of various distributed algorithms on a synthetic least squares problem (left),  logistic regression problems with dataset \texttt{a9a} (middle) and \texttt{w8a} (right). The $y$-axis represents $f(\hat{x})-f^\star$ and the $x$-axis indicates the total communicated bits sent by per worker. All compressors used are independent random quantization.
% All curves with stochasticity are averaged over 20 trials.
\label{fig:exp-rq}
} 
% \vspace{-5mm}
\end{figure}

\textbf{CIFAR-10 dataset. }We also consider binominal logistic regression with CIFAR-10 dataset, where labels of each datum are categorized by whether they equal to 3, \ie, the corresponding figures belong to the \texttt{cat} category. The full training set with $50000$ images, are devided equally to $n=250$ nodes. The compressor choices follow the same strategies as in Appendix \ref{app:expri}, where dimension $d=3072$. Fig.~\ref{fig:exp-cifar} compares convergence results between Nesterov method and ADIANA with different compressors. It can be observed that ADIANA equipped with more aggressive compressors, \ie, those with bigger $\omega$, benefits more from the compression, which is consistent with our theoretical results.

\begin{figure}[t]
\centering
\includegraphics[width=.5\textwidth]{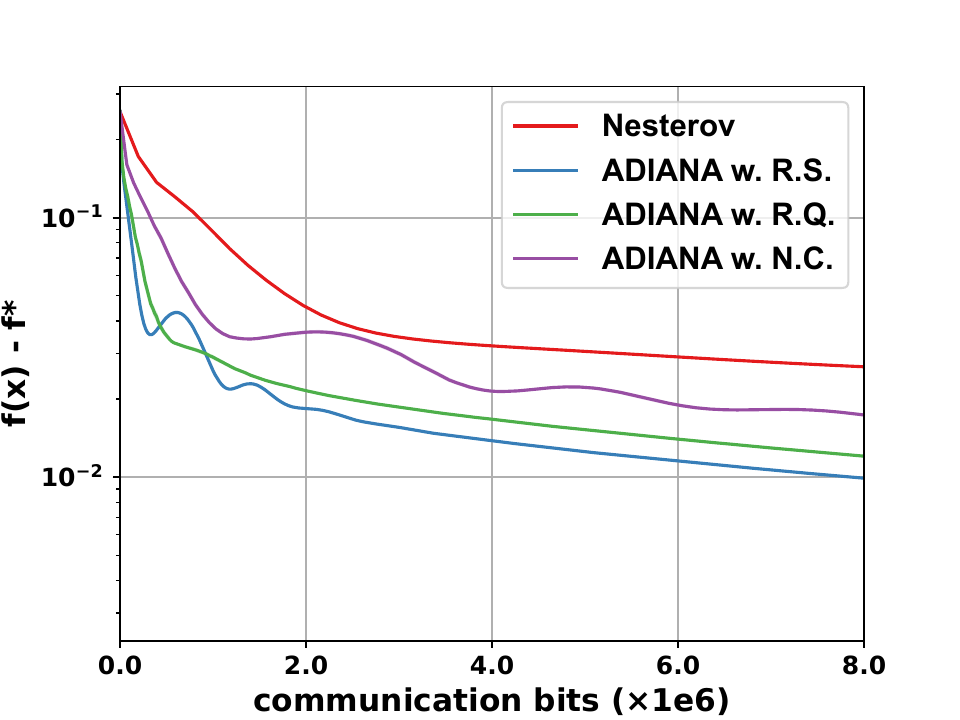}

% \vspace{-5mm}
\caption{\footnotesize Experimental results of logistic regression problem on the CIFAR-10 dataset. The objective function is constructed by relabeling the 10 classes into 2 classes, namely \texttt{cat} (corresponding to the original cat class) and \texttt{non-cat} (corresponding to the rest classes). ADIANA w. R.S. / R.Q. / N.C. represents ADIANA algorithm with random-$\lfloor d/20\rfloor$ compressor / random quantization compressor with $s=\lceil \sqrt{d}\rceil$ / natural compression compressor, where $d=3072$ is the dimension of gradient vectors as well as the number of features in CIFAR-10 dataset. The experiments are conducted under the same setting as in the "Algorithm implementation" part in Appendix \ref{app:expri}.
\label{fig:exp-cifar}
} 
% \vspace{-5mm}
\end{figure}

\subsection{Parameter values}\label{app:exactparams}
In this subsection, we list all the parameter values that are selected by applying Bayesian Optimization. Table \ref{tab:ls}, \ref{tab:a9a}, \ref{tab:w8a}, \ref{tab:lb}, \ref{tab:cifar} list the parameters chosen in the least squares problem, logistic regression using \texttt{a9a} dataset, logistic regression using \texttt{w8a} dataset, the constructed problem, and logistic regression using CIFAR-10 dataset, respectively.
\begin{table}[ht]
\centering
\caption{\small Parameters for algorithms in the least squares problem. Notation R.S. stands for independent random sparsification, N.C. stands for independent natural compression, R.Q. stands for independent random quantization.}
\label{tab:ls}
\begin{tabular}{rc}
\bottomrule
 Algorithm    &  Parameters\\\midrule
 Nesterov    &  $\eta=3.0\times10^{-2}$, $\theta=1.4\times10^{-2}$.\\
 ADIANA R.S. & $\eta=4.8\times10^{-2}$, $\theta_1=2.2\times10^{-2}$, $\theta_2=7.6\times10^{-2}$, $p=4.1\times10^{-2}$.\\
 ADIANA N.C. & $\eta=3.9\times10^{-2}$, $\theta_1=1.0\times10^{-2}$, $\theta_2=2.9\times10^{-1}$, $p=9.9\times10^{-1}$.\\
 ADIANA R.Q. & $\eta=6.5\times10^{-2}$, $\theta_1=1.4\times10^{-2}$, $\theta_2=2.7\times10^{-1}$, $p=5.5\times10^{-1}$.\\
 DIANA R.S. & $\gamma=7.9\times10^{-2}$.\\
 DIANA N.C. & $\gamma=7.4\times10^{-2}$.\\
 DIANA R.Q. & $\gamma=7.6\times10^{-2}$.\\
 EF21 R.S. & $\gamma=6.2\times10^{-2}$.\\
 EF21 N.C. & $\gamma=6.8\times10^{-2}$.\\
 EF21 R.Q. & $\gamma=7.4\times10^{-2}$.\\
\bottomrule
\end{tabular}
\end{table}

\begin{table}[ht]
    \centering
    \caption{\small Parameters for algorithms in the logistic regression problem with \texttt{a9a} dataset. Notation $k$ stands for the index of iteration. Other notations are as in Table \ref{tab:ls}.}
    \label{tab:a9a}
    \begin{tabular}{rc}
    \toprule
    Algorithm     &  Parameters \\\midrule
    Nesterov     &  $\eta=9.4\times10^{-1}$, $\theta=1.7\times10^{-1}$.\vspace{1mm}\\
    ADIANA R.S. & $\eta=2.1$, $\theta_1=\frac{1.3\times10^1}{k+5.2\times10^2}$, $\theta_2=2.1\times10^{-1}$, $p=7.7\times10^{-1}$.\vspace{1mm}\\
    ADIANA N.C. & $\eta=2.1$, $\theta_1=\frac{1.0}{k+4.3}$, $\theta_2=8.0\times10^{-3}$, $p=8.0\times10^{-1}$.\vspace{1mm}\\
    ADIANA R.Q. & $\eta=2.2$, $\theta_1=\frac{1.3}{k+1.3}$, $\theta_2=1.5\times10^{-1}$, $p=8.5\times10^{-1}$.\vspace{1mm}\\
    CANITA R.S. & $\eta=\min\{\frac{k+2.1\times10^2}{2.1\times10^2},1.4\}$, $\theta=\frac{2.0\times10^1}{k+2.3\times10^2}$, $p=7.8\times10^{-1}$.\vspace{1mm}\\
    CANITA N.C. & $\eta=1.2$, $\theta=\frac{2.1}{k+1.2\times10^1}$, $p=5.2\times10^{-1}$.\vspace{1mm}\\
    CANITA R.Q. & $\eta=2.0$, $\theta=\frac{3.0}{k+3.0}$, $p=7.2\times10^{-1}$.\vspace{1mm}\\
    DIANA N.C. & $\gamma=2.6$.\vspace{1mm}\\
    DIANA R.S. & $\gamma=9.4\times10^{-1}$.\vspace{1mm}\\
    DIANA R.Q. & $\gamma=4.7\times10^{-1}$\vspace{1mm}\\
    EF21 R.S. & $\gamma=1.3$.\vspace{1mm}\\
    EF21 N.C. & $\gamma=1.6$.\vspace{1mm}\\
    EF21 R.Q. & $\gamma=2.7$.\\
    \bottomrule
    \end{tabular}
\end{table}

\begin{table}[ht]
\centering
\caption{\small Parameters for algorithms in logistic regression with \texttt{w8a} dataset. Notations are as in Table \ref{tab:a9a}.}
\label{tab:w8a}
\begin{tabular}{rc}
\toprule
 Algorithm    &  Parameters \\\midrule
 Nesterov     &  $\eta=1.5\times10^1$, $\theta=9.4\times10^{-1}$.  \\
 ADIANA R.S. & $\eta=\min\{\frac{k+4.1\times10^2}{1.2\times10^2},15\}$, $\theta_1=\frac{8.8}{k+4.8\times10^2}$, $\theta_2=2.4\times10^{-2}$, $p=3.6\times10^{-1}$. \vspace{1mm}\\
 ADIANA N.C. & $\eta=1.5\times10^1$, $\theta_1=\frac{2.5}{k+1.1\times10^1}$, $\theta_2=6.7\times10^{-1}$, $p=8.3\times10^{-1}$.\vspace{1mm}\\
 ADIANA R.Q. & $\eta=1.5\times10^1$, $\theta_1=\frac{1.9}{k+7.4}$, $\theta_2=4.2\times10^{-1}$, $p=9.9\times10^{-1}$.\vspace{1mm}\\
 CANITA R.S. & $\eta=\min\{\frac{k+2.0\times10^2}{2.2\times10^2},7.7\}$, $\theta=\frac{1.1\times10^1}{k+2.3\times10^2}$, $p=4.3\times10^{-1}$. \vspace{1mm}\\
 CANITA N.C. & $\eta=\min\{\frac{k+1.1\times10^1}{2.7},1.1\times10^1\}$, $\theta=\frac{5.4}{k+7.4\times10^1}$, $p=4.9\times10^1$. \vspace{1mm}\\
 CANITA R.Q. & $\eta=\min\{\frac{k+1.6\times10^1}{7.3},1.5\times10^1\}$, $\theta=\frac{2.2}{k+2.2\times10^1}$, $p=4.6\times10^{-1}$.\vspace{1mm}\\
 DIANA R.S. & $\gamma=1.5\times10^1$.\vspace{1mm}\\
 DIANA N.C. & $\gamma=1.6\times10^1$.\vspace{1mm}\\
 DIANA R.Q. & $\gamma=1.5\times10^1$.\vspace{1mm}\\
 EF21 R.S. & $\gamma=2.0\times10^1$.\vspace{1mm}\\
 EF21 N.C. & $\gamma=1.5\times10^1$.\vspace{1mm}\\
 EF21 R.Q. & $\gamma=1.5\times10^1$.\\
\bottomrule
\end{tabular}
\end{table}

\begin{table}[ht]
\centering
\caption{\small Parameters for algorithms in the constructed problem. Notation i.d.rand-$s$ denotes independent random-$s$ compressor, s.d.rand-$s$ denotes random-$s$ compressor with shared randomness.}
\label{tab:lb}
\begin{tabular}{rc}
\toprule
Algorithm & Parameters\\
\midrule
Nesterov & $\eta=1.4\times10^{-1}$, $\theta=1.2\times10^{-4}$.\\
ADIANA i.d.rand-1 & $\eta=1.5\times10^{-4}$, $\theta_1=1.8\times10^{-1}$, $\theta_2=1.3\times10^{-1}$, $p=1.5\times10^{-1}$.\\
ADIANA i.d.rand-2 & $\eta=1.5\times10^{-4}$, $\theta_1=1.5\times10^{-4}$, $\theta_2=5.0\times10^{-2}$, $p=1.9\times10^{-1}$.\\ 
ADIANA i.d.rand-4 & $\eta=1.3\times10^{-4}$, $\theta_1=9.2\times10^{-2}$, $\theta_2=5.0\times10^{-2}$, $p=2.3\times10^{-1}$.\\
ADIANA s.d.rand-1 & $\eta=1.4\times10^{-6}$, $\theta_1=2.0\times10^{-2}$, $\theta_2=1.6\times10^{-1}$, $p=2.7\times10^{-2}$.\\
ADIANA s.d.rand-2 & $\eta=9.6\times10^{-6}$, $\theta_1=7.0\times10^{-2}$, $\theta_2=4.3\times10^{-1}$, $p=1.8\times10^{-1}$.\\
ADIANA s.d.rand-4 & $\eta=1.6\times10^{-5}$, $\theta_1=6.0\times10^{-2}$, $\theta_2=2.1\times10^{-1}$, $p=1.6\times10^{-1}$.\\
\bottomrule
\end{tabular}

\end{table}

\begin{table}[ht]
\centering
\caption{\footnotesize Parameters for algorithms in logistic regression with CIFAR-10 dataset. Notations are as in Table \ref{tab:w8a}.}
\label{tab:cifar}
\begin{tabular}{rc}
\toprule
 Algorithm    &  Parameters \\\midrule
 Nesterov     &  $\eta=1.1\times10^{-1}$, $\theta=1.5\times10^{-1}$.  \\
 ADIANA R.S. & $\eta=1.4\times10^{-1}$, $\theta_1=\frac{12}{k+3.3\times10^2}$, $\theta_2=9.0\times10^{-2}$, $p=4.3\times10^{-1}$. \vspace{1mm}\\
 ADIANA N.C. & $\eta=1.4\times10^{-1}$, $\theta_1=\frac{1.0\times10^{-2}}{k+7.0}$, $\theta_2=7.0\times10^{-1}$, $p=8.5\times10^{-1}$.\vspace{1mm}\\
 ADIANA R.Q. & $\eta=1.2\times10^1$, $\theta_1=\frac{8.2}{k+59}$, $\theta_2=8.0\times10^{-1}$, $p=6.0\times10^{-1}$.\\
\bottomrule
\end{tabular}
\end{table}
\clearpage

\end{document}